\documentclass[11pt]{article} 
\usepackage[margin=1.2in]{geometry}

\usepackage{mathtools}
\usepackage{amssymb} 
\usepackage{amsmath}
\usepackage{amsthm}
\usepackage{todonotes, comment}
\usepackage{breqn}
\usepackage{hyperref}
\usepackage{graphicx} 
\usepackage{tikz}
\usetikzlibrary{arrows.meta,calc,positioning}
\usepackage{booktabs}
\usepackage{subcaption}
\usepackage{enumerate}
\usepackage{xcolor}
\usepackage{hyperref}

\usepackage[numbers,sort&compress]{natbib}
\usepackage{array}

\theoremstyle{plain}
\newtheorem{theorem}{Theorem}[section]
\newtheorem{lemma}{Lemma}[section]
\newtheorem{proposition}{Proposition}[section]
\newtheorem{corollary}{Corollary}[section]

\theoremstyle{definition}
\newtheorem{remark}{Remark}[section]
\newtheorem{definition}{Definition}[section]

\newtheorem{example}{Example}[section]

\providecommand{\R}{\mathbb{R}} 
\providecommand{\1}{\mathbf{1}} 
\providecommand{\N}{\mathbb{N}} 
 
\providecommand{\cE}{\mathcal{E}}
\providecommand{\cA}{\mathcal{A}}
\providecommand{\cB}{\mathcal{B}}

\providecommand{\cC}{\mathcal{C}}

\title{Polyhedral Geometry 
of Time-to-First-Spike Neural Networks} 
\usepackage{authblk}
\author[1,4]{Manjot Singh}
\author[2,3]{Guido Mont\'ufar}
\author[1,4,5]{Gitta Kutyniok}
\affil[1]{Department of Mathematics, Ludwig-Maximilians-Universit\"at M\"unchen} 
\affil[2]{Departments of Mathematics and Statistics \& Data Science, University of California, Los Angeles} 
\affil[3]{Max Planck Institute for Mathematics in the Sciences, Leipzig} 
\affil[4]{Munich Center for Machine Learning}
\affil[5]{Department of Physics and Technology, University of Troms\o}

\date{}

\begin{document}

\maketitle

\begin{abstract} 
We study the expressivity of spiking neural networks, which provide a natural framework for asynchronous, event-driven computation complementary to conventional feedforward neural networks. 
We consider the time-to-first-spike model in a setting for which the input–output map is continuous and piecewise linear, with affine pieces governed by causal feasibility constraints that determine which presynaptic spikes occur before a neuron fires. 
We first show that each neuron's firing time admits a maxout-like representation with exponentially many, highly constrained affine pieces. 
We then formalize causal regions as polyhedral regions with fixed causal sets and derive upper and lower bounds on the maximal number of causal regions in both shallow and multilayer feedforward spiking networks. 
Our theoretical and experimental results show that spiking networks can generate richer partitions of the input space than conventional feedforward ReLU networks. 
\end{abstract}

\section{Introduction}
\label{section:introduction}

Spiking neural networks (SNNs) are increasingly studied as alternatives to standard artificial neural networks (ANNs) for two closely related reasons. 
First, they are event-driven models in which neurons communicate via discrete spikes, aligning with a large body of work in computational neuroscience on coding schemes that represent information through spike timing, firing rates, or spike trains \cite{gerstner2002spiking}. 
Second, from an engineering perspective, spiking computation is inherently asynchronous. When inputs are sparse or temporally structured, computation is triggered only by sparse events, offering the potential for low-latency and energy-efficient inference. 
These advantages motivate the deployment of SNNs in event-based sensing applications, such as event-based cameras and neuromorphic sensors \cite{wang2024adaptivecalibrationunifiedconversion}, 
in latency-critical perception tasks on mobile or high-mobility platforms \cite{wang2025eventcamerameetsmobile}, 
and in energy-constrained environments, such as satellite systems \cite{dold2025SNN_spaceapplication}. 

At the same time, there has been rapid progress on the algorithmic side of SNNs, including surrogate gradient methods \cite{eshraghian_2023_lessonsfromDL, Neftci_SurrogateGD_SNNs_2019, Guo_direct_training_review_2023} and spike-based learning algorithms \cite{bohte2002spikeprop, firstspike2021goltz, temporal_single_spike_backprop2020_comsa, Goeltz2025DelGrad}. 
Despite these advances, however, the theoretical understanding of SNNs remains significantly less developed than that of ANNs. 
For ANNs with piecewise linear activations, a mature body of work has developed a geometric theory based on the number and structure of activation regions (polyhedral regions of the input space on which the network computes an affine function), yielding quantitative insights into the roles of depth, width, and architectural bottlenecks \cite{montufar2014linearregions, raghu2017expressivepower, serra2018regions, hanin2019surprisinglyfewregion, montufar2022sharpboundsmaxout}. 
In contrast, a comparable theory for spiking models is still in its infancy.

SNNs encompass a broad range of neuron models and coding schemes, from rate-based representations to temporal codes and dynamical recurrent systems \cite{gerstner2002spiking, Gerstner_Kistler_Naud_Paninski_2014}. 
In this work, we focus on \emph{time-to-first-spike} (TTFS) coding  \cite{thorpe1996150ms, gerstner2002spiking, pulsed_maass1999, MAASS2001ontherelevanceoftime}, 
a widely used temporal scheme in which each neuron emits at most one spike and encodes its output through the timing of that spike. 
In feedforward TTFS-based SNNs, inputs are encoded as spike times, propagated through successive spiking layers, and decoded from the timing of the output. 
Rather than synchronous activation vectors, each layer processes real-valued spike times, with downstream computation triggered as spikes arrive. 
This perspective aligns naturally with the engineering motivations discussed above, since single-spike inference is inherently sparse and can support low-latency decision making. 

A key modeling ingredient in such networks is the synaptic response kernel (or postsynaptic response function). 
In the Spike Response Model \cite{Gerstner_Kistler_Naud_Paninski_2014}, a neuron integrates weighted copies of this response function and emits a spike when its membrane potential crosses a threshold. 
When the response function is piecewise linear in time, as for a ReLU-type response, the membrane potential is itself piecewise linear in time. 
When the weights are positive, the resulting output spike times have been shown to be continuous piecewise linear (CPWL) functions of the presynaptic spike times \cite{maass1997networks, singh2023expressivity, Neuman2025SNNTTFS}. 
This CPWL regime is practically relevant and facilitates direct comparison with ANNs with piecewise linear activations. 
We point out that TTFS-based SNNs do not generally induce CPWL mappings. In particular, when both positive and negative weights are allowed, the mappings realized by SNNs can, under certain conditions, be discontinuous. 
We isolate the positive-weights TTFS as a natural setting for studying the inductive biases of asynchronous spike-time computation. 
In what follows, we focus on positive-weight TTFS-based SNNs with CPWL response functions, referred to as \emph{pTTFS networks}. 
Unless stated otherwise, throughout this work we will use the terms \emph{SNN}, \emph{TTFS network}, and \emph{pTTFS network} interchangeably to refer to this setting.

Recent work in this direction has established approximation-theoretic and generalization bounds for positive-weights TTFS networks \cite{singh2023expressivity, Neuman2025SNNTTFS} and has begun to elucidate connections between SNNs and ReLU-ANNs, including regimes in which SNNs achieve comparable approximation rates. 
Despite this progress, separation results between ANNs and TTFS networks remain limited. Moreover, a systematic theory on the the number and geometry of their linear or causal regions is still underdeveloped, particularly for deeper networks beyond single neurons or highly specialized settings \cite{dold2025causalpiecesanalysingimproving, singh2023expressivity}. 
In particular, we lack sharp region-counting results that characterize how TTFS-specific causal constraints shape the geometry and complexity of the induced partition, give rise to structural biases distinct from those of ReLU-ANNs, and potentially identify regimes in which SNNs can outperform ReLU-ANNs.  

Our goal in this work is to develop a systematic theory of causal regions for TTFS networks. As a first step, we restrict attention to the positive-weight setting, leaving the discontinuous regime arising from negative weights as an important direction for future work. 
Within this setting, we study how architectural and parameter choices affect the number and geometry of causal regions, and compare the resulting bounds with corresponding region-count bounds for ANNs with piecewise linear activations.

\paragraph{Contributions}

We consider SNNs in the continuous piecewise linear setting and make the following contributions. 

\begin{enumerate}[(i)]

    \item 
     For a single neuron, we give an explicit geometric characterization of when a given causal set is realized. 
    This clarifies how SNNs partition the input space in contrast to standard ReLU networks.  
    In particular, we show that causal regions are governed by causal constraints, which arise as union of regions of an associated hyperplane arrangement. We also provide a polyhedral representation of the single neuron firing-time map, in which its affine pieces are encoded by a lifted TTFS polytope and an associated regular subdivision. 
    
    \item For shallow SNNs, we derive general upper bounds on the number of regions of the associated TTFS hyperplane arrangement, as well as asymptotic upper and lower bounds on the number of realizable causal regions. 
    In the shared-weight regime, where all hidden neurons share a common weight vector, we show that their causal sets are nested. Exploiting this structure, we obtain sharper lower bounds through an exact count on the realizable causal regions. 

    \item For deep SNNs, we derive a general lower bound by constructing a folding mechanism that produces exponentially many causal regions with depth. For networks with weights shared within each layer, we derive the corresponding upper bound. 
    We show that, once the firing-time order withing a layer is fixed, the causal sets of neurons in the subsequent layers must correspond to prefixes of this order.  
    We exploit this structure to derive our bound and further characterize which causal patterns can and cannot be realized. 
    
    \item We complement our theoretical results with experiments on CIFAR-10 and MNIST, focusing on the region complexity of randomly initialized networks. We study how the estimated number of causal regions in SNNs varies with architectural choices such as width and depth, compare SNNs with matched ReLU networks, and examine the effect of restricting SNNs to shared weight regime. Across these experiments, we find that SNNs can realize a large number of causal regions, with region complexity depending on architecture and initialization. 

\end{enumerate}

\paragraph{Outline} 
Section~\ref{section:related_work} reviews related work. 
Section~\ref{section:model} introduces the SNN model and notation. 
Section~\ref{section:single_neuron_ttfsmodel} analyzes the single-neuron case, introduces causal sets, causal regions, and causal patterns, and develops the geometric framework underlying our subsequent results. 
Section~\ref{section:boundsshallowsnn} establishes upper and lower bounds for shallow SNN, 
while 
Section~\ref{sec:deepSNN} develops corresponding bounds for deep networks. 
Section~\ref{section:experiments} presents experimental results. 
Finally, Section~\ref{section:conclusion} discusses open problems, the role of delays, and extensions beyond feedforward architectures.

\section{Related work}
\label{section:related_work}

In this section, we position our work relative to prior works on the expressivity and estimation of the number of activation regions in ReLU and Maxout networks, and expressivity of SNNs. 

\paragraph{Polyhedral geometry of ANNs} 
For ANNs with piecewise linear activations, a standard measure of expressivity is the number of linear, or activation, regions into which the network partitions its input space, that is, regions on which the network realizes an affine function of the input. This notion has been widely used to compare the expressivity of different architectures and to characterize depth-width trade-offs. 
Early works \cite{pascanu2013responseregions, montufar2014linearregions, raghu2017expressivepower, serra2018regions} established that depth can yield exponentially more linear regions than shallow networks with comparable numbers of neurons or parameters. 
This literature provides the conceptual reference for our study. We seek an analogous, architecture-aware understanding of how SNNs partition their input space. As we will see, the natural counterpart of activation regions in SNNs is given by \emph{causal regions}.

The geometry of linear regions in piecewise-linear neural networks is naturally described using polyhedral methods. 
For ReLU networks, region boundaries can be studied through hyperplane arrangements and bent hyperplanes in deeper layers. 
For higher-rank maxout networks \cite{maxoutnetwork2013goodfellow}, each unit has multiple affine pieces, leading to richer polyhedral subdivisions of the input space. 
More generally, CPWL functions have natural descriptions in terms of convex polytopes and support functions, as well as through tropical geometry \cite{MaclaganSturmfels2015Tropical, ETCJoswig2021, ZhangNaitzatLim2018TropicalGeometryDNN} and spline-based methods \cite{spline_theory_DL_balestriero_2018}. 
In particular, functions realized by ReLU and maxout networks can be represented as tropical rational functions \cite{ZhangNaitzatLim2018TropicalGeometryDNN, CharisopoulosMaragos2018, blm24}. 
Sharp bounds on the number of linear regions of maxout networks were obtained in \cite{montufar2022sharpboundsmaxout}. 
As we will see, a TTFS neuron with ReLU response function admits a maxout-like representation, but with strong dependencies among its affine pieces. Thus, TTFS SNNs can be regarded as highly structured maxout networks with exponentially large effective rank. This structure distinguishes them sharply from generic maxout networks. 
In particular, the affine pieces of each neuron cannot vary independently, and region-counting bounds for unconstrained maxout networks need not be tight. This motivates the development of TTFS-specific geometric and combinatorial tools for counting regions.

Although our analysis focuses on upper and lower bounds for the maximum number of regions, it is useful to briefly note how region-based viewpoints have been used more broadly in the literature. 
Maximum region counts characterize the expressive capacity of a network architecture in principle, whereas understanding the complexity typically realized in practice requires studying region statistics under parameter distributions, for example at random initialization or after training. 
For ReLU and maxout networks, expected-region analyses and empirical studies indicate that typical parameter choices may realize substantially fewer regions than the maximum possible \cite{hanin2019complexitylinearregions, hanin2019surprisinglyfewregion, tseran2021maxoutexpectedcomplexity,goujon2024num_regions}. 
More broadly, region geometry has been used to characterize qualitative aspects of ReLU networks, including the geometry of decision boundaries, the evolution of complexity during training, and robustness \cite{local_complexity_regions_DNNs_patel_2024, alfarra2022dbtropical, maksym2019robustness, huchette2025deeplearningmeetspolyhedral}. 
By comparison, a theory of typical or expected region complexity for SNNs remains largely undeveloped.

\paragraph{Expressivity of SNNs} 
The theory and practice of SNNs encompass a wide range of neuron models and coding schemes, whose choice is largely guided by the intended application or theoretical question. Broadly, SNNs are studied as models of neural computation in computational neuroscience and as a basis for efficient, brain-inspired AI systems \cite{MaassBIC2023, singhsurveysnn2026}. 
Under rate-based coding, a growing literature has established universality and approximation results for SNNs \cite{firingrateExpressivity_zhang2022, nguyen2025DLIFSNN, hundrieser2025universalrepresentationpropertyspiking}. 
For TTFS-based SNNs with a linear response function, the theoretical understanding is more developed. 
Recent works \cite{singh2023expressivity, Neuman2025SNNTTFS}, building on Maass’s early universality result \cite{maass1997networks}, have established expressivity and approximation results for networks in this regime. 
In particular, \cite{singh2023expressivity} showed that SNNs can realize discontinuous PWL maps when negative weights are allowed, and further derived complexity bounds for emulating multilayer ReLU networks with SNNs. 
In a complementary direction, \cite{Neuman2025SNNTTFS} studies positive-weight TTFS SNNs and establishes approximation and generalization bounds. 
Related work \cite{STANOJEVIC2023}, motivated primarily by training, derives a neuron-to-neuron mapping from ReLU ANN parameters to SNNs parameters that preserves the realized function. 

At the same time, ANN-SNN conversion results do not by themselves explain whether SNNs possess different computational capabilities from ANNs. 
Given the distinct mechanisms underlying spike-time computation, can we demonstrate meaningful differences in the capabilities of ANNs and SNNs? 
In this direction, Maass demonstrated advantages of SNNs over conventional neural network models on specific biologically motivated computational tasks \cite{maass1997networks}. 
Maass and Schmitt \cite{VCdimension_maass1999} further showed, through VC-dimension bounds, that programmable delays can substantially increase the computational capacity of SNNs relative to static threshold circuits. 
Yet comparatively little is known about how the internal computational structure of SNNs differs geometrically from that of ANNs. 
This is the perspective we pursue here. We use region complexity to quantify the geometric richness and structural inductive bias of the function class realized by TTFS SNNs.

Region complexity analysis for SNNs is still scarce compared to the extensive literature on ReLU ANNs. 
Initial steps in this direction appear in \cite{singh2023expressivity}, where small examples already reveal region structures that differ from those of ReLU ANNs, and more recently in \cite{dold2025causalpiecesanalysingimproving}, which introduces the notion of causal sets and empirically investigates their dependence on initialization. 
A key gap is the absence of systematic region-counting bounds for shallow and deep SNNs that account for the causal constraints inherent to TTFS computation, together with a comparison to corresponding bounds for ReLU ANNs. 
Our work addresses this gap by developing a causal-region framework for CPWL TTFS SNNs, establishing upper and lower bounds for shallow and deep networks, and a systematic description of the structures implemented by TTFS neurons.

\section{TTFS SNN model}
\label{section:model}

The study of spiking neural networks (SNNs) from computational and biological perspectives has led to a the development of a wide range of neuron models, from biophysically detailed dynamics, such as Hodgkin--Huxley-type model, to simplified abstractions, such as integrate-and-fire, leaky integrate-and-fire, and Izhikevich-type models \cite{Gerstner_Kistler_Naud_Paninski_2014}, as well as further reduced variants designed primarily for computational efficiency rather than biological realism. 
For theoretical analysis, it is common to focus on simplified dynamics that retain the essential computational structure while remaining analytically tractable. 

In this paper, we consider \emph{time-to-first-spike} (TTFS) coding in the single-spike regime, where each neuron emits at most one spike and information is represented by its firing time. 
Thus, neuron outputs are real-valued spike times, providing a natural input-output representation that can be composed by stacking layers. 
For clarity, we first introduce the underlying spike-time dynamics on a general network graph and then specialize to the feedforward architecture considered throughout the paper. Our notation largely follows \cite{singh2023expressivity, Neuman2025SNNTTFS}. 

\subsection{Spike-time dynamics}  

\begin{definition}[Spiking neural network]
\label{definition:SNN_general}
    Let $G = (V,E)$ be a graph, with subsets $V_{\rm in} \subset V$ and $V_{\rm out} \subset V$ denoting the input and output neurons, respectively, and with directed edges $E \subset V \times V$ representing synapses. 
    Each non-input neuron $v \in V\setminus V_{\rm in}$ has an associated firing threshold $\theta_v > 0$. 
    Each synapse $(u,v) \in E$ has the following associated attributes: 
    \begin{itemize}
    \item a \emph{synaptic weight} $w_{(u,v)} \in \R_{\geq 0}$, 
    \item a \emph{synaptic delay} $d_{(u,v)} \geq 0$,  
    \item a \emph{response function} $\varepsilon_{(u,v)}: \R \rightarrow \R_{\geq 0}$. 
    \end{itemize}
    We write $W := (w_{(u,v)})_{(u,v)\in E}, \: D := (d_{(u,v)})_{(u,v)\in E}, \: \cE :=(\varepsilon_{(u,v)})_{(u,v)\in E}$, and $\Theta := (\theta_u)_{u\in V\setminus V_{\rm in}}$ 
    for the corresponding collections of synaptic weights, delays, response functions, and firing thresholds, respectively. 
    An SNN is then specified by the tuple $\Phi = (G,W,D,\cE, \Theta)$. 

    The synaptic delay $d_{(u,v)} \geq 0$ represents the time it takes for a spike emitted by neuron $u$ to reach neuron $v$. Synaptic delays are specific to SNNs and have no direct counterpart in conventional ANNs. 
\end{definition}

In TTFS models, postsynaptic potentials are commonly constructed from shifted response functions, including ReLU, exponential, and alpha functions \cite{timestructure1995gerstner, temporal_single_spike_backprop2020_comsa, firstspike2021goltz, dold2025causalpiecesanalysingimproving}. 
For general response functions, the resulting input-output map need not be piecewise linear. 
In this work, we consider the ReLU response function, which under positive weights and thresholds leads to a continuous piecewise-linear input-output map while retaining the causal structure of spike-time computation. 

\begin{definition}[ReLU response] 
\label{definition:response_function}
    Let $G=(V,E)$ be a network graph. 
    For each synapse $(u,v) \in E$, we define the associated response function $\varepsilon_{(u,v)}: \R \rightarrow \R_{\geq 0}$ by 
    \begin{equation}\label{eq:response_function}
        \varepsilon_{(u,v)}(t) := \sigma(t),
    \end{equation}
    where $\sigma(t) = \max\{0,t\}$ denotes the ReLU activation. 
    Under this specification of the response functions, we suppress $\cE$ from the notation and write the SNN as $\Phi = (G,W,D,\Theta)$. 
\end{definition}

Given the firing times of the presynaptic neurons, the potential of a postsynaptic neuron accumulates the contributions of spikes that have already arrived. The neuron fires when this potential first reaches its threshold. 

\begin{definition}[Potential and firing time] 
\label{definition:firing_time}
    Let $\Phi = (G,W,D, \Theta)$ be an SNN with network graph $G = (V,E)$. 
    For each neuron $v \in V \setminus V_{\rm in}$, its potential $P_v: \R \rightarrow \R$,  
    is a function that at time $t$ takes value 
    \begin{equation}\label{eqn:P_v(t)}
    P_v(t) := \sum_{(u,v) \in E}^{} w_{(u,v)}\sigma(t - t_u - d_{(u,v)}) ,  
    \end{equation}
where $t_u \in \R$ denotes the firing time of the presynaptic neuron $u$. 
The firing time of neuron $v$ is the smallest time at which its potential reaches the threshold $\theta_v$: 
$$
t_v = \min\{t\in \R: P_v(t) = \theta_v\}.
$$
\end{definition}

Here and throughout, \emph{presynaptic} and \emph{postsynaptic} refer to the source and target neurons of a synapse, respectively, and we use the terms \emph{fire} and \emph{spike} interchangeably. 
For notational simplicity, we suppress the dependence of $P_v$ on the synaptic weights $w_{(u,v)}$, delays $d_{(u,v)}$, and presynaptic firing times $t_u$ whenever these are clear from context.

\begin{remark}[Well-posedness] 
Note that, for positive synaptic weights $w_{(u,v)}>0$, the potential $P_v(t)$ is continuous nondecreasing in $t$ and becomes strictly increasing once the first presynaptic spike reaches neuron $v$. 
For any fixed presynaptic firing times, $P_v(t)\to 0$ as $t\to-\infty$ and $P_v(t)\to+\infty$ as $t\to\infty$, provided that $v$ has at least one incoming synapse. 
Since $\theta_v>0$, 
every non-input neuron has a unique and finite firing time.  
\end{remark} 

The spike-time dynamics therefore define an input-output map. 

\begin{definition}[SNN realization]  
\label{definition:realization_SNN}
    Let $\Phi = (G,W,D,\Theta)$ be an SNN, and let $d= |V_{\rm in}|$ and $n = |V_{\rm out}|$. 
    The \emph{realization} of $\Phi$ is the function $R(\Phi) \colon \R^{d} \rightarrow \R^{n}$ that maps the firing times $(t_{v})_{v\in V_{\rm in}}$ of the input neurons to the firing times $(t_{v})_{v\in V_{\rm out}}$ of the output neurons. 
\end{definition}

\subsection{Feedforward TTFS networks} 

We are interested in feedforward SNNs, where neurons are organized into layers and synapses connect consecutive layers. 

\begin{definition}\label{def:ff_snn}
    Let $L, N_0, \dots, N_L \in \N$. 
    A \emph{feedforward SNN} of depth $L$ and layer widths $(N_\ell)_{\ell=0}^L$ is specified by a tuple 
    \begin{equation*}
        \Phi = \bigl( (W^\ell, D^\ell,\Theta^\ell)_{\ell=1}^L \bigr), 
    \end{equation*}
    where, for each $\ell=1,\dots,L$, 
    $W^\ell=(w^\ell_{ij}) \in \R_{\geq 0}^{N_\ell\times N_{\ell-1}}$ is the matrix of weights and $D^\ell=(d^\ell_{ij}) \in \R_{\geq 0}^{N_\ell\times N_{\ell-1}}$ the matrix of delays, for synapses from layer $\ell-1$ to layer $\ell$, 
    and $\Theta^\ell=(\theta^\ell_i) \in \R_{>0}^{N_\ell}$ is the vector of firing thresholds for neurons in layer $\ell$. 
    We write $W=(W^\ell)_{\ell=1}^L$, $D=(D^\ell)_{\ell=1}^L$, and $\Theta=(\Theta^\ell)_{\ell=1}^L$ for the collections of weights, delays, and thresholds of the entire network. 
    The total number of neurons is $N(\Phi) \coloneqq \sum_{\ell=0}^L N_\ell$. 
\end{definition}

Given a vector of input spike times $\vec t \in \R^{N_0}$, 
the membrane potentials and firing times are defined recursively, layer by layer, according to 
Definitions~\ref{definition:firing_time}.  
For each $\ell=1,\ldots, L$, this defines a layer map $F^\ell\colon \R^{N_{\ell-1}}\rightarrow \R^{N_\ell}$, 
and the realization of the network is the composition 
$$
F^{L}\circ F^{L-1}\circ \cdots \circ F^1. 
$$

\begin{remark}[Model considered in this work] 
In general TTFS-based SNNs, the postsynaptic potential of every neuron is given by a sum of shifted nonlinear response functions (e.g., ReLU, exponential or alpha-functions) \cite{timestructure1995gerstner, temporal_single_spike_backprop2020_comsa, firstspike2021goltz, dold2025causalpiecesanalysingimproving}, 
and the firing time is defined implicitly as the solution to a nonlinear equation which equates the potential to a threshold. 
For general response functions, the resulting input-output map need not be piecewise linear. 

Unless stated otherwise, throughout the paper we consider feedforward TTFS SNNs in the single-spike regime, with positive synaptic weights, nonnegative synaptic delays, positive firing thresholds, and ReLU response functions. 
In this setting, the realization map is continuous piecewise linear. 
\end{remark}

\begin{figure}[t]
    \centering
    \includegraphics[width=0.95\textwidth]{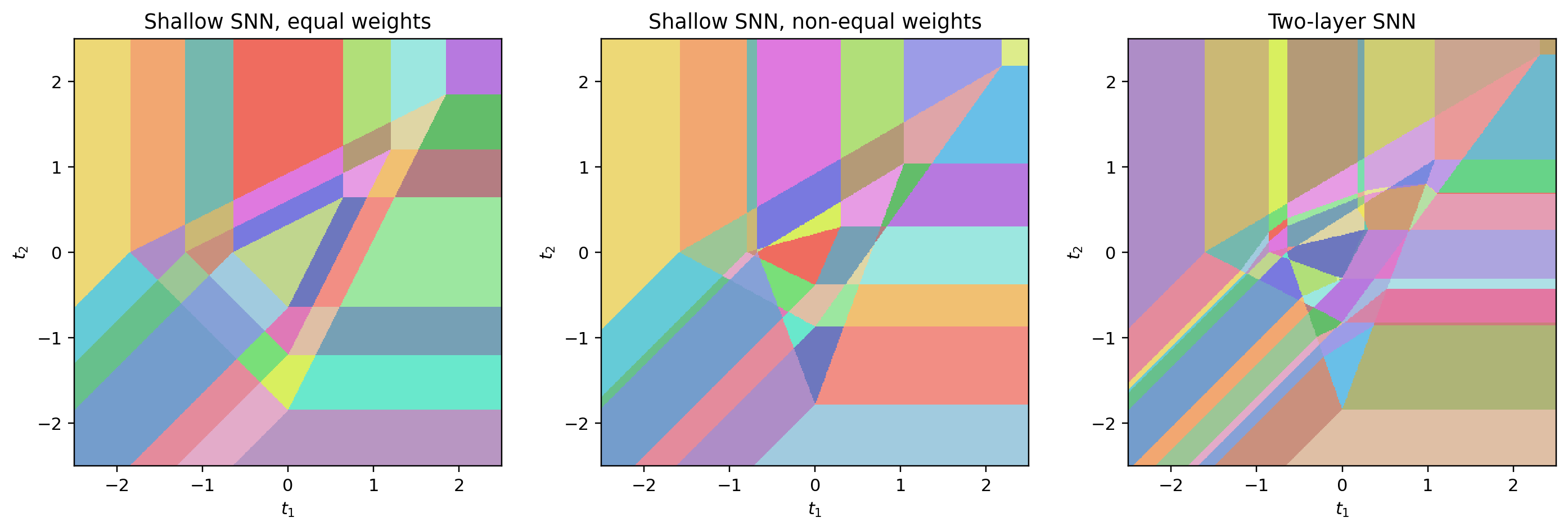}
    \caption{Causal region partitions of TTFS SNNs on the slice $t_3=0$ for input dimension $d=3$. (Left) Shallow SNN with three neurons, each with the identical weight vector. (Middle) Shallow SNN with three neurons having distinct weight vectors. (Right) Two-layer SNN with three neurons per layer and distinct positive weights.}
    \label{fig:snn_regions_d3L2L3}
\end{figure}

\subsection{Causal regions versus activation regions} 

Although the TTFS SNNs considered here and standard ReLU ANNs both realize continuous piecewise-linear maps, the mechanisms generating their linear regions are fundamentally different. In ReLU networks, the affine pieces are determined by activation patterns, which record the signs of the pre-activations. 
In TTFS SNNs, by contrast, the affine pieces are governed by causal set tuples. 

As we show below, the firing time of a TTFS neuron can be expressed as the minimum of finitely many affine functions indexed by \emph{causal sets}. 
For a given vector of input spike times, the causal set of a neuron records which presynaptic spikes arrive before the neuron fires. 
Conditional on a fixed causal set, the output firing time is an affine function of the corresponding input spike times. 
The associated \emph{causal region} is characterized by inequalities ensuring that precisely those spikes arrive before firing. 

This causal structure leads to an important geometric distinction from ReLU networks. 
A causal region need not coincide with a single cell of the hyperplane arrangement induced by the relevant affine comparisons; rather, it can be a union of multiple cells corresponding to the same causal set. 
Moreover, different neurons in a given layer can have different causal sets. 
Figure~\ref{fig:snn_regions_d3L2L3} illustrates the resulting causal-region geometry for several shallow and deep TTFS networks.

\section{Polyhedral geometry of a TTFS neuron} 
\label{section:single_neuron_ttfsmodel}

We begin by analyzing the input-output map of a single spiking neuron in the setting of positive weights and no synaptic delays. 
In this setting, the input-output map is continuous piecewise affine, with affine pieces indexed by causal sets that describe which presynaptic spikes arrive before the neuron fires. 
The resulting geometric and combinatorial structure provides the foundation for our subsequent analysis of shallow and deep SNNs. 

\subsection{Causal sets and causal regions} 
\label{subsec:single_neuron_causalsetsregions}

To keep the notation light, we work directly with the input spike times, synaptic weights, and firing threshold of the neuron under consideration, without explicit reference to the full network $\Phi$. 
Throughout this section, we consider an input layer of $d$ presynaptic neurons $u_1, \dots, u_d$ with spike-time vector $\vec t \coloneq (t_1, \dots, t_d) \in \R^d$, synaptic weights $w_1, \dots, w_d > 0$, synaptic delays set to zero, 
and a single postsynaptic neuron $v$ with firing threshold $\theta_v > 0$. 
For $d \in \N$, we write $[d] \coloneqq \{1,\dots, d\}$.

Following \eqref{eqn:P_v(t)}, the membrane potential of the postsynaptic neuron $v$ at time $t$ is given by 
\begin{equation}
\label{eq:membrane_potential}
    P_v(t) = \sum_{i=1}^d w_i \sigma(t - t_i) . 
\end{equation} 
The firing time $t_v$ is defined implicitly by the threshold-crossing condition 
\begin{equation}
\label{eq:tv_definition}
    t_v(\vec t) \coloneq \min \bigl\{ t \in \R \colon P_v(t) = \theta_v \bigr\}.
\end{equation}
For any fixed $\vec t$ and positive weights $w_1,\ldots, w_d>0$, the potential 
$P_v(t)$ is a continuous and nondecreasing in $t$, becomes strictly increasing after the first presynaptic spike, and satisfies $P_v(t)\to +\infty$ as $t\to+\infty$. 
Since $\theta_v>0$, the minimum in the definition of the firing time $t_v$ is attained uniquely. 
The \emph{causal set} at $\vec t$ is the subset of presynaptic neurons that fire strictly before the postsynaptic neuron, 
$S = \{i\in [d]\colon t_i < t_v(\vec t)\}$. Since $\theta_v>0$, the causal set is nonempty. See Figure~\ref{fig:ttfs_causal_pattern_clean} for an illustration.

\begin{figure}[t]
\centering
\resizebox{6in}{!}{%
\begin{tikzpicture}[font=\small, line width=0.35pt]

\definecolor{cBlue}{RGB}{38,107,176}
\definecolor{cRed}{RGB}{210,74,62}
\definecolor{cGreen}{RGB}{40,140,95}
\definecolor{cPurple}{RGB}{132,76,180}
\definecolor{cTeal}{RGB}{30,150,165}
\definecolor{cGray}{RGB}{125,125,125}
\definecolor{NetBlue}{RGB}{120,170,230}
\definecolor{NetBlueDark}{RGB}{60,110,180}

\node[font=\scriptsize, circle, fill=white, draw, inner sep=0, minimum size = .5cm] (U1) at (0.80,5) {$u_1$};

\node[font=\scriptsize, circle, fill=white, draw, inner sep=0, minimum size = .5cm] (U2) at (0.80,4) {$u_2$};

\node[font=\scriptsize, circle, fill=white, draw, inner sep=0, minimum size = .5cm] (U3) at (0.80,3) {$u_3$};

\node[font=\scriptsize, circle, fill=white, draw, inner sep=0, minimum size = .5cm] (U4) at (0.80,2) {$u_4$};

\node[font=\scriptsize, circle, fill=white, draw, inner sep=0, minimum size = .5cm] (U5) at (0.80,1) {$u_5$};

\node[font=\scriptsize, circle, fill=white, draw, inner sep=0, minimum size = .5cm] (V1) at (2.95,5) {$v_1$};

\node[font=\scriptsize, circle, fill=white, draw, inner sep=0, minimum size = .5cm] (V2) at (2.95,3.66) {$v_2$};

\node[font=\scriptsize, circle, fill=white, draw, inner sep=0, minimum size = .5cm] (V3) at (2.95,2.33) {$v_3$};

\node[font=\scriptsize, circle, fill=white, draw, inner sep=0, minimum size = .5cm] (V4) at (2.95,1) {$v_4$};

\node[font=\scriptsize, circle, fill=white, draw, inner sep=0, minimum size = .5cm] (O) at (5.05,3) {$o$};

\foreach \i in {1,2,3,4,5}{
\foreach \j in {1,2,3,4}{
  \draw[cGray!50, line width=0.25pt] (U\i)--(V\j);
}
}

\foreach \j in {1,2,3,4}{
  \draw[cGray!50, line width=0.25pt] (V\j)--(O);
}

\draw[cBlue, line width=0.95pt]  (U1)--(V1);

\draw[cRed, line width=0.95pt]   (U1)--(V2);

\draw[cRed, line width=0.95pt]   (U2)--(V2);

\draw[cGreen, line width=0.95pt] (U1)--(V3);

\draw[cGreen, line width=0.95pt] (U2)--(V3);
\draw[cGreen, line width=0.95pt] (U3)--(V3);
\draw[cGreen, line width=0.95pt] (U4)--(V3);

\draw[cPurple, line width=1.05pt] (V1)--(O);

\draw[cPurple, line width=1.05pt] (V2)--(O);
\draw[cPurple, line width=1.05pt] (V3)--(O);

\node[font=\scriptsize, anchor=west, cGray] at (3.28,1.25) {no spike};

\draw[->] (6.55,0.85) -- (11.125,0.85) node[anchor=west] {$t$}; 

\node[font=\scriptsize, anchor=east] at (6.47,5.40) {$u_1$}; \draw (6.55,5.40)--(11,5.40);
\node[font=\scriptsize, anchor=east] at (6.47,4.95) {$u_2$}; \draw (6.55,4.95)--(11,4.95);
\node[font=\scriptsize, anchor=east] at (6.47,4.50) {$u_3$}; \draw (6.55,4.50)--(11,4.50);
\node[font=\scriptsize, anchor=east] at (6.47,4.05) {$u_4$}; \draw (6.55,4.05)--(11,4.05);
\node[font=\scriptsize, anchor=east] at (6.47,3.60) {$u_5$}; \draw (6.55,3.60)--(11,3.60);

\node[font=\scriptsize, anchor=east] at (6.47,2.90) {$v_1$}; \draw (6.55,2.90)--(11,2.90);
\node[font=\scriptsize, anchor=east] at (6.47,2.45) {$v_2$}; \draw (6.55,2.45)--(11,2.45);
\node[font=\scriptsize, anchor=east] at (6.47,2.00) {$v_3$}; \draw (6.55,2.00)--(11,2.00);
\node[font=\scriptsize, anchor=east] at (6.47,1.55) {$v_4$}; \draw (6.55,1.55)--(11,1.55);

\node[font=\scriptsize, anchor=east] at (6.47,0.95) {$o$}; \draw (6.55,0.95)--(11,0.95);

\fill (6.8,5.40) circle (1.15pt); \node[font=\scriptsize, anchor=north] at (6.8,5.35) {$t_1$};
\fill (7.4,4.95) circle (1.15pt); \node[font=\scriptsize, anchor=north] at (7.4,4.9) {$t_2$};
\fill (8,4.50) circle (1.15pt); \node[font=\scriptsize, anchor=north] at (8,4.45) {$t_3$};
\fill (8.2,4.05) circle (1.15pt); \node[font=\scriptsize, anchor=north] at (8.2,4) {$t_4$};
\fill (10,3.60) circle (1.15pt); \node[font=\scriptsize, anchor=north] at (10,3.55) {$t_5$};

\fill[cBlue]  (7.1,2.90) circle (1.25pt); \node[font=\scriptsize, anchor=north, cBlue!70!black] at (7.1,2.85) {$t_{v_1}$};
\fill[cRed]   (7.7,2.45) circle (1.25pt); \node[font=\scriptsize, anchor=north, cRed!70!black]  at (7.7,2.4) {$t_{v_2}$};
\fill[cGreen] (8.7,2.00) circle (1.25pt); \node[font=\scriptsize, anchor=north, cGreen!70!black] at (8.7,1.95) {$t_{v_3}$};

\draw[cBlue!45, dashed]   (7.1,0.88)--(7.1,3.08);
\draw[cRed!45, dashed]    (7.7,0.88)--(7.7,2.63);
\draw[cGreen!45, dashed]  (8.7,0.88)--(8.7,2.18);
\draw[cPurple!45, dashed] (9.50,0.88)--(9.50,5.55);

\fill[cPurple] (9.50,0.95) circle (1.30pt); \node[font=\scriptsize, anchor=north, cPurple!70!black] at (9.50,0.9) {$t_o$};

\node[font=\scriptsize, anchor=west, cGray] at (10,1.4) {late spike};

\draw[->] (12.15,1.10) -- (16.45,1.10) node[anchor=west] {$t$};
\draw[->] (12.15,1.10) -- (12.15,5.80) node[anchor=south] {$P_o(t)$};

\draw[dashed, 
cPurple!70!black
] (12.15,4.55) -- (16.35,4.55);
\node[font=\scriptsize, anchor=west, 
] at (16.40,4.55) {$\theta_o$};

\draw[cGray, line width=0.60pt, ->] (12.80,1.18) -- (12.80,1.5);
\draw[cGray, line width=0.70pt, ->] (13.45,1.18) -- (13.45,1.6);
\draw[cGray, line width=0.80pt, ->] (14.05,1.18) -- (14.05,1.7);
\draw[cGray,  line width=0.55pt, ->] (15.65,1.18) -- (15.65,1.5);

\node[font=\scriptsize, anchor=north] at (12.80,1) {$t_{v_1}$};
\node[font=\scriptsize, anchor=north] at (13.45,1) {$t_{v_2}$};
\node[font=\scriptsize, anchor=north] at (14.05,1) {$t_{v_3}$};
\node[font=\scriptsize, anchor=north, cGray] at (15.65,1) {$t_{v_4}$};

\draw[cRed, line width=1.05pt] (12.25,1.45) -- (12.80,1.55);
\draw[cRed, line width=1.05pt] (12.80,1.55) -- (13.45,2.45);
\draw[cRed, line width=1.05pt] (13.45,2.45) -- (14.05,3.70);
\draw[cRed, line width=1.05pt] (14.05,3.70) -- (14.70,4.78);

\draw[
cPurple, line width=0.80pt] (14.56,1.10) -- (14.56,4.78);
\fill[
cPurple] (14.56,4.55) circle (1.2pt);
\node[font=\scriptsize, anchor=south, cPurple 
] at (14.62,4.84) {$
t_o(\vec t)$};

\draw[cGray, dashed] (15.65,1.10) -- (15.65,2.95);
\node[font=\scriptsize, anchor=west, cGray] at (15.78,2.65) {late spike};
\node[font=\scriptsize, anchor=west, cGray] at (15.78,2.37) {excluded};

\end{tikzpicture}
}
\caption{
Schematic for the causal pattern in SNNs. 
(a) The firing time of each noninput neuron depends on presynaptic spikes that arrive before it fires (colored edges) and they form its causal set. Neuron $v_4$ spikes after the output neuron has already fired and therefore does not contribute to the output, illustrating sparse computation. 
(b) The one-shot spike raster emphasizes asynchronous signal propagation, hidden-layer neurons fire after input spikes, and some neurons that fire too late may become irrelevant for the final output. 
(c) For a single neuron, the membrane potential is a piecewise linear function; the threshold crossing time $\tau$ depends only on the spikes arriving before $\tau$.}
\label{fig:ttfs_causal_pattern_clean}
\end{figure}

We next study the firing time $t_v$ as a function of the input spike times $\vec t$. 
Observe that the potential 
$P_v(t; \vec t) = \sum_{i=1}^d w_i \sigma(t-t_i)$ 
is a continuous piecewise linear function of $(t, \vec t)\in\mathbb{R}^{d+1}$, with linear regions separated by the hyperplanes $t-t_i=0$, $i=1,\ldots, d$. 
For each subset $S\subseteq[d]$, consider the polyhedral region $L_S = \{(t,\vec t)\colon t>t_i\,\forall i\in S \text{ and } t\leq t_j\,\forall j\not\in S\}$. 
On $L_S$, the potential is linear and takes the form $P_v(t,\vec t) = \sum_{i\in S} w_i (t-t_i)$. 
Geometrically, the threshold condition defining the firing time gives the level set 
\begin{equation} 
    \Sigma_{\theta_v} \coloneq \{(t, \vec t) \in \R^{d+1} \colon P_v(t; \vec t) = \theta_v\}. 
\end{equation}
For each fixed input $\vec t$, the firing time $t_v(\vec t)$ is the unique value of $t$ such that $(t,\vec t) \in \Sigma_{\theta_v}$. 
Equivalently, the level set $\Sigma_{\theta_v}$ is the graph of the firing-time map $\vec t\mapsto t_v$. 

We now explain how the preceding geometric picture induces regions in the input space. 
Restricting $\Sigma_{\theta_v}$ to $L_S$ gives $\sum_{i\in S}w_i(t-t_i) = \theta_v$. 
Solving for $t$, yields the \emph{affine firing time} 
\begin{equation}
    \label{eq:tv_S}
        t_v^S(\vec{t}) \coloneq \frac{\theta_v + \sum_{i\in S} w_it_i}{W_S}, \quad \text{where } W_S \coloneq \sum_{i\in S}w_i . 
\end{equation}
Thus, $t_v^S$ is the firing time obtained under the assumption that precisely the presynaptic neurons indexed by $S$ fire before neuron $v$. 
This candidate coincides with the actual firing time if and only if $(t_v^S(\vec t), \vec t) \in L_S$, 
which is equivalent to the \emph{causal feasibility inequalities} 
\begin{equation}
    \label{eq:ineqs}
        t_v^S(\vec{t}) > t_i, \quad \text{for all } i\in S, \quad \text{and } t_v^S(\vec{t}) \leq t_j, \quad \text{for all } j\notin S. 
\end{equation}
We therefore define the \emph{causal region} associated with $S$ by 
\begin{equation}\label{eq:region_Rs}
    R_S = \{\vec t \in \R^d \colon t_v^S(\vec{t}) > t_i, \text{ for all } i\in S, \text{ and } t_v^S(\vec{t}) \leq t_j, \text{ for all } j\notin S \}. 
\end{equation}
Thus, $R_S$ consists precisely of those inputs $\vec t$ for which $S$ is the causal set and the firing time is given by $t_v = t_v^S(\vec t)$. 
Since $t_v^S$ is affine, $R_S$ is defined by affine inequalities and is therefore a convex polyhedron. 
Geometrically, the level set $\Sigma_{\theta_v}$ is a piecewise-affine hypersurface in $\mathbb{R}^{d+1}$, 
while its projection onto the input coordinates partitions $\mathbb{R}^d$ into causal regions on which the firing-time map is affine. 
See Figure~\ref{fig:level-set-regions} for an illustration.

\begin{figure}[t]
\centering 
\begin{tikzpicture}[
scale = 1.2, 
    x={(.45cm,-.28cm)},
y={(0.45cm,0.28cm)},
z={(0cm,1.0cm)},
    line join=round,
    line cap=round
]

\definecolor{c1}{RGB}{210,228,245}
\definecolor{c12}{RGB}{220,238,215}
\definecolor{c2}{RGB}{250,224,204}

\filldraw[
    fill=c1,
    fill opacity=.85,
    draw=black!60,
    thick
]
(-2,-1,-1) --
( 1, 2, 2) --
( 1, 4, 2) --
(-2, 1,-1) --
cycle;

\draw[->,thick]
    (-3.5,0,0) -- (3,0,0)
    node[right] {$t_1$};

\draw[->,thick]
    (0,-3.5,0) -- (0,3,0)
    node[above right] {$t_2$};

\draw[->,thick]
    (0,0,-2) -- (0,0,3)
    node[above] {$t$};

\filldraw[
    fill=c12,
    fill opacity=.90,
    draw=black!60,
    thick
]
(-2,-1,-1) --
( 1, 2, 2) --
( 2, 1, 2) --
(-1,-2,-1) --
cycle;

\filldraw[
    fill=c2,
    fill opacity=.85,
    draw=black!60,
    thick
]
(-1,-2,-1) --
( 2, 1, 2) --
( 4, 1, 2) --
( 1,-2,-1) --
cycle;

\draw[very thick]
    (-2,-1,-1) -- (1,2,2);

\draw[very thick]
    (-1,-2,-1) -- (2,1,2);

\node[
    text opacity=1,
    inner sep=2pt
]
at (-.25,3,1.9)
{$t_v=t_1+1$};

\node[
    text opacity=1,
    inner sep=2pt
]
at (-2,-2,0.1)
{$t_v=\frac{t_1+t_2+1}{2}$};

\node[
    text opacity=1,
    inner sep=2pt
]
at (3,-.5,-.5)
{$t_v=t_2+1$};

\draw[thick, dotted, opacity=.2]
    (-1,0,0) -- (-1,2,0);

\draw[thick, dotted]
    (-1,0,0) -- (0,-1,0);

\draw[thick, dotted]
    (0,-1,0) -- (2,-1,0);

\draw[->,thick]
    (2,0,0) -- (3,0,0)
    node[right] {$t_1$};
\draw[-,thick]
    (-1,0,0) -- (-3.5,0,0);    

\draw[-,thick]
    (0,-1,0) -- (0,-3.5,0);        

\draw[->,thick]
    (0,0,0.5) -- (0,0,3)
    node[above] {$t$};    

\end{tikzpicture}
\qquad 
\qquad
\begin{tikzpicture}[scale=.95, font=\small]

\definecolor{c1}{RGB}{210,228,245}
\definecolor{c12}{RGB}{220,238,215}
\definecolor{c2}{RGB}{250,224,204}

\def\xmin{-3}
\def\xmax{3}
\def\ymin{-3}
\def\ymax{3}

\fill[c1]
    (-3,-2) --
    (-3, 3) --
    ( 2, 3) --
    cycle;

\fill[c12]
    (-3,-3) --
    (-2,-3) --
    ( 3, 2) --
    ( 3, 3) --
    ( 2, 3) --
    (-3,-2) --
    cycle;

\fill[c2]
    (-2,-3) --
    ( 3,-3) --
    ( 3, 2) --
    cycle;

\draw[very thick]
    (-3,-2) -- (2,3);

\draw[very thick]
    (-2,-3) -- (3,2);

\draw[->,thick]
    (-3.3,0) -- (3.4,0)
    node[right] {$t_1$};

\draw[->,thick]
    (0,-3.3) -- (0,3.4)
    node[above] {$t_2$};

\node[rotate=90] at (-2,1.8)
{
    $t_v(\vec t) = t_1+1$
};

\node[rotate =45] at (-1.75,-1.75)
{
    $t_v(\vec t) = \frac{t_1+t_2+1}{2}$
};

\node at (1.8,-2)
{
    $t_v(\vec t) = t_2+1$
};

\node[
    rotate=45,
    text opacity=.9,
    inner sep=1.5pt
]
at (1.25,1.95)
{$t_2=t_1+1$};

\node[
    rotate=45,
    text opacity=.9,
    inner sep=1.5pt
]
at (1.65,0.95)
{$t_2=t_1-1$};

\draw[thick,dotted]
    (-1,0) -- (-1,3);

\draw[thick,dotted]
    (-1,0) -- (0,-1);

\draw[thick,dotted]
    (0,-1) -- (3,-1);

\node[
    text opacity=.9,
    inner sep=1.5pt
]
at (2,-.8)
{$t_v=0$};

\end{tikzpicture}

\caption{For $d=2$, $w_1=w_2=1$, and $\theta_v=1$, the threshold level set of the potential function, $\Sigma_{\theta_v} =\{ (t, \vec t) \colon P_v(t; \vec t) = \theta_v \} \subseteq \mathbb{R}^{d+1}$, and its projection onto the input space $\mathbb{R}^{d}$, illustrating the linear regions of the firing time $t_v(\vec t)$.} 
\label{fig:level-set-regions}
\end{figure}

Having established that each input belongs to a unique causal region, we next show that the firing time can be written as the pointwise minimum of finitely many affine functions. In particular, the firing-time map is piecewise affine and concave. 
Although this result already appears in \cite{singh2023expressivity}, we state it here for completeness. 

\begin{proposition} 
\label{cor:tv_equals_tvs}
   The firing-time map satisfies 
    \begin{equation*}
        t_v (\vec t) = t_v^S(\vec t), \quad \text{for every } \vec t \in R_S, \text{ for every } S\subseteq [d], S\neq\emptyset.
    \end{equation*}    
    In particular, $t_v$ is affine on each causal region $R_S$. 
    Moreover, it admits the representation 
    \begin{equation}
    \label{eq:tv_as_min}
        t_v(\vec t) = \min_{S \subseteq [d], S\neq \emptyset } t^S_v(\vec t), \quad \text{for all } \vec t \in\mathbb{R}^d.
    \end{equation}
    Consequently, $t_v$ is a continuous piecewise-affine concave function. 
\end{proposition}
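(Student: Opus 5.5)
The plan has two parts: first identify $t_v$ with $t_v^S$ on $R_S$, then prove the min-representation through the inequality $t_v \le t_v^S$ for every nonempty $S$.

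\textbf{Step 1 (agreement on $R_S$).} Fix $\vec t\in R_S$ and set $\tau = t_v^S(\vec t)$. The causal feasibility inequalities \eqref{eq:ineqs} say exactly that $(\tau,\vec t)\in L_S$. On $L_S$ the potential equals $\sum_{i\in S} w_i(\tau - t_i)$, and this equals $\theta_v$ by the definition of $t_v^S$. Hence $P_v(\tau)=\theta_v$.

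It remains to show that $\tau$ is the \emph{smallest} such time. Since $S\neq\emptyset$ and $\tau>t_i$ for some $i\in S$, the potential is strictly increasing on an interval to the left of $\tau$. Combined with monotonicity, this gives $P_v(t)<\theta_v$ for all $t<\tau$. Therefore $t_v(\vec t)=\tau$. Affineness on $R_S$ then follows immediately because $t_v^S$ is affine.

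\textbf{Step 2 (lower envelope).} The key inequality is $t_v(\vec t)\le t_v^S(\vec t)$ for every nonempty $S$ and every $\vec t$. Set $\tau=t_v^S(\vec t)$. For every $t$,
\[
P_v(t)=\sum_{i=1}^d w_i\sigma(t-t_i)\ \ge\ \sum_{i\in S} w_i\sigma(t-t_i)\ \ge\ \sum_{i\in S} w_i(t-t_i),
\]
using $w_i>0$ and $\sigma(x)\ge x$. At $t=\tau$ the right-hand side equals $\theta_v$, so $P_v(\tau)\ge\theta_v$. Continuity of $P_v$, the limit $P_v\to 0<\theta_v$ as $t\to-\infty$, and monotonicity then give $t_v(\vec t)\le\tau$.

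For the reverse inequality, note that every $\vec t$ lies in the region $R_{S^*}$ of its own causal set $S^*$, which is nonempty. This uses the fact stated before the proposition that the regions $R_S$ partition $\mathbb{R}^d$; it can be rechecked directly by taking $S^*=\{i: t_i<t_v(\vec t)\}$ and applying the computation of Step 1 in reverse. Step 1 then gives equality $t_v(\vec t)=t_v^{S^*}(\vec t)$, and together with the inequality above this yields \eqref{eq:tv_as_min}.

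\textbf{Step 3 (consequences).} The function $t_v$ is a minimum of finitely many affine functions, so it is concave, continuous, and piecewise affine.

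\textbf{Main obstacle.} The only delicate point is the minimality argument in Step 1: one must check that the potential cannot reach $\theta_v$ earlier than $\tau$. The cleanest route is the strict increase of $P_v$ after the first presynaptic spike. Step 2's inequality also provides an alternative that avoids this altogether: $t_v\le t_v^S$ holds everywhere, and equality at the causal set is immediate from $P_v(t_v)=\theta_v$ evaluated in $L_{S^*}$.
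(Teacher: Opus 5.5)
Your proof is correct and follows essentially the paper's route: every candidate $t_v^S$ bounds $t_v$ from above via the comparison $\sum_{i\in S}w_i(t-t_i)\le P_v(t)$, with equality at the actual causal set $\{i: t_i<t_v(\vec t)\}$. The paper evaluates this comparison at $t_v(\vec t)$ and uses that the affine surrogate $t\mapsto\sum_{i\in S}w_i(t-t_i)$ is strictly increasing, whereas you evaluate it at $t_v^S(\vec t)$ and use the first-hitting-time definition; you also prove the agreement $t_v=t_v^S$ on $R_S$ explicitly, which the paper takes as given ``by construction'' (one caveat: your closing ``alternative'' only yields the min-representation, and for $\vec t\in R_S$ you still need either your Step~1 minimality argument or the paper's surrogate comparison applied at the root $t_v^S(\vec t)$ to exclude $t_v(\vec t)<t_v^S(\vec t)$).
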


\begin{proof}
Fix $\vec t\in\R^d$ and let $\tau:=t_v(\vec t)$ be the corresponding firing time. 
If $S$ is the causal set of $\vec t$, then by construction $\tau=t_v^S(\vec t)$, so $t_v$ is affine on each causal region $R_S$. 

It remains to prove the minimum representation. For any nonempty $S\subseteq[d]$, define
\[
P_v^S(t;\vec t)
\coloneqq
\sum_{i\in S}w_i(t-t_i).
\]
At the firing time $\tau$, we have
\[
P_v^S(\tau;\vec t)
=
\sum_{i\in S}w_i(\tau-t_i)
\leq
\sum_{i=1}^d w_i\sigma(\tau-t_i)
=
P_v(\tau;\vec t)
=
\theta_v.
\]
Indeed, if $t_i<\tau$, the corresponding term is the same in the linear and ReLU expressions, whereas if $t_i\geq\tau$, the linear term is nonpositive while the ReLU term is zero.

On the other hand, by the definition of $t_v^S(\vec t)$,
\[
P_v^S(t_v^S(\vec t);\vec t)=\theta_v.
\]
Since $P_v^S(t;\vec t)$ is strictly increasing in $t$, these two relations imply
\[
t_v^S(\vec t)\geq \tau.
\]
If $S$ is the causal set of $\vec t$, then
$P_v^S(\tau;\vec t)=P_v(\tau;\vec t)=\theta_v$, and hence
$t_v^S(\vec t)=\tau$.
Therefore,
\[
t_v(\vec t)
=
\min_{\emptyset\neq S\subseteq[d]} t_v^S(\vec t).
\]
Since $t_v$ is the pointwise minimum of finitely many affine functions, it is continuous, piecewise affine, and concave.
\end{proof}

Proposition~\ref{cor:tv_equals_tvs} shows that, for any given input $\vec t$, once the corresponding causal set $S$ is known, 
the firing time is given by the affine expression \eqref{eq:tv_S}. 
Alternatively, \eqref{eq:tv_as_min} provides a global representation of $t_v$, without requiring prior knowledge of its causal set. 

It is worth noting that computing $t_v$ for a given input $\vec t$ does not require evaluating $t_v^S$ over all $2^d-1$ possible nonempty subsets $S\subseteq[d]$. 
Instead, ordering the input spike times as $t_{i_1}\leq \cdots\leq t_{i_d}$, the causal set must be a prefix of this ordering. Hence it suffices to consider the $d$ candidate sets of the form $S = \{i_1,\ldots, i_k\}$, $k=1,\ldots, d$. 

\begin{remark}[Number of regions] 
Since $[d]$ has $2^d-1$ nonempty subsets, the firing-time map has at most $2^d-1$ causal regions, and therefore at most $2^d-1$ linear regions. 
In fact, this bound is attained whenever $w_1,\ldots, w_d > 0$ (Lemma 13 in \cite{singh2023expressivity}). 
In particular, fix any nonempty $S$ and choose $\vec t\in\mathbb{R}^d$ with $t_i=0$ for all $i\in S$ and $t_j=c>\frac{\theta_v}{W_S}$ for all $j\not\in S$. Then $t_v^S(\vec t) = \frac{\theta_v +\sum_{i\in S} w_it_i}{W_S} = \frac{\theta_v}{W_S}>0$, and hence $t_v^S(\vec t)>t_i$ for all $i\in S$ and $t_v^S(\vec t)\leq t_j$ for all $j\not\in S$. Thus, all causal feasibility inequalities in \eqref{eq:ineqs} are satisfied and $\vec t\in R_S$. Therefore, $R_S$ is nonempty for every nonempty $S\subseteq[d]$. 
\end{remark}

\subsection{Causal hyperplanes} 
\label{sec:causal-hyperplanes}

The previous subsection shows that the causal regions are determined by the affine inequalities in \eqref{eq:ineqs}. 
For the purpose of region counting, it is useful to isolate the hyperplanes describing the boundaries of these constraints. 
We show that these hyperplanes form an arrangement whose regions refine the causal region partition.

For any $S\subseteq[d]$ and any $i\in S$, we have 
\begin{equation}\label{eq:strict_ineq}
    t_v^S(\vec{t}) > t_i \iff \frac{\theta_v+\sum_{i\in S}w_it_i}{W_S} > t_i \iff \theta_v + \sum_{i \in S}w_it_i - W_St_i > 0. 
\end{equation} 
This defines a halfspace whose boundary is the hyperplane 
\begin{equation}\label{eq:H_i,Sstrict}
    H_{S, i}^{\rm str} \coloneqq \{\vec t \in \R^d \colon \theta_v + \sum_{i \in S}w_it_i - W_St_i = 0\}.
\end{equation} 
Similarly, for 
any index $j \notin S$, 
\begin{equation}\label{eq:halfspace_condition}
    t_v^S(\vec{t}) \leq t_j \iff \frac{\theta_v+\sum_{i\in S}w_it_i}{W_S} \leq t_j \iff \theta_v + \sum_{i\in S}w_it_i - W_St_j \leq 0.
\end{equation}
This defines a halfspace in $\mathbb{R}^d$ whose boundary is the hyperplane
\begin{equation}\label{eq:HSj}
    H_{S,j} \coloneqq \{\vec{t} \in \R^d \colon \theta_v + \sum_{i\in S}w_it_i - W_St_j = 0\}.
\end{equation}

Starting from a point $\vec t$ at which $S$ is feasible, crossing any of these hyperplanes causes $S$ to become infeasible. 
Moreover, for $\lvert S \rvert \geq 2$, we have $H_{S,i}^{\rm str} = H_{S\setminus \{i\},i}$. 
For $\lvert S \rvert = 1$, the corresponding 
equality reduces to $\theta_v = 0$, which is impossible since $\theta_v >0$. 
Thus, all genuine boundaries are of the form \eqref{eq:HSj}. This motivates the following arrangement. 

\begin{definition}
The \emph{TTFS arrangement} is the hyperplane arrangement 
    \begin{equation}\label{eq:Attfs_def}
        \cA_{\rm TTFS} \coloneqq \{H_{S,j} \colon \emptyset \neq S\subseteq [d], j\notin S\} . 
    \end{equation}
We denote by $\cC(\cA_{\rm TTFS})$ the set of connected components of $\R^d \setminus \cA_{\rm TTFS}$. 
\end{definition}

Equivalently, $\cA_{\rm TTFS}$ consists of the hyperplanes
$t_v^S(\vec t)=t_j$ obtained by restricting the boundaries $t=t_j$ of the extended linear regions $L_S$ to the threshold level set and expressing them in input-space coordinates.
Within each region of $\cC(\cA_{\rm TTFS})$, none of the feasibility inequalities change sign. 
We record this observation in the following lemma. 

\begin{lemma}\label{lem:causal_feas_and_order_constant}
   Let $C \in \cC(\cA_{\rm TTFS})$. 
   For any nonempty $S \subseteq [d]$, the sign of all inequalities in \eqref{eq:ineqs} is constant over $\vec t \in C$. 
   Consequently, there exists a unique nonempty $S\subseteq[d]$ such that $C\subseteq R_S$. 
\end{lemma}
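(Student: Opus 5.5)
The plan is to read off each feasibility inequality in \eqref{eq:ineqs} as the sign of an affine function whose zero set is a hyperplane in $\cA_{\rm TTFS}$ (or is never zero), and then use that an affine function keeps its sign on a connected set missing its zero set. For fixed nonempty $S$, the inequalities in \eqref{eq:ineqs} come in two types.

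\emph{Non-strict inequalities.} For $j\notin S$, by \eqref{eq:halfspace_condition} the inequality $t_v^S(\vec t)\le t_j$ is equivalent to $g_{S,j}(\vec t)\le 0$, where
\[
g_{S,j}(\vec t)=\theta_v+\sum_{i\in S}w_it_i-W_St_j .
\]
This is an affine function with zero set $H_{S,j}\in\cA_{\rm TTFS}$.

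\emph{Strict inequalities.} For $i\in S$, by \eqref{eq:strict_ineq} the inequality $t_v^S(\vec t)>t_i$ is equivalent to $h_{S,i}(\vec t)>0$, with zero set $H^{\rm str}_{S,i}$. Two cases arise:
\begin{itemize}
\item If $|S|\ge2$, the identity $H^{\rm str}_{S,i}=H_{S\setminus\{i\},i}$ noted above shows that $h_{S,i}$ and $g_{S\setminus\{i\},i}$ are the same affine function. (Indeed, $W_S t_i - w_it_i = W_{S\setminus\{i\}}t_i$.) Hence its zero set also lies in $\cA_{\rm TTFS}$.
\item If $|S|=1$, then $h_{S,i}\equiv\theta_v>0$ is constant and never vanishes.
\end{itemize}

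\emph{Sign constancy on $C$.} Now take $C\in\cC(\cA_{\rm TTFS})$. It is connected, being a cell of an arrangement (an open convex polyhedron), and it is disjoint from every hyperplane of $\cA_{\rm TTFS}$. So each of the affine functions above is continuous and nonvanishing on $C$. By the intermediate value theorem along segments in $C$, or simply by connectedness, each has constant strict sign on $C$. Therefore every inequality in \eqref{eq:ineqs}, strict or not, has a constant truth value over $C$. Note that non-strictness plays no role, since $g_{S,j}$ never equals zero on $C$. This proves the first claim.

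\emph{Existence and uniqueness of $S$.} Pick any $\vec t_0\in C$. By the well-posedness discussion, $\vec t_0$ has a nonempty causal set $S_0$, and $\vec t_0\in R_{S_0}$. Since all defining inequalities of $R_{S_0}$ hold at $\vec t_0$ and are constant on $C$, we get $C\subseteq R_{S_0}$. Uniqueness follows because the causal set at a point is uniquely determined, as $t_v(\vec t)$ is unique. So the $R_S$ are pairwise disjoint, and a nonempty $C$ lies in at most one of them.

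The only mildly delicate step is the bookkeeping that every strict boundary is either a genuine arrangement hyperplane or vacuous, together with checking that the two affine functions agree exactly rather than just sharing a zero set. I would verify that identity explicitly.
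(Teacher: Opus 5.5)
Your proof is correct and follows the same route as the paper's: each feasibility inequality is the sign of an affine function whose zero set is either a hyperplane of $\cA_{\rm TTFS}$ or empty, so its truth value cannot change on the connected cell $C$. You are more explicit than the paper on three points it leaves implicit: the identity $H^{\rm str}_{S,i}=H_{S\setminus\{i\},i}$ at the level of the affine functions themselves, the vacuous case $|S|=1$, and the existence and uniqueness of $S$ via pairwise disjointness of the $R_S$.
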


In particular, the causal set is constant on each region of the hyperplane arrangement. 
We therefore define the 
\emph{TTFS region map} 
\begin{equation*}
        \ell \colon \cC(\cA_{\rm TTFS}) \to 2^{[d]}\setminus\{\emptyset\}, \qquad \ell(C) \coloneqq S, 
\end{equation*} 
where $S$ is the unique causal set with $C\subseteq R_S$. This map is well-defined by Lemma~\ref{lem:causal_feas_and_order_constant}.

\begin{proposition}
\label{prop:arrangement_refines_causal_regions}
For any nonempty $S\subseteq[d]$, we have
\begin{equation}
\label{eq:R_S_union}
R_S \setminus \bigcup_{H\in\cA_{\rm TTFS}} H
=
\bigcup_{\substack{C\in\cC(\cA_{\rm TTFS})\\ \ell(C)=S}} C.
\end{equation}
In particular, every region of the TTFS arrangement is contained in a unique causal region. Thus, the region decomposition induced by $\cA_{\rm TTFS}$ refines the decomposition into causal regions.
\end{proposition}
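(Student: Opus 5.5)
The plan is to prove the set equality \eqref{eq:R_S_union} by double inclusion, using Lemma~\ref{lem:causal_feas_and_order_constant} together with the fact that the causal regions partition $\R^d$. That the causal regions partition $\R^d$ follows from Proposition~\ref{cor:tv_equals_tvs} and the uniqueness of the firing time: every $\vec t$ has exactly one causal set $S$, and by construction $\vec t \in R_S$ if and only if $S$ is the causal set of $\vec t$. Hence the sets $R_S$, for nonempty $S$, are pairwise disjoint and cover $\R^d$.

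For the inclusion ``$\supseteq$'', take $C \in \cC(\cA_{\rm TTFS})$ with $\ell(C) = S$. By definition of the region map, $C \subseteq R_S$. Since $C$ is a connected component of the complement of the arrangement, it avoids every hyperplane $H \in \cA_{\rm TTFS}$. Therefore $C \subseteq R_S \setminus \bigcup_{H} H$.

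For the inclusion ``$\subseteq$'', take $\vec t \in R_S$ not lying on any hyperplane of $\cA_{\rm TTFS}$. Then $\vec t$ belongs to a unique connected component $C$ of $\R^d \setminus \bigcup_H H$. By Lemma~\ref{lem:causal_feas_and_order_constant} there is a unique $S'$ with $C \subseteq R_{S'}$, namely $S' = \ell(C)$. In particular $\vec t \in R_{S'}$. Since the causal regions are disjoint and $\vec t \in R_S$, we conclude $S' = S$, so $\vec t$ lies in a component with label $S$.

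The remaining claims follow directly. Every region $C$ is contained in the unique causal region $R_{\ell(C)}$, which is Lemma~\ref{lem:causal_feas_and_order_constant} combined with disjointness. Thus the arrangement decomposition refines the causal decomposition, up to the measure-zero union of hyperplanes.

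The main point to justify is Lemma~\ref{lem:causal_feas_and_order_constant} itself, in case it is not taken as given. I would argue it as follows. Each non-strict inequality $t_v^S \le t_j$ is the sign condition of the affine form defining $H_{S,j} \in \cA_{\rm TTFS}$. Each strict inequality $t_v^S > t_i$ is the sign condition of the form defining $H^{\rm str}_{S,i}$. For $|S| \ge 2$ this hyperplane equals $H_{S\setminus\{i\},i} \in \cA_{\rm TTFS}$. For $|S| = 1$ the inequality reads $\theta_v > 0$, which is constantly true. So every defining affine form is either constant in sign or vanishes only on a hyperplane of the arrangement. A continuous function that never vanishes on the connected set $C$ has constant sign there, which proves that all inequalities in \eqref{eq:ineqs} have constant sign on $C$. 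Hence membership in each $R_S$ is constant on $C$. Because the $R_S$ cover $\R^d$, exactly one of them contains $C$.
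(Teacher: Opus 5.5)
Your proof is correct and follows the paper's argument: the same double inclusion, with $\supseteq$ coming from the definition of $\ell$ via Lemma~\ref{lem:causal_feas_and_order_constant}, and $\subseteq$ obtained by placing $\vec t$ in its unique arrangement cell and using that each input has a unique causal set. Your added justification of the lemma is a slightly more careful version of the paper's appendix proof: you rewrite each strict constraint as the hyperplane $H_{S\setminus\{i\},i}$ and note that for $|S|=1$ it reduces to the constant condition $\theta_v>0$.
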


\begin{proof}
By Lemma~\ref{lem:causal_feas_and_order_constant}, the causal set is constant on every region $C\in\cC(\cA_{\rm TTFS})$. Hence, if $\ell(C)=S$, then $C\subseteq R_S$, which proves the inclusion from right to left in \eqref{eq:R_S_union}.

Conversely, let
\[
\vec t\in R_S\setminus\bigcup_{H\in\cA_{\rm TTFS}}H.
\]
Since $\vec t$ does not lie on any hyperplane of the arrangement, it belongs to a unique region $C\in\cC(\cA_{\rm TTFS})$. Since $\vec t\in R_S$, its causal set is $S$, and therefore $\ell(C)=S$. This proves the reverse inclusion.
\end{proof}

\begin{remark}
The arrangement $\cA_{\rm TTFS}$ typically gives a strict refinement of the causal region partition. 
Thus, a single causal regions may comprise several regions of the TTFS arrangement. 
The situation is illustrated in Figure~\ref{fig:causal_regions_d2_d3}, where 
a causal region comprises several regions of the TTFS arrangement. 
Nonetheless, the refinement provided by $\cA_{\rm TTFS}$ gives a simple way to upper bound the number of causal regions. 
\end{remark}

\begin{figure}[t]
\centering 
\begin{subfigure}[t]{0.35\linewidth}
\centering
\vspace{0pt}
\resizebox{!}{5cm}{
\begin{tikzpicture}[>=Latex, line cap=round, line join=round, font=\small]
  \def\xmin{-3}
  \def\xmax{ 3}
  \def\ymin{-3}
  \def\ymax{ 3}

  \def\deltaOne{0.8}
  \def\deltaTwo{0.6}

  \definecolor{RegionTop}{RGB}{232,244,236}   
  \definecolor{RegionMid}{RGB}{232,240,252}   
  \definecolor{RegionBot}{RGB}{252,232,232}   
  \definecolor{LineMain}{RGB}{30,30,30}
  \definecolor{LineAux}{RGB}{120,120,120}

  \begin{scope}
    \path[clip] (\xmin,\ymin) rectangle (\xmax,\ymax);

    \fill[RegionTop]
      (\xmin,\ymax) -- (\xmax,\ymax)
      -- (\xmax,{\xmax+\deltaOne})
      -- (\xmin,{\xmin+\deltaOne}) -- cycle;

    \fill[RegionMid]
      (\xmin,{\xmin+\deltaOne})
      -- (\xmax,{\xmax+\deltaOne})
      -- (\xmax,{\xmax-\deltaTwo})
      -- (\xmin,{\xmin-\deltaTwo}) -- cycle;

    \fill[RegionBot]
      (\xmin,\ymin) -- (\xmax,\ymin)
      -- (\xmax,{\xmax-\deltaTwo})
      -- (\xmin,{\xmin-\deltaTwo}) -- cycle;

    \draw[LineMain, very thick]
      (\xmin,{\xmin+\deltaOne}) -- (\xmax,{\xmax+\deltaOne});

    \draw[LineMain, very thick]
      (\xmin,{\xmin-\deltaTwo}) -- (\xmax,{\xmax-\deltaTwo});

    \draw[LineAux, thick, dashed]
      (\xmin,{\xmin}) -- (\xmax,{\xmax});

    \node[align=center] at (-1.5,1.5) {\(\,R_{\{1\}}\,\)};
    \node[align=center, fill= RegionMid, fill opacity=.85, inner sep=0pt] at (-1.6,-1.5) {\(R_{\{1,2\}}\)};
    \node[align=center] at (1.5,-1.5) {\(\,R_{\{2\}}\,\)};

    \node[anchor=west, 
    text opacity=1, inner sep=2pt, rotate=45]
      at (-2.75,{\deltaOne-2.4})
      {\scriptsize \(t_2=t_1+\theta/w_1\)};

    \node[anchor=west, 
    text opacity=1, inner sep=2pt, rotate=45]
      at (-2,{-2.3-\deltaTwo})
      {\scriptsize \(t_2=t_1-\theta/w_2\)};

    \node[
      anchor=west,
      text opacity=1,
      color=LineAux, 
      inner sep=0pt, rotate=45
    ] at (0.35,.35) {\scriptsize
      $\begin{matrix}
      \text{redundant}\\[-1pt]
      t_2=t_1
      \end{matrix}$
    };
  \end{scope}

  \draw[LineMain, thick, ->] (\xmin,0) -- (\xmax+0.25,0) node[below right] {$t_1$};
  \draw[LineMain, thick, ->] (0,\ymin) -- (0,\ymax+0.25) node[above left] {$t_2$};

\end{tikzpicture}%
}
\caption{$d=2$.}
\label{fig:d2_causal_regions}
\end{subfigure}
\begin{subfigure}[t]{0.6\linewidth}
\centering
\vspace{0pt}
\includegraphics[height=5cm, width=\linewidth]{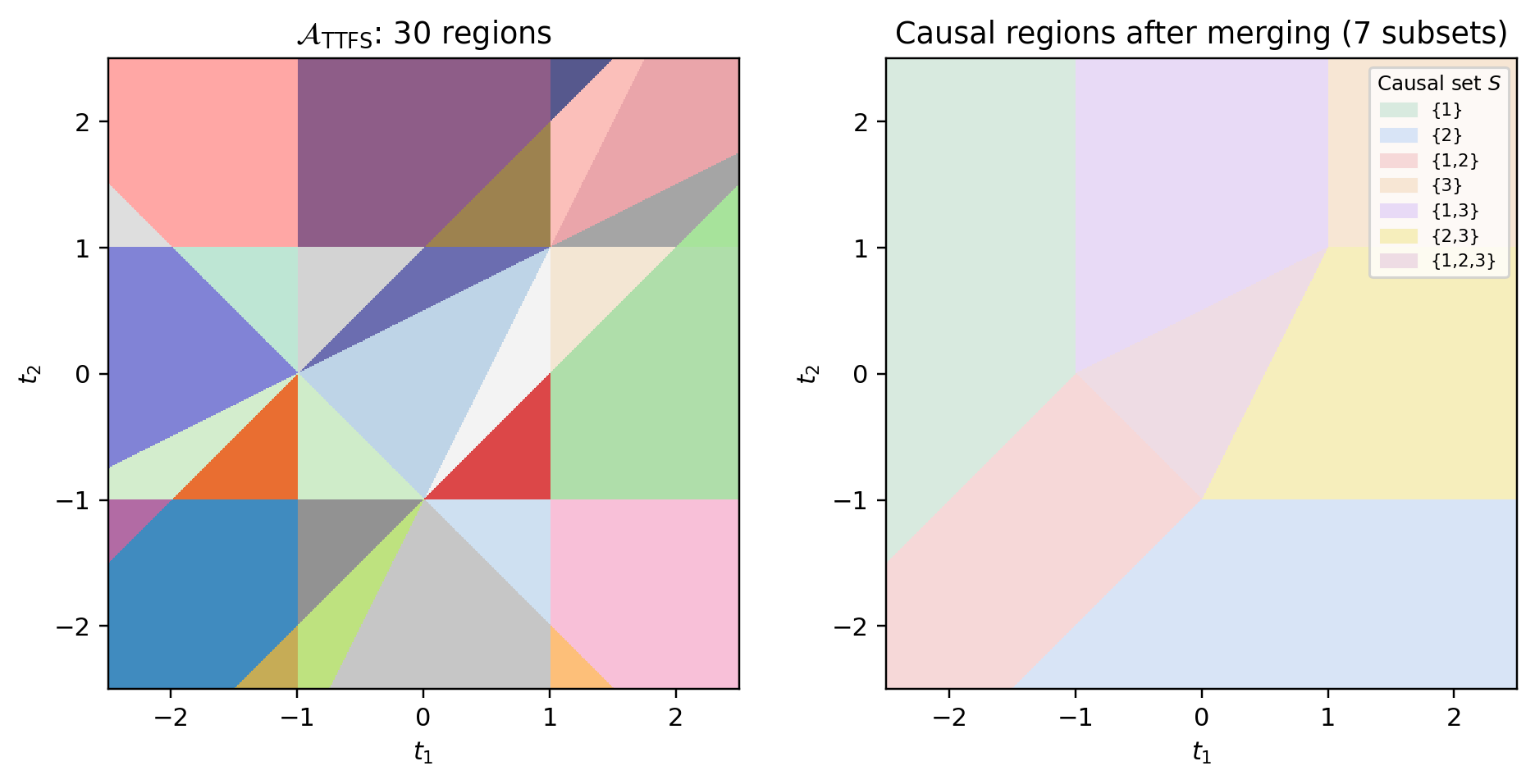}
\caption{$d=3$.}
\label{fig:d3_causal_regions}
\end{subfigure}

\caption{
Causal partitions for a single neuron. 
(a) For $d=2$, the two boundary lines $t_2 = t_1 + \theta/w_1$ and $t_2 = t_1 - \theta/w_2$ partition the plane into the three causal regions $R_{\{1\}}, R_{\{1,2\}}, R_{\{2\}}$; the dashed line $t_2=t_1$ is redundant for the causal partition. 
(b) For $d=3$, the TTFS arrangement $\cA_{\rm TTFS}$ consists of $9$ distinct hyperplanes, visualized through their intersections with the plane $t_3=0$. The arrangement regions (left) refine the causal regions (right). 
}
\label{fig:causal_regions_d2_d3}
\end{figure}

For a layer with multiple neurons, and more generally for a deep network, causal regions can still be described as unions of regions of a large hyperplane arrangement, as in \eqref{eq:R_S_union}. 
However, the resulting hyperplane decomposition is typically a strict refinement of the causal region partition, making it cumbersome to work with directly. 
We therefore introduce the notion of a \emph{causal pattern} of a network. A causal pattern records, for each neuron, the set of presynaptic neurons that fire before it.

We observe that all hyperplanes in $\cA_{\rm TTFS}$ are translation invariant along the all-ones direction $\1 = (1,\dots, 1) \in \R^d$. 
Let $e_1,\ldots, e_d$ denote the standard basis vectors of $\R^d$. 
The normal vector $n_{S,j}$ of $H_{S,j}$ satisfies 
\begin{equation}\label{eq:normal_HSj}
n_{S,j} = \sum_{i\in S} w_i e_i - W_S e_j, \qquad \langle n_{S,j},\1 \rangle = \sum_{i\in S}w_i - W_S = 0.
\end{equation} 
Thus, every hyperplane in $\cA_{\rm TTFS}$ is parallel to $\1$, and the arrangement is invariant under translations $\vec t \mapsto \vec t + c\1$. 

Accordingly, the arrangement can be essentialized by quotienting out the direction $\1$ \cite{stanley_2011_enumcombinatorics}. 
Identifying the quotient with 
\begin{equation}
\label{eq:T}
    T \coloneqq \1^\perp = \{\vec{u} \in \R^d : \langle \vec{u}, \1 \rangle = 0 \} \cong \R^{d-1}
\end{equation}
we define the essentialized arrangement $\cA_{\rm TTFS}^{\rm ess}\coloneq \{H^{\rm ess} = H \cap T\colon H \in \cA_{\rm TTFS} \}$. 
The arrangements $\cA_{\rm TTFS}^{\rm ess}$ and $\cA_{\rm TTFS}$ have the same region combinatorics. 

This invariance has a direct interpretation in terms of spike times. Adding the same constant $c$ to all input spike times shifts the output firing time by $c$ but leaves all causal relations unchanged. Consequently, the causal geometry depends only on relative spike times and can naturally be viewed on the quotient $\R^{d} / \operatorname{span}\{1\}\cong \mathbb{R}^{d-1}$. 
This is precisely the geometry captured by the essentialized arrangement $\cA_{\rm TTFS}^{\rm ess}$. This is illustrated in Figure~\ref{fig:levelsetd2}.

\begin{figure}[t]
\centering
\resizebox{!}{4.5cm}{
\begin{tikzpicture}[scale=1.2, >=Latex]

\begin{scope}

\draw[->] (-2.4,0) -- (2.2,0) node[right] {\(x_1\)};
\draw[->] (0,-2.4) -- (0,2.2) node[above] {\(x_2\)};

\draw[thick] (-1,0) -- (-1,2.0);
\draw[thick] (-1,0) -- (0,-1);
\draw[thick] (0,-1) -- (2.0,-1);

\node[left] at (-1,1.2) {\(\{1\}\)};
\node[below left] at (-0.55,-0.45) {\(\{1,2\}\)};
\node[below] at (1.2,-1) {\(\{2\}\)};

\node[below left] at (0,0) {\(0\)};

\draw[->, dashed] (-2.0,-1.7) -- (-1.25,-0.95)
node[midway, below right] {\(\mathbf 1=(1,1)\)};

\draw[dotted] (-1.7,1.7) -- (1.7,-1.7);
\node[rotate=-45] at (0.5, -0.5) {$\mathbf 1^\perp$};
\end{scope}

\begin{scope}[xshift=6.3cm]

\draw[->] (-2.8,0) -- (2.8,0) node[right] {\(y\)};

\draw[thick] (-1,0.13) -- (-1,-0.13);
\draw[thick] (1,0.13) -- (1,-0.13);
\node[below] at (-1,-0.13) {\(-1\)};
\node[below] at (1,-0.13) {\(1\)};

\draw[thick] (-2.6,0) -- (-1,0);
\draw[thick] (-1,0) -- (1,0);
\draw[thick] (1,0) -- (2.6,0);

\node[above] at (-1.8,0.1) {$\{1\}$};
\node[above] at (0,0.1) {$\{1,2\}$};
\node[above] at (1.8,0.1) {$\{2\}$};

\node[below] at (-1.8,-0.35) {$y<-1$};
\node[below] at (0,-0.35) {$-1<y<1$};
\node[below] at (1.8,-0.35) {$y>1$};
\end{scope}

\end{tikzpicture}
}
\caption{For $d=2$, $w_1=2_2=1$, $\theta_v= 1$, the level set $\sum_{i=1}^2\sigma(-x_i) = 1$, where $x_i = t_i - t$, consists of three pieces. 
Projecting along the all-ones direction $\1 = (1,1)$ onto the quotient coordinate $y = x_1-x_2 = t_1-t_2$ maps these pieces to the three causal regions in the quotient input space.}
\label{fig:levelsetd2}
\end{figure}

\subsection{The lifted TTFS configuration and polytope} 
\label{subsec:ttfs_polytope}

We present a polyhedral geometry description of the firing-time map $\vec t \mapsto t_v(\vec t)$. 
Our goal is to provide a compact representation of the piecewise-affine structure of firing-time map and connect the TTFS neuron model to familiar constructions from polyhedral and tropical geometry. 
We present the relevant polyhedral notions at a descriptive level and refer the reader to \cite{MaclaganSturmfels2015Tropical} for a systematic treatment of regular subdivisions, duality, and related constructions. 

Recall from Proposition~\ref{cor:tv_equals_tvs} that 
$t_v(\vec t) = \min_{\emptyset \neq S \subseteq[d]} t_v^S(\vec t)$, 
where 
\begin{equation}
\label{eq:aSbS}
    t_v^S(\vec t) = \langle \vec a_S, \vec t \rangle + b_S,
    \quad \text{where  } \vec a_S \coloneqq \sum_{i \in S}\frac{w_i}{W_S} \vec e_i, \quad b_S \coloneqq \frac{\theta_v}{W_S}, \quad 
    W_S \coloneqq\sum_{i\in S}w_i . 
\end{equation}
It is convenient to work with the convex piecewise-affine function 
\begin{equation} 
f(\vec t) \coloneqq -t_v(\vec t) = \max_{\emptyset\neq S\subseteq[d]} \bigl(-\langle \vec a_S, \vec t \rangle - b_S \bigr) .
\label{eq:minus_tv}
\end{equation}

\begin{definition}[Lifted TTFS configuration and polytope] 
\label{def:lifted_ttfs_polytope} 
For every nonempty $S\subseteq[d]$, define the projected coefficient point and lifted coefficient point \[ \vec p_S\coloneqq-\vec a_S\in\R^d, \qquad \vec y_S\coloneqq(-\vec a_S,-b_S) =(\vec p_S,-b_S)\in\R^{d+1}. \] The collection \[ \mathcal Y(w,\theta_v) \coloneqq \{\vec y_S:\emptyset\neq S\subseteq[d]\} \] is the \emph{lifted TTFS configuration}, and \[ Q(w,\theta_v) \coloneqq \operatorname{conv} \{\vec y_S:\emptyset\neq S\subseteq[d]\} \subseteq\R^{d+1} \] is the \emph{lifted TTFS polytope}. \end{definition}

The connection between $Q$ and $f$ can be seen directly through the support function of $Q$, defined as $h_Q(\vec z) = \max_{\vec y\in Q}\langle \vec z, \vec y\rangle$. 
For $\vec z = (\vec t, 1)$, we obtain 
$$
h_Q\bigl((\vec t,1)\bigr) = \max_{\emptyset\neq S\subseteq[d]}
\bigl(
-\langle\vec a_S,\vec t\rangle-b_S
\bigr)
=
f(\vec t).
$$
Thus, $f$ is obtained by restricting the support function of $Q$ to points whose last coordinate is $1$. 
This representation gives a direct correspondence between the polyhedral geometry of $Q$ and the piecewise-affine structure of $f$. 
For a given input $\vec t$, the affine pieces active at $\vec t$ are precisely those whose lifted coefficient vectors $\vec y_S$ lie on the face of $Q$ exposed by the direction $(\vec t,1)$. In particular, if the active affine piece is unique, the corresponding $\vec y_S$ is an exposed vertex of $Q$. 
Consequently, the upper faces of $Q$, namely those exposed by directions with last coordinate equal to $1$, encode the piecewise-affine structure of $f$.

\paragraph{Structure of the polytope} 

The polytope $Q$ has a highly constrained structure. 
First observe that the coefficient vectors $\vec a_S$ satisfy 
$(\vec a_S)_i \geq 0$, $\sum_{i=1}^d(\vec a_S)_i = 1$. 
Thus, every $\vec a_S$ lies in the standard simplex $\Delta_{d-1} = \operatorname{conv}\{\vec e_1, \dots, \vec e_d\}$. 
More precisely, $\vec a_S$ lies in the relative interior of the face $\operatorname{conv}\{\vec e_i\colon i\in S\}$. 
Since $\vec a_{\{i\}}=\vec e_i$, the convex hull of all points $\vec p_S = -\vec a_S$ is exactly the simplex: 
$$
\operatorname{conv}\bigl\{\vec p_S \colon \emptyset \neq S\subseteq[d] \bigr\} = -\Delta_{d-1}.
$$
Moreover, all lifted points lie in the affine hyperplane 
\begin{equation} 
\label{eq:H_ttfs} 
H \coloneqq \left\{ (\vec p,\beta)\in\R^d\times\R: \langle\vec p,\1\rangle=-1 \right\}. \end{equation} 
Thus $Q$ has affine dimension at most $d$, although it is naturally embedded in $\R^{d+1}$. 
This constraint reflects the translation invariance of the firing-time map, whereby adding the same constant to all input spike times shifts the firing time by the same constant without changing the causal structure.

The structure is considerably more rigid than the elementary observations above. 
To discuss this, we introduce the scale-free parameters 
\begin{equation} 
\label{eq:lambda_ttfs} 
\lambda_i \coloneqq \frac{w_i}{\theta_v}>0, \qquad \lambda_S \coloneqq \sum_{i\in S}\lambda_i. \end{equation} 
Then 
\begin{equation} 
\label{eq:yS_lambda} 
\vec a_S = \frac{1}{\lambda_S} \sum_{i\in S}\lambda_i\vec e_i, 
\qquad 
b_S=\frac{1}{\lambda_S},
\end{equation} 
and therefore 
\begin{equation} \label{eq:yS_lambda_lifted} 
\vec y_S = \left( -\frac{\sum_{i\in S}\lambda_i\vec e_i}{\lambda_S}, -\frac{1}{\lambda_S} \right). 
\end{equation}

The following theorem gives an intrinsic geometric description of $Q$. 

\begin{theorem}[Structure of the lifted TTFS polytope] 
\label{thm:ttfs_polytope_structure} 
Let $Q=Q(w,\theta_v)$ be the lifted TTFS polytope of a neuron with $d$ positive input weights, and let $\lambda_i=w_i/\theta_v$. 
Define the weighted box 
$
B_\lambda \coloneqq \bigtimes_{i=1}^d[0,\lambda_i] \subseteq\R^d 
$
and, for $S\subseteq[d]$, its vertices 
$
\vec x_S \coloneqq \sum_{i\in S}\lambda_i\vec e_i$. 
Then: 
\begin{enumerate} 
\item 
The lifted TTFS points are obtained from the nonzero vertices $\vec x_S$ by first embedding them into $\R^{d+1}$ via $\vec x\mapsto(\vec x,1)$ and then centrally projecting from the origin onto 
\[
H=\{(\vec p,\beta)\in\R^{d+1}:\langle\vec p,\1\rangle=-1\}.
\]
Explicitly,
\[
\vec y_S
=
-\frac{(\vec x_S,1)}{\langle\vec x_S,\1\rangle}.
\]

\item
The upper hull of $Q$ is the graph over $-\Delta_{d-1}$ of the concave
piecewise-linear function
\[
g_\lambda(\vec p)
\coloneqq
\min_{i\in[d]}\frac{p_i}{\lambda_i}.
\]
Equivalently, its hypograph is the polyhedron
\begin{equation}
\label{eq:Qhat}
\widehat Q_\lambda
\coloneqq
\left\{
(\vec p,\beta)\in H:
\lambda_i\beta\leq p_i\leq0,\; i\in[d]
\right\} 
=
Q+\R_{\geq0}(0,\ldots,0,-1).
\end{equation} 

\item 
The vertices of $Q$ are naturally indexed by the $2^d-1$ nonempty subsets $S\subseteq[d]$. The vertex $\vec y_S = (\vec p,\beta)$ indexed by $S$ is obtained by making exactly one of the two inequalities $\lambda_i\beta\leq p_i\leq0$ tight for every $i\in[d]$, with $p_i=\lambda_i\beta$ iff $i\in S$ and $p_i=0$ iff $i\notin S$. 

\end{enumerate} 
\end{theorem}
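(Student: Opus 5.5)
Part 1 is a direct computation from \eqref{eq:yS_lambda_lifted}. Since $\langle \vec x_S,\1\rangle=\lambda_S$, we get
\[
-\frac{(\vec x_S,1)}{\lambda_S}=\Bigl(-\frac{\sum_{i\in S}\lambda_i\vec e_i}{\lambda_S},-\frac{1}{\lambda_S}\Bigr)=\vec y_S .
\]
This point lies on the ray through $(\vec x_S,1)$ and on $H$, because its first $d$ coordinates sum to $-1$. So it is the central projection.

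For Parts 2 and 3 I will first show that $\widehat Q_\lambda$ is a polyhedron whose vertices are exactly the $\vec y_S$, and whose recession cone is $\R_{\ge0}(0,\dots,0,-1)$. Parametrize $H$ by $(\vec p,\beta)$ with $\sum_i p_i=-1$. The constraints $\lambda_i\beta\le p_i\le 0$ force $\beta\le 0$. They also force $\beta<0$, since $\beta=0$ gives $\vec p=0$, which contradicts $\sum_i p_i=-1$. A vertex of $\widehat Q_\lambda$ lies in the $d$-dimensional space $H$, so it needs $d$ linearly independent tight constraints among the $2d$. For each $i$ the two inequalities $p_i=\lambda_i\beta$ and $p_i=0$ cannot both be tight, since that would force $\beta=0$. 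Hence a vertex has exactly one tight inequality per coordinate, indexed by $S=\{i: p_i=\lambda_i\beta\}$.

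Substituting into $\sum_i p_i=-1$ gives $\beta\lambda_S=-1$. This forces $S\neq\emptyset$, and the resulting point is exactly $\vec y_S$. Conversely, each such $\vec y_S$ satisfies all constraints: $\beta=-1/\lambda_S<0$ gives $\lambda_i\beta<0$, so every non-tight inequality holds strictly. Its tight constraints ($p_i=\lambda_i\beta$ for $i\in S$, $p_i=0$ for $i\notin S$) are $d$ equations in the $d$-dimensional space $H$. They are independent, because the only common solution is $\vec y_S$ itself. So each $\vec y_S$ is a vertex.

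For the recession cone, take a direction $(\vec q,\gamma)$ with $\sum_i q_i=0$, $\lambda_i\gamma\le q_i\le0$. The upper bounds $q_i\le 0$ together with $\sum_i q_i=0$ give $\vec q=0$, and then $\lambda_i\gamma\le 0$ gives $\gamma\le0$. The polyhedron is pointed, since its lineality space is contained in this cone, which contains no line. By Minkowski--Weyl, $\widehat Q_\lambda=\operatorname{conv}\{\vec y_S\}+\R_{\ge0}(0,\dots,0,-1)=Q+\R_{\ge0}(0,\dots,0,-1)$. This is Part 3 together with the second equality in \eqref{eq:Qhat}.

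It remains to identify the upper hull. For fixed $\vec p\in-\Delta_{d-1}$, the largest admissible $\beta$ in $\widehat Q_\lambda$ is $\min_i p_i/\lambda_i=g_\lambda(\vec p)$, because each constraint reads $\beta\le p_i/\lambda_i$ (recall $\lambda_i>0$). The projection of $\widehat Q_\lambda$ onto $\vec p$ is $-\Delta_{d-1}$: the constraints give $p_i\le 0$ and $\sum_i p_i=-1$, and every such $\vec p$ is attained by taking $\beta$ very negative. Hence $\widehat Q_\lambda$ is the hypograph of $g_\lambda$ over $-\Delta_{d-1}$. Since $Q$ and $Q+\R_{\ge0}(0,\dots,0,-1)$ have the same upper hull, the upper hull of $Q$ is the graph of $g_\lambda$.

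The main obstacle is the vertex enumeration in Part 3. The two key points are excluding $\beta=0$, which prevents both inequalities from being tight for the same $i$, and checking that every $\vec y_S$ is feasible, which uses the strict negativity of $\lambda_i\beta$. Everything else follows from linear-algebra bookkeeping and Minkowski--Weyl.
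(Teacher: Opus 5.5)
Your proposal is correct and follows essentially the same route as the paper: compute the central projection directly, enumerate the vertices of $\widehat Q_\lambda$ as the points with exactly one tight inequality per coordinate, identify the recession cone as $\R_{\geq0}(0,\ldots,0,-1)$, apply Minkowski--Weyl, and read off the upper hull as the graph of $g_\lambda$. Two of your steps are tighter than the paper's. You rule out double tightness via $\beta<0$ before counting tight constraints, and you explicitly check that each $\vec y_S$ is feasible; the paper states both of these points somewhat loosely.
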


\begin{figure}
\centering 
\begin{tikzpicture}[
    x={(.75cm,-.5cm)},
    y={(1.4cm,0.15cm)},
    z={(0cm,2cm)},
    line cap=round,
    line join=round,
    scale=1.05
]

\def\la{2.4}   
\def\lb{1.7}   

\pgfmathsetmacro{\ba}{-1/\la}
\pgfmathsetmacro{\bb}{-1/\lb}
\pgfmathsetmacro{\bc}{-1/(\la+\lb)}
\pgfmathsetmacro{\pa}{-\la/(\la+\lb)}
\pgfmathsetmacro{\pb}{-\lb/(\la+\lb)}

\coordinate (O)   at (0,0,0);

\coordinate (X0)  at (0,0,1);
\coordinate (X1)  at (\la,0,1);
\coordinate (X2)  at (0,\lb,1);
\coordinate (X12) at (\la,\lb,1);

\coordinate (Y1)  at (-1,0,\ba);
\coordinate (Y2)  at (0,-1,\bb);
\coordinate (Y12) at (\pa,\pb,\bc);

\draw[->, thick] (O) -- (3.2,0,0)
    node[below right] {$p_1$};

\draw[->, thick] (O) -- (0,2.6,0)
    node[right] {$p_2$};

\draw[->, thick] (O) -- (0,0,1.65)
    node[above] {$\beta$};

\coordinate (H1) at (-1.5, 0.5,-1.0);
\coordinate (H2) at ( 0.5,-1.5,-1.0);
\coordinate (H3) at ( 0.5,-1.5, 1);
\coordinate (H4) at (-1.5, 0.5, 1);

\fill[blue!10, opacity=.65]
(H1) -- (H2) -- (H3) -- (H4) -- cycle;
\draw[blue!55, thin]
(H1) -- (H2) -- (H3) -- (H4) -- cycle;

\node[blue!70!black]
    at (-.5,-.5,.5)
    {$H$};

\fill[orange!18, opacity=.85]
    (X0) -- (X1) -- (X12) -- (X2) -- cycle;

\draw[thick]
    (X0) -- (X1) -- (X12) -- (X2) -- cycle;

\draw[densely dashed, gray]
    (O) -- (X0);

\fill (X1)  circle (1.5pt);
\fill (X2)  circle (1.5pt);
\fill (X12) circle (1.5pt);

\draw[dotted]
    (X1) -- (O) -- (Y1);

\draw[dotted]
    (X2) -- (O) -- (Y2);

\draw[dotted]
    (X12) -- (O) -- (Y12);

\draw[blue!65, thick]
    (Y1) -- (Y12) -- (Y2);

\fill[blue!70!black] (Y1) circle (2.0pt);
\fill[blue!70!black] (Y2) circle (2.0pt);
\fill[blue!70!black] (Y12) circle (2.0pt);

\end{tikzpicture}
\qquad \qquad
	\begin{tikzpicture}[
	x={(1.05cm,-0.65cm)},
	y={(1.20cm,0.20cm)},
	z={(0cm,1.35cm)},
	line cap=round,
	line join=round,
	scale=1.0
	]
	
	\def\la{1.5}
	\def\lb{1.4}
	\def\lc{1.6}

	\pgfmathsetmacro{\pabone}{-\la/(\la+\lb)}
	\pgfmathsetmacro{\pabtwo}{-\lb/(\la+\lb)}
	
	\pgfmathsetmacro{\pacone}{-\la/(\la+\lc)}
	\pgfmathsetmacro{\pacthree}{-\lc/(\la+\lc)}
	
	\pgfmathsetmacro{\pbctwo}{-\lb/(\lb+\lc)}
	\pgfmathsetmacro{\pbcthree}{-\lc/(\lb+\lc)}
	
	\pgfmathsetmacro{\pabcone}{-\la/(\la+\lb+\lc)}
	\pgfmathsetmacro{\pabctwo}{-\lb/(\la+\lb+\lc)}
	\pgfmathsetmacro{\pabcthree}{-\lc/(\la+\lb+\lc)}
	
	\coordinate (O) at (0,0,0);

	\coordinate (X0)   at (0,0,0);
	\coordinate (X1)   at (\la,0,0);
	\coordinate (X2)   at (0,\lb,0);
	\coordinate (X3)   at (0,0,\lc);
	\coordinate (X12)  at (\la,\lb,0);
	\coordinate (X13)  at (\la,0,\lc);
	\coordinate (X23)  at (0,\lb,\lc);
	\coordinate (X123) at (\la,\lb,\lc);
	
	\coordinate (P1)   at (-1,0,0);
	\coordinate (P2)   at (0,-1,0);
	\coordinate (P3)   at (0,0,-1);
	
	\coordinate (P12)  at (\pabone,\pabtwo,0);
	\coordinate (P13)  at (\pacone,0,\pacthree);
	\coordinate (P23)  at (0,\pbctwo,\pbcthree);
	
	\coordinate (P123) at (\pabcone,\pabctwo,\pabcthree);
	
	\draw[->, thick] (O) -- (3,0,0)
	node[below right] {$p_1$};
	
	\draw[->, thick] (O) -- (0,3,0)
	node[right] {$p_2$};
	
	\draw[->, thick] (O) -- (0,0,2.4)
	node[above] {$p_3$};

	\coordinate (H1) at (-2, 0.5, 0.5);
	\coordinate (H2) at ( 0.5,-2, 0.5);
	\coordinate (H3) at ( 0.5, 0.5,-2);
	
	\fill[blue!10, opacity=.65]
	(H1) -- (H2) -- (H3) -- cycle;
	\draw[blue!55, thin]
	(H1) -- (H2) -- (H3) -- cycle;
	
	\node[blue!70!black]
	at (-0.67,-0.67,.34)
	{$H$};

\fill[orange!18, opacity=.55]
    (X3) -- (X13) -- (X123) -- (X23) -- cycle;

\fill[orange!18, opacity=.45]
    (X2) -- (X12) -- (X123) -- (X23) -- cycle;

\fill[orange!18, opacity=.35]
    (X1) -- (X12) -- (X123) -- (X13) -- cycle;    
	
	\draw[thick]
	(X0) -- (X1) -- (X12) -- (X2) -- cycle;
	
	\draw[thick]
	(X0) -- (X3) -- (X13) -- (X1);
	
	\draw[thick]
	(X2) -- (X23) -- (X3);
	
	\draw[thick]
	(X12) -- (X123) -- (X13);
	
	\draw[thick]
	(X23) -- (X123);
	
	\foreach \X in {X1,X2,X3,X12,X13,X23,X123}
	\fill (\X) circle (1.5pt);
	
	\draw[dotted] (X1)   -- (O) -- (P1);
	\draw[dotted] (X2)   -- (O) -- (P2);
	\draw[dotted] (X3)   -- (O) -- (P3);
	
	\draw[dotted] (X12)  -- (O) -- (P12);
	\draw[dotted] (X13)  -- (O) -- (P13);
	\draw[dotted] (X23)  -- (O) -- (P23);
	
	\draw[dotted] (X123) -- (O) -- (P123);
	
	\draw[blue!65, thick]
	(P1) -- (P2) -- (P3) -- cycle;
	
	\draw[thick, blue!65]
	(P12) -- (P123)
	(P13) -- (P123)
	(P23) -- (P123);

	\foreach \P in {P1,P2,P3,P12,P13,P23,P123}
\fill[blue!70!black] (\P) circle (2pt);
	
	\end{tikzpicture}
\caption{Illustration of Theorem~\ref{thm:ttfs_polytope_structure}. The left panel shows $d=2$. 
The lifted box $B_\lambda$ (orange) in $\mathbb{R}^{d+1}$. 
The nonzero vertices are mapped by central projection from origin onto the affine hyperplane $H$ (blue) to produce the TTFS configuration. 
The right panel shows $d=3$. Shown is the projection onto the first $d$ coordinates, illustrating the projected configuration without the height. 
}
\label{fig:projection-polytope-structure}

\end{figure}

Theorem~\ref{thm:ttfs_polytope_structure} shows that the exponentially many vertices of $Q$ arise from a simple construction: 
they are a perspective image of the nonzero vertices of an axis-aligned box. This is illustrated in Figure~\ref{fig:projection-polytope-structure}. 
In particular, the $2^d-1$ lifted points are governed by only $d$ positive parameters.

\begin{proof}[Proof of Theorem~\ref{thm:ttfs_polytope_structure}] 
For every nonempty $S\subseteq[d]$,
\[
\langle\vec x_S,\1\rangle
=
\sum_{i\in S}\lambda_i
=
\lambda_S. 
\]
After the embedding $\vec x_S\mapsto(\vec x_S,1)$, the line through the origin and $(\vec x_S,1)$ consists of the points $c(\vec x_S,1)$, $c\in\R$. 
Its intersection with $H$ is determined by $
c\langle\vec x_S,\1\rangle=-1$, 
and hence by
$
c=-\frac{1}{\lambda_S}$. 
The resulting point is
\[
-\frac{(\vec x_S,1)}{\langle\vec x_S,\1\rangle}
=
\left(
-\frac{\sum_{i\in S}\lambda_i\vec e_i}{\lambda_S},
-\frac{1}{\lambda_S}
\right)
=
\vec y_S,
\]
where the last equality follows from~\eqref{eq:yS_lambda_lifted}. This proves
the first claim.

We next consider $\widehat Q_\lambda$. Since
$(\vec p,\beta)\in H$ means $\sum_i p_i=-1$, the constraints
$p_i\leq0$, $i\in[d]$, are equivalent to
$\vec p\in-\Delta_{d-1}$. 
Moreover, since $\lambda_i>0$, 
the condition $\lambda_i\beta\leq p_i$ is equivalent to $\beta\leq\frac{p_i}{\lambda_i}$. 
It follows that 
\[
(\vec p,\beta)\in\widehat Q_\lambda
\quad\text{if and only if}\quad
\vec p\in-\Delta_{d-1}
\quad\text{and}\quad
\beta\leq
\min_{i\in[d]}\frac{p_i}{\lambda_i}
=
g_\lambda(\vec p).
\]
Thus $\widehat Q_\lambda$ is precisely the hypograph of $g_\lambda$ over
$-\Delta_{d-1}$.

We now determine the vertices. 
Let $(\vec p,\beta)$ be a finite vertex of $\widehat Q_\lambda$. 
For each $i\in[d]$, at least one of the two inequalities $
\lambda_i\beta\leq p_i\leq0$ must be tight. 
Indeed, if for some $i$ both inequalities were strict, then
the active inequalities could involve at most the remaining $d-1$ coordinates. 
Their normals therefore could not span the $d$-dimensional
tangent space of $H$, contradicting that $(\vec p,\beta)$ is a vertex.
The two inequalities cannot both be tight for any $i$. Otherwise
$p_i=0=\lambda_i\beta$, so $\beta=0$. 
The inequalities
$\lambda_j\beta\leq p_j\leq0$ would then imply $p_j=0$ for every $j$,
contradicting $\sum_jp_j=-1$. Hence exactly one of the two inequalities is
tight for each $i$. 
Define 
$S \coloneqq \{i\in[d]:p_i=\lambda_i\beta\}$. 
Then
\[
p_i
=
\begin{cases}
\lambda_i\beta,&i\in S,\\
0,&i\notin S.
\end{cases}
\]
The set $S$ is nonempty, since otherwise $\vec p=0$. Using
$\sum_i p_i=-1$ gives
\[
-1
=
\sum_{i=1}^d p_i
=
\beta\sum_{i\in S}\lambda_i
=
\beta\lambda_S, 
\]
and therefore
\[
\beta=-\frac{1}{\lambda_S},
\qquad
\vec p
=
-\frac{1}{\lambda_S}
\sum_{i\in S}\lambda_i\vec e_i.
\]
Thus
\[
(\vec p,\beta)=\vec y_S.
\]

Conversely, for every nonempty $S\subseteq[d]$, the point $\vec y_S$
satisfies
\[
p_i=\lambda_i\beta,\quad i\in S,
\qquad
p_i=0,\quad i\notin S.
\]
These $d$ equalities, together with the affine equation
$\sum_i p_i=-1$, uniquely determine $(\vec p,\beta)$; hence $\vec y_S$ is
a vertex of $\widehat Q_\lambda$. Therefore the finite vertices of
$\widehat Q_\lambda$ are precisely the $2^d-1$ points $\vec y_S$, $\emptyset\neq S\subseteq[d]$. 

It remains to determine the recession cone of $\widehat Q_\lambda$. 
A vector $(\vec r,\rho)\in\R^{d+1}$ is a recession direction if
\[
(\vec p,\beta)+t(\vec r,\rho)\in\widehat Q_\lambda
\qquad\text{for all }t\geq0
\qquad 
\text{whenever $(\vec p,\beta)\in\widehat Q_\lambda$.} 
\]
Since $\widehat Q_\lambda\subset H$, this requires $\sum_{i=1}^d r_i=0$. 
Moreover, preserving the inequalities
$\lambda_i\beta\leq p_i\leq0$ for all $t\geq0$ requires
\[
\lambda_i\rho\leq r_i\leq0,
\qquad i\in[d].
\]
The conditions $r_i\leq0$ for every $i$ and $\sum_i r_i=0$, imply
$r_i=0$ for all $i$, and thus $\rho\leq0$. 
Hence
\[
\operatorname{rec}(\widehat Q_\lambda)
=
\R_{\geq0}(0,\ldots,0,-1).
\]
Since the finite vertices of $\widehat Q_\lambda$ are precisely the points
$\vec y_S$, the Minkowski-Weyl theorem gives
\[
\widehat Q_\lambda
=
\operatorname{conv}
\{\vec y_S:\emptyset\neq S\subseteq[d]\}
+
\R_{\geq0}(0,\ldots,0,-1)
=
Q+\R_{\geq0}(0,\ldots,0,-1).
\]
Since the recession cone is vertically downward, the upper boundary of
$\widehat Q_\lambda$ is the upper hull of $Q$. Since $\widehat Q_\lambda$ is
the hypograph of $g_\lambda$, this upper boundary is precisely the graph of
$g_\lambda$. 
\end{proof}

\paragraph{Constraints on the lifted TTFS configuration} 

The next result makes the rigidity of the TTFS configuration explicit by characterizing the algebraic relations among the lifted points.

\begin{theorem}[Constraints among the lifted TTFS points] \label{thm:ttfs_polytope_constraints} 
Let 
\[ 
\mathcal Y = \{\vec y_S=(\vec p_S,\beta_S): \emptyset\neq S\subseteq[d]\} \subset H 
\] 
be a collection of points with $\beta_S<0$. Then $\mathcal Y$ is the lifted TTFS configuration of a neuron with positive weights and positive threshold if and only if the following relations hold: 
\begin{align} \vec p_{\{i\}} &=-\vec e_i, &&i\in[d], \label{eq:constraint_singletons}
\\ 
p_{S,i} &=0, &&i\notin S, \label{eq:constraint_support}
\\ 
p_{S,i}\beta_{\{i\}} &=-\beta_S, &&i\in S, \label{eq:constraint_coordinate_height}
\\ 
\frac1{\beta_S} &= \sum_{i\in S}\frac1{\beta_{\{i\}}}, &&\emptyset\neq S\subseteq[d]. \label{eq:constraint_reciprocal} 
\end{align} 
\end{theorem}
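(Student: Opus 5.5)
The plan is to prove the two directions separately, using the explicit formula \eqref{eq:yS_lambda_lifted} for necessity and reconstructing the parameters $\lambda_i$ from the singleton heights for sufficiency.

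\emph{Necessity.} Suppose $\mathcal Y=\mathcal Y(w,\theta_v)$ with $\lambda_i=w_i/\theta_v>0$. By \eqref{eq:yS_lambda_lifted}, the lifted points are
\[
\vec y_S=\Bigl(-\lambda_S^{-1}\textstyle\sum_{i\in S}\lambda_i\vec e_i,\,-\lambda_S^{-1}\Bigr).
\]
Each relation then follows by substitution.
For $S=\{i\}$ we get $\vec p_{\{i\}}=-\vec e_i$ and $\beta_{\{i\}}=-1/\lambda_i$, which gives \eqref{eq:constraint_singletons}.
Every coordinate outside $S$ vanishes, which gives \eqref{eq:constraint_support}.
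For $i\in S$,
\[
p_{S,i}\beta_{\{i\}}=\bigl(-\lambda_i/\lambda_S\bigr)\bigl(-1/\lambda_i\bigr)=1/\lambda_S=-\beta_S,
\]
which is \eqref{eq:constraint_coordinate_height}.
Finally,
\[
1/\beta_S=-\lambda_S=\sum_{i\in S}\bigl(-\lambda_i\bigr)=\sum_{i\in S}1/\beta_{\{i\}},
\]
which is \eqref{eq:constraint_reciprocal}.
Note also that $\beta_S<0$ holds automatically.

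\emph{Sufficiency.} Given points $\vec y_S\in H$ with $\beta_S<0$ satisfying the four relations, define $\lambda_i\coloneqq-1/\beta_{\{i\}}$. This is positive because $\beta_{\{i\}}<0$. Choose $\theta_v=1$ and $w_i=\lambda_i$; any positive $\theta_v$ with $w_i=\lambda_i\theta_v$ works equally well. We must show that $\vec y_S$ coincides with the point given by \eqref{eq:yS_lambda_lifted}.
For the height, \eqref{eq:constraint_reciprocal} gives $1/\beta_S=-\lambda_S$, so $\beta_S=-1/\lambda_S$.
For the coordinates with $i\in S$, \eqref{eq:constraint_coordinate_height} gives $p_{S,i}=-\beta_S/\beta_{\{i\}}=-\lambda_i/\lambda_S$.
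For $i\notin S$, $p_{S,i}=0$ by \eqref{eq:constraint_support}.
Hence $\vec y_S$ equals the TTFS point for these parameters. For singletons, \eqref{eq:constraint_singletons} is consistent with this, since it gives $\vec p_{\{i\}}=-\vec e_i$.
As a sanity check, membership in $H$ is then automatic, because $\sum_i p_{S,i}=-\lambda_S/\lambda_S=-1$. This means the hypothesis $\mathcal Y\subset H$ is not used beyond consistency, and some of the relations are redundant; for instance, \eqref{eq:constraint_singletons} follows from the others together with $H$.

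\emph{Main difficulty.} Both directions are routine substitutions. The only real subtlety is sufficiency: one must reconstruct the parameters from the data and check that this reconstruction is well defined and positive. Positivity comes from the hypothesis $\beta_S<0$, used for singletons. One should also remark that the configuration determines only the ratios $w_i/\theta_v$, so $(w,\theta_v)$ is recovered only up to a common positive scaling. This is harmless, because the statement only asserts the existence of some neuron realizing $\mathcal Y$.
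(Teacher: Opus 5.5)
Your proposal is correct and follows the paper's proof essentially step for step. Necessity comes from substituting the explicit formula \eqref{eq:yS_lambda_lifted}; sufficiency comes from setting $\lambda_i=-1/\beta_{\{i\}}$, recovering $\beta_S$ and $p_{S,i}$ from the reciprocal and coordinate-height relations, and taking $\theta_v=1$, $w_i=\lambda_i$ (your side remark that membership in $H$ and \eqref{eq:constraint_singletons} are redundant given the other relations is also correct).
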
 

From Theorem~\ref{thm:ttfs_polytope_constraints} we see that, setting 
$r_S\coloneqq-\frac1{\beta_S}>0$, $r_\emptyset\coloneqq0$, 
the reciprocal heights form a strictly positive modular set function: 
\begin{equation} \label{eq:r_modular} 
r_S = \sum_{i\in S}r_{\{i\}}, 
\end{equation} 
and therefore, 
\begin{equation} 
\label{eq:r_modular_lattice} 
r_S+r_T = r_{S\cup T}+r_{S\cap T} \qquad \text{for all }S,T\subseteq[d]. 
\end{equation} 
Moreover, setting 
$\vec q_S \coloneqq \frac{\vec y_S}{\beta_S}$ and adjoining 
$\vec q_\emptyset\coloneqq(0,\ldots,0,1)$, 
the normalized points satisfy the 
modular relations 
\begin{equation} 
\label{eq:q_boolean_relation} 
\vec q_S+\vec q_T = \vec q_{S\cup T}+\vec q_{S\cap T} \qquad \text{for all }S,T\subseteq[d]. 
\end{equation} 
Indeed, since $\vec q_S = \frac{\vec y_S}{\beta_S} = \left( \sum_{i\in S}\lambda_i\vec e_i, 1 \right)$, one has 
\begin{align*} 
\vec q_S+\vec q_T &= \left( \sum_i\lambda_i \bigl(\mathbf 1_{\{i\in S\}}+\mathbf 1_{\{i\in T\}}\bigr)\vec e_i, 2 \right)
\\ 
&= \left( \sum_i\lambda_i \bigl(\mathbf 1_{\{i\in S\cup T\}} +\mathbf 1_{\{i\in S\cap T\}}\bigr)\vec e_i, 2 \right) \\ 
&= \vec q_{S\cup T}+\vec q_{S\cap T}. \end{align*}

\begin{proof}[Proof of Theorem~\ref{thm:ttfs_polytope_constraints}] 
Suppose first that $\mathcal Y$ is the lifted TTFS configuration of a neuron. For a singleton $S=\{i\}$, \eqref{eq:aSbS} gives 
$\vec a_{\{i\}}=\vec e_i$ and hence 
$
\vec p_{\{i\}}=-\vec e_i$, 
which proves~\eqref{eq:constraint_singletons}. 
By~\eqref{eq:yS_lambda_lifted}, 
$\beta_S=-\frac1{\lambda_S}$ and 
$p_{S,i} = 
\begin{cases} -\dfrac{\lambda_i}{\lambda_S},& i\in S\\ 
0,& i\notin S\end{cases}$, which proves~\eqref{eq:constraint_support}. 
Moreover, 
$\beta_{\{i\}} = -\frac1{\lambda_i}$, 
and hence, for $i\in S$, 
\[ 
p_{S,i}\beta_{\{i\}} = \left(-\frac{\lambda_i}{\lambda_S}\right) \left(-\frac1{\lambda_i}\right) = \frac1{\lambda_S} = -\beta_S, 
\] 
which proves~\eqref{eq:constraint_coordinate_height}. 
Furthermore, 
$
\frac1{\beta_S} = -\lambda_S = -\sum_{i\in S}\lambda_i = \sum_{i\in S}\frac1{\beta_{\{i\}}}$, 
proving~\eqref{eq:constraint_reciprocal}.

Conversely, suppose that the stated constraints hold. Define $\lambda_i \coloneqq -\frac1{\beta_{\{i\}}}>0$. 
By~\eqref{eq:constraint_reciprocal}, $\frac1{\beta_S} = -\sum_{i\in S}\lambda_i$, and therefore 
\[ 
\beta_S = -\frac1{\sum_{i\in S}\lambda_i} = -\frac1{\lambda_S}. 
\] 
For $i\in S$, \eqref{eq:constraint_coordinate_height} gives 
$p_{S,i} = -\frac{\beta_S}{\beta_{\{i\}}} = -\frac{\lambda_i}{\lambda_S}$, whereas~\eqref{eq:constraint_support} gives $p_{S,i}=0$ for $i\notin S$. Hence 
\[ 
\vec y_S = \left( -\frac{\sum_{i\in S}\lambda_i\vec e_i}{\lambda_S}, -\frac1{\lambda_S} \right). 
\] 
Choosing, for example, $\theta_v=1$, $w_i=\lambda_i$, recovers exactly the given configuration from Definition~\ref{def:lifted_ttfs_polytope}. Thus $\mathcal Y$ is a lifted TTFS configuration. \end{proof}

\paragraph{Degrees of freedom} Theorem~\ref{thm:ttfs_polytope_constraints} makes clear that the $2^d-1$ lifted points are far from independent. 

A lifted TTFS configuration has only $d$ degrees of freedom. 
Indeed, the neuron is parametrized by 
$(w_1,\ldots,w_d,\theta_v)\in\R_{>0}^{d+1}$, but the simultaneous rescaling $(w_1,\ldots,w_d,\theta_v) \mapsto (cw_1,\ldots,cw_d,c\theta_v)$, $c>0$, leaves every $\vec a_S$ and $b_S$ unchanged. 
Thus the lifted configuration depends only on the $d$ ratios 
\[ 
\lambda_i=\frac{w_i}{\theta_v}, \qquad i\in[d]. 
\] 
Equivalently, it is completely determined by the $d$ singleton heights 
\[ 
\beta_{\{i\}} = -\frac{\theta_v}{w_i}. 
\] 
Once these are fixed, every nonsingleton point is forced: 
\begin{equation} 
\label{eq:yS_from_singletons} 
\beta_S = \left( \sum_{i\in S}\frac1{\beta_{\{i\}}} \right)^{-1}, \qquad p_{S,i} = \begin{cases} -\dfrac{\beta_S}{\beta_{\{i\}}},&i\in S,\\[2mm] 0,&i\notin S. \end{cases} 
\end{equation} 

By contrast, an arbitrary collection of $2^d-1$ points in the $d$-dimensional affine space $H$ has $d(2^d-1)$ degrees of freedom. 
The subset of such configurations subject to support structure compatibility has 
$d 2^{d-1}$ degrees of freedom. 
For a point indexed by $S$, the conditions $p_{S,i}=0$, $i\notin S$, and $\sum_{i\in S}p_{S,i}=-1$ leave $|S|-1$ degrees of freedom in $\vec p_S$, while the height $\beta_S$ contributes one additional degree of freedom. Thus an arbitrary support-compatible point $\vec y_S$ has $|S|$ degrees of freedom, and the sum over all nonempty subsets gives $ \sum_{\emptyset\neq S\subseteq[d]}|S| = d\,2^{d-1}$.

\subsection{Regular subdivision and causal complex} 

The lifting induces a regular subdivision of the coefficient simplex. 
Specifically, projecting the upper faces of $Q$ onto the first $d$ coordinates gives a subdivision of $-\Delta_{d-1}$. 
This subdivision is dual to the piecewise-affine decomposition of the domain of $f$: 
vertices of the subdivision correspond to full-dimensional linear regions of $f$, while edges correspond to boundaries between adjacent linear regions. 
Since the linear regions of $f=-t_v$ are the causal regions of the firing-time map, this duality provides a polyhedral representation of the causal region complex.

Figure~\ref{fig:ttfs_polytope_duality} illustrates this construction for a neuron with three inputs, $d=3$. 
The lifted coefficient polytope $Q$ shown in Figure~\ref{fig:ttfs_polytope_a} has one vertex $\vec y_S$ for each nonempty subset $S\subseteq [3]$. The vertices satisfy affine relations, so some facets of $Q$ are non-simplicial. 
Figure~\ref{fig:ttfs_polytope_b} shows the corresponding regular subdivision, obtained by projecting the upper faces of $Q$ onto the first $d$ coordinates. 
The projected points are $p_S = -\vec a_S$, whose convex hull is $-\Delta_{d-1}$. 
For $d=3$ with equal weights, the singleton subsets correspond to the three vertices of the triangle, the two-element subsets to the edge midpoints, and $\{1,2,3\}$ to the centroid. 
The locations of these points are determined by the weights, while their heights $-b_S$ determine the upper faces of $Q$ and hence the resulting regular subdivision. 
Figure~\ref{fig:ttfs_polytope_c} shows the dual causal-region complex for the same example. 
In particular, the internal $Y$-shaped structure in the regular subdivision is dual to the three boundaries of the central causal region $R_{\{1,2,3\}}$, while the subdivided outer edges are dual to the boundaries of the surrounding unbounded causal regions.

\begin{figure}[t]
\centering

\definecolor{TTFSTop}{RGB}{220,230,239}       
\definecolor{TTFSLeft}{RGB}{241,231,222}      
\definecolor{TTFSRight}{RGB}{227,237,223}     
\definecolor{TTFSCenter}{RGB}{210,190,222}    

\definecolor{TTFSEdge}{RGB}{72,82,90}
\definecolor{TTFSGuide}{RGB}{145,155,165}

\definecolor{TTFSRed}{RGB}{198,45,45}
\definecolor{TTFSBlue}{RGB}{30,104,198}
\definecolor{TTFSOrange}{RGB}{235,112,0}
\definecolor{TTFSPurple}{RGB}{135,31,163}

\begin{subfigure}[t]{0.32\textwidth}
\centering
\vspace{0pt}

\scalebox{.8}{
\begin{tikzpicture}[
    x={(1.00cm,-0.12cm)},
    y={(0.42cm,0.30cm)},
    z={(0cm,2.2cm)},
    line cap=round,
    line join=round,
    font=\small
]

\coordinate (Y1)   at ( 0.00, 3.80,-2);
\coordinate (Y2)   at (-2.60, 0.00,-2);
\coordinate (Y3)   at ( 2.60, 0.00,-2);

\coordinate (Y12)  at (-1.30,1.90,-1);
\coordinate (Y13)  at ( 1.30,1.90,-1);
\coordinate (Y23)  at ( 0.00,0.00,-1);

\coordinate (Y123) at (0.00,1.2667,-0.6666);

\filldraw[
    fill=gray!18,
    fill opacity=.75,
    draw=TTFSEdge,
    line width=.7pt
]
    (Y1)--(Y2)--(Y3)--cycle;

\filldraw[
    fill=gray!12,
    fill opacity=.72,
    draw=TTFSEdge,
    line width=.7pt
]
    (Y1)--(Y2)--(Y12)--cycle;

\filldraw[
    fill=gray!12,
    fill opacity=.72,
    draw=TTFSEdge,
    line width=.7pt
]
    (Y1)--(Y3)--(Y13)--cycle;

\filldraw[
    fill=gray!12,
    fill opacity=.72,
    draw=TTFSEdge,
    line width=.7pt
]
    (Y2)--(Y3)--(Y23)--cycle;

\filldraw[
    fill=TTFSTop,
    fill opacity=.78,
    draw=TTFSEdge,
    line width=1.0pt
]
    (Y1)--(Y12)--(Y123)--(Y13)--cycle;

\filldraw[
    fill=TTFSLeft,
    fill opacity=.82,
    draw=TTFSEdge,
    line width=1.0pt
]
    (Y2)--(Y12)--(Y123)--(Y23)--cycle;

\filldraw[
    fill=TTFSRight,
    fill opacity=.82,
    draw=TTFSEdge,
    line width=1.0pt
]
    (Y3)--(Y13)--(Y123)--(Y23)--cycle;

\draw[TTFSEdge,line width=1.1pt, opacity=.3]
    (Y1)--(Y12)--(Y123)--(Y13)--cycle;
    
\draw[TTFSEdge,line width=1.1pt]
    (Y12)--(Y123)--(Y13);    

\draw[TTFSEdge,line width=1.1pt]
    (Y2)--(Y12)--(Y123)--(Y23)--cycle;

\draw[TTFSEdge,line width=1.1pt]
    (Y3)--(Y13)--(Y123)--(Y23)--cycle;

\foreach \P in {Y2,Y3,Y12,Y13,Y23}
    \fill[TTFSEdge] (\P) circle (2.4pt);

\fill[TTFSEdge, opacity =.4] (Y1) circle (2.8pt);

\fill[TTFSPurple] (Y123) circle (2.8pt);

\node[right=0pt, opacity=.5] at (Y1)
    {$y_{\{1\}}$};

\node[below=4pt] at (Y2)
    {$y_{\{2\}}$};

\node[below=4pt] at (Y3)
    {$y_{\{3\}}$};

\node[above left=1pt] at (Y12)
    {$y_{\{1,2\}}$};

\node[above right=1pt] at (Y13)
    {$y_{\{1,3\}}$};

\node[below=6pt] at (Y23)
    {$y_{\{2,3\}}$};

\node[above=4pt] at (Y123)
    {$y_{\{1,2,3\}}$};

\end{tikzpicture}
}

\caption{Lifted polytope.}
\label{fig:ttfs_polytope_a}
\end{subfigure}
\hfill
\begin{subfigure}[t]{0.3\textwidth}
\centering
\vspace{0pt}
\scalebox{.7}{%
\hspace{-1cm}
\begin{tikzpicture}[
    line cap=round,
    line join=round,
    font=\small
]

\node[] at (0,4.5){}; 

\coordinate (P1)   at ( 0.00,3.80);
\coordinate (P2)   at (-2.60,0.00);
\coordinate (P3)   at ( 2.60,0.00);

\coordinate (P12)  at ($(P1)!0.50!(P2)$);
\coordinate (P13)  at ($(P1)!0.50!(P3)$);
\coordinate (P23)  at ($(P2)!0.50!(P3)$);

\coordinate (P123) at (0.00,1.25);


\fill[TTFSTop,opacity=.75]
    (P1)--(P12)--(P123)--(P13)--cycle;

\fill[TTFSLeft,opacity=.80]
    (P2)--(P12)--(P123)--(P23)--cycle;

\fill[TTFSRight,opacity=.80]
    (P3)--(P13)--(P123)--(P23)--cycle;

\draw[
    TTFSEdge,
    line width=1.1pt
]
    (P2)--(P1)--(P3)--cycle;

\draw[
    TTFSEdge,
    line width=1.0pt
]
    (P12)--(P123);

\draw[
    TTFSEdge,
    line width=1.0pt
]
    (P13)--(P123);

\draw[
    TTFSEdge,
    line width=1.0pt
]
    (P23)--(P123);

\foreach \P in {P1,P2,P3,P12,P13,P23}
    \fill[TTFSEdge] (\P) circle (2.5pt);

\fill[TTFSPurple] (P123) circle (2.8pt);

\node[above=5pt] at (P1)
    {$p_{\{1\}}$};

\node[below left=2pt] at (P2)
    {$p_{\{2\}}$};

\node[below right=2pt] at (P3)
    {$p_{\{3\}}$};

\node[above left=0pt] at (P12)
    {$p_{\{1,2\}}$};

\node[above right=0pt] at (P13)
    {$p_{\{1,3\}}$};

\node[below=5pt] at (P23)
    {$p_{\{2,3\}}$};

\node[above=4pt] at (P123)
    {$p_{\{1,2,3\}}$};

\end{tikzpicture}%
\hspace{-1cm}
}

\caption{Regular subdivision.}
\label{fig:ttfs_polytope_b}
\end{subfigure}
\hfill
\begin{subfigure}[t]{0.3\textwidth}
\centering
\vspace{0pt}

\scalebox{.7}{%
\hspace{-1.5cm}
\begin{tikzpicture}[
    >=Latex,
    line cap=round,
    line join=round,
    font=\small
]

\node[] at (0,4.5){}; 

\coordinate (P1)   at ( 0.00,3.80);
\coordinate (P2)   at (-2.60,0.00);
\coordinate (P3)   at ( 2.60,0.00);

\coordinate (P12)  at ($(P1)!0.50!(P2)$);
\coordinate (P13)  at ($(P1)!0.50!(P3)$);
\coordinate (P23)  at ($(P2)!0.50!(P3)$);

\coordinate (P123) at (0.00,1.25);

\fill[TTFSTop,opacity=.60]
    (P1)--(P12)--(P123)--(P13)--cycle;

\fill[TTFSLeft,opacity=.65]
    (P2)--(P12)--(P123)--(P23)--cycle;

\fill[TTFSRight,opacity=.65]
    (P3)--(P13)--(P123)--(P23)--cycle;

\draw[
    TTFSEdge,
    line width=1.1pt
]
    (P2)--(P1)--(P3)--cycle;

\draw[
    TTFSEdge,
    -,
    line width=.9pt
]
    (P12)--(P123);

\draw[
    TTFSEdge,
    -,
    line width=.9pt
]
    (P13)--(P123);

\draw[
    TTFSEdge,
    -,
    line width=.9pt
]
    (P23)--(P123);

\foreach \P in {P1,P2,P3,P12,P13,P23}
    \fill[TTFSEdge] (\P) circle (2.5pt);

\node[above=5pt] at (P1)
    {$p_{\{1\}}$};

\node[below left=2pt] at (P2)
    {$p_{\{2\}}$};

\node[below right=2pt] at (P3)
    {$p_{\{3\}}$};

\node[above left=0pt] at (P12)
    {$p_{\{1,2\}}$};

\node[above right=0pt] at (P13)
    {$p_{\{1,3\}}$};

\node[below=2pt] at (P23)
    {$p_{\{2,3\}}$};

\coordinate (q1) at ( 0.00,2.30);
\coordinate (q2) at (-0.95,0.94);
\coordinate (q3) at ( 0.95,0.94);

\filldraw[
    fill=TTFSCenter,
    fill opacity=.5,
    draw=black,
    line width=1.5pt
]
    (q1)--(q2)--(q3)--cycle;

\fill[
black] (q1) circle (2.2pt);
\fill[
black] (q2) circle (2.2pt);
\fill[
black] (q3) circle (2.2pt);

\fill[TTFSPurple, opacity = .1] (P123) circle (3.0pt);

\node[
    font=\scriptsize,
    opacity=.0, 
    anchor=south
] at ($(P123)+(0,.04)$)
    {$p_{\{1,2,3\}}$};

\node[
    font=\small,
    inner sep = 0pt
] at ($(P123)+(0,0.03)$)
    {$R_{\{1,2,3\}}$};

\draw[
    black, 
    line width=1.5pt,
    -{Latex[length=2.7mm,width=1.9mm]}
]
    (q1) -- ++(-3.25,2.22);

\draw[
    black, 
    line width=1.5pt,
    -{Latex[length=2.7mm,width=1.9mm]}
]
    (q2) -- ++(-3.25,2.22);

\draw[
    black, 
    line width=1.5pt,
    -{Latex[length=2.7mm,width=1.9mm]}
]
    (q1) -- ++(3.25,2.22);

\draw[
    black, 
    line width=1.5pt,
    -{Latex[length=2.7mm,width=1.9mm]}
]
    (q3) -- ++(3.25,2.22);

\draw[
    black, 
    line width=1.5pt,
    -{Latex[length=2.7mm,width=1.9mm]}
]
    (q2) -- ++(0,-3.05);

\draw[
    black, 
    line width=1.5pt,
    -{Latex[length=2.7mm,width=1.9mm]}
]
    (q3) -- ++(0,-3.05);

\node[
    font=\small,
    fill opacity=.75,
    text opacity=1,
    inner sep=1pt
] at (0,3.05)
    {$R_{\{1\}}$};

\node[
    font=\small,
    fill opacity=.75,
    text opacity=1,
    inner sep=1pt
] at (-1.8, 0.275)
    {$R_{\{2\}}$};

\node[
    font=\small,
    fill opacity=.75,
    text opacity=1,
    inner sep=1pt
] at (1.8,0.275)
    {$R_{\{3\}}$};

\node[
    font=\small,
    fill opacity=.72,
    text opacity=1,
    inner sep=1pt
] at (-2.5,3)
    {$R_{\{1,2\}}$};

\node[
    font=\small,
    fill opacity=.72,
    text opacity=1,
    inner sep=1pt
] at (2.5,3)
    {$R_{\{1,3\}}$};

\node[
    font=\small,
    fill opacity=.72,
    text opacity=1,
    inner sep=1pt
] at (0,-1.2)
    {$R_{\{2,3\}}$};
\end{tikzpicture}%
\hspace{-1cm}
}

\caption{Dual complex.}
\label{fig:ttfs_polytope_c}
\end{subfigure}
\caption{
Polyhedral description of a single TTFS neuron for
$d=3$ and $w_1=w_2=w_3=\theta_v=1$. 
(a) The lifted TTFS polytope $Q$, shown in the affine hyperplane $\{(\vec u, b)\colon \langle \vec u,\1\rangle=1 \}$. 
(b) Projecting its upper faces gives a regular subdivision of the TTFS polytope. 
(c) The causal region partition is the dual complex of the polytope subdivision. 
The three edges joining $p_{\{1,2,3\}}$ to $p_{\{1,2\}}, p_{\{1,3\}}, p_{\{2,3\}}$ correspond to the three sides of the region $R_{\{1,2,3\}}$, 
while the six outer boundary segments correspond to the remaining boundaries of the unbounded regions $R_{\{1\}}, R_{\{2\}}, R_{\{3\}}, R_{\{1,2\}}, R_{\{1,3\}}, R_{\{2,3\}}$. 
The resulting seven regions are $R_{S}$, $\emptyset\neq S\subseteq [3]$, recovering the causal partition shown previously in Figure~\ref{fig:d3_causal_regions}. 
}
\label{fig:ttfs_polytope_duality}
\end{figure}

This correspondence also gives a simple geometric interpretation of the causal boundaries. 
If two affine pieces indexed by $S$ and $T$ meet, their
common boundary is determined by the equality $t_v^S(\vec t) = t_v^T(\vec t)$, or equivalently, 
$\langle\vec a_S-\vec a_T,\vec t\rangle +(b_S-b_T)=0$. 
Thus, the difference $\vec a_S - \vec a_T$ determines the orientation of the boundary, while $b_S - b_T$ determines its position. 
Since the vectors $\vec a_S$ depend only on weight ratios, the weights determine the orientations of the causal boundaries, whereas the threshold enters through $b_S = \theta_v/W_S$ and controls their offsets. 
In particular, for fixed weights, scaling $\theta_v$ by a factor $c>0$ scales the causal partition by the same factor. 
Conversely, scaling all weights and $\theta_v$ by the same positive factor leaves every $\vec a_S$ and $b_S$ unchanged, and therefore leaves the firing-time map and its causal geometry unchanged.

\begin{theorem}[Structure of the regular subdivision and dual causal complex]
\label{thm:ttfs_dual_complex_structure}
Let $\Sigma_\lambda$ denote the regular subdivision of $-\Delta_{d-1}$
induced by the upper hull of $Q$. For
$\emptyset\neq I\subseteq J\subseteq[d]$, let $
C_{I,J}
\coloneqq
\operatorname{conv}
\{\vec p_S:I\subseteq S\subseteq J\}$. 
Then:
\begin{enumerate}

\item
The cells of $\Sigma_\lambda$ are precisely the polytopes
$C_{I,J}$, $\emptyset\neq I\subseteq J\subseteq[d]$. Moreover,
$\dim C_{I,J}=|J|-|I|$. 
In particular, the maximal cells are $C_{\{i\},[d]}$, $i\in[d]$, 
while the vertices are
$C_{S,S}=\{\vec p_S\}$, $\emptyset\neq S\subseteq[d]$. 

\item
Each $C_{I,J}$ is the central projection of the face
\[
F_{I,J}
\coloneqq
\left\{
\vec x\in B_\lambda:
x_i=\lambda_i\ \text{for }i\in I,\quad
x_i=0\ \text{for }i\notin J
\right\}
\]
of the weighted box $B_\lambda$. Consequently,
$\Sigma_\lambda$ is combinatorially isomorphic to the subcomplex of the
boundary of $B_\lambda$ consisting of the faces that do not contain the
origin. In particular, every cell $C_{I,J}$ is combinatorially a cube of
dimension $|J|-|I|$.

\item
The dual cell $C_{I,J}^*$ in the causal-region complex has codimension
$|J|-|I|$. Its relative interior consists precisely of the inputs for which,
writing $\tau=t_v(\vec t)$,
\[
t_i<\tau \quad (i\in I),\qquad
t_i=\tau \quad (i\in J\setminus I),\qquad
t_i>\tau \quad (i\notin J).
\]
Equivalently, the affine pieces indexed by
$I\subseteq S\subseteq J$ are exactly the pieces that are simultaneously active on
$\operatorname{relint}(C_{I,J}^*)$. 

\end{enumerate}
\end{theorem}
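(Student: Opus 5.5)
Everything will be derived from Theorem~\ref{thm:ttfs_polytope_structure}, which identifies the upper hull of $Q$ with the graph of the concave function $g_\lambda(\vec p)=\min_i p_i/\lambda_i$ over $-\Delta_{d-1}$, i.e.\ with the hypograph $\widehat Q_\lambda$ cut out by the $2d$ inequalities $\lambda_i\beta\le p_i\le 0$ inside $H$. The cells of $\Sigma_\lambda$ are the projections of the bounded faces of $\widehat Q_\lambda$, so the plan is to list those faces. A face is obtained by declaring a set $A$ of inequalities $p_i=\lambda_i\beta$ tight and a set $B$ of inequalities $p_i=0$ tight. As in the vertex argument in the proof of Theorem~\ref{thm:ttfs_polytope_structure}, $A\cap B=\emptyset$ unless the face is empty. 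Setting $I\coloneqq A$ and $J\coloneqq[d]\setminus B$, the face consists of the points of $\widehat Q_\lambda$ with $p_i=\lambda_i\beta$ for $i\in I$ and $p_i=0$ for $i\notin J$. The remaining coordinates $i\in J\setminus I$ are free in $[\lambda_i\beta,0]$.

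\textbf{Step 1 (the box correspondence).} Use the central projection $\vec x\mapsto -(\vec x,1)/\langle \vec x,\1\rangle$ of part~1 of Theorem~\ref{thm:ttfs_polytope_structure}. On the lifted box $B_\lambda\times\{1\}$ with the origin removed, the constraints $x_i\in[0,\lambda_i]$ correspond exactly to $\lambda_i\beta\le p_i\le 0$ with $\beta=-1/\langle\vec x,\1\rangle$. Hence the map sends $B_\lambda\setminus\{0\}$ bijectively onto the part of $\widehat Q_\lambda$ with finite height. It also sends the face $F_{I,J}$ to the face described above. Central projection restricted to a region where $\langle\vec x,\1\rangle>0$ is a projective transformation, so it preserves convexity and face lattices. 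Consequently the image of $F_{I,J}$ is $\operatorname{conv}\{\vec y_S: I\subseteq S\subseteq J\}$, and it is combinatorially a cube of dimension $|J\setminus I|$.

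A face is bounded precisely when $I\neq\emptyset$, because $F_{I,J}$ avoids the origin exactly in that case. When $I=\emptyset$ the face contains the vertical ray, since $\beta\to-\infty$ is then allowed. Projecting to the $\vec p$ coordinates is injective on upper faces, which gives parts~1 and~2. The maximal cells occur when $|J\setminus I|$ is as large as possible, namely $I=\{i\}$ and $J=[d]$, and the vertices are the cells $C_{S,S}$.

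\textbf{Step 2 (duality).} For part~3, recall that the affine pieces active at $\vec t$ are those $S$ whose $\vec y_S$ lies on the face of $Q$ exposed by $(\vec t,1)$. By the proof of Proposition~\ref{cor:tv_equals_tvs}, the piece $S$ is active exactly when $P_v^S(\tau)=\theta_v$, with $\tau=t_v(\vec t)$. This holds if and only if $S$ contains every $i$ with $t_i<\tau$ and is contained in the set of $i$ with $t_i\le\tau$. Writing $I=\{i:t_i<\tau\}$ (nonempty, because $\theta_v>0$) and $J=\{i:t_i\le\tau\}$, the active set is exactly $\{S: I\subseteq S\subseteq J\}$, whose exposed face is $C_{I,J}$.

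The locus with a given pair $(I,J)$ is cut out by $|J|-|I|$ independent equalities $t_i=\tau$, $i\in J\setminus I$. Here $\tau=t_v^I(\vec t)$ is affine, and the equalities are independent because each involves a distinct coordinate $t_i$ with $i\notin I$. Together with the strict inequalities, this locus has codimension $|J|-|I|$ and is nonempty; for nonemptiness, take $t_i=0$ on $I$, $t_i=\tau$ on $J\setminus I$, and $t_i$ large elsewhere.

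\textbf{Main obstacle.} I expect the hardest step to be making rigorous that central projection carries the face lattice of the box to the bounded upper faces of $Q$. In particular, one must check that each $C_{I,J}$ really is a face of the upper hull and not merely the convex hull of some of its vertices. I would handle this by exhibiting explicit supporting functionals: the sum over $i\in I$ of the tight constraints $p_i-\lambda_i\beta\ge0$, plus the sum over $i\notin J$ of the constraints $-p_i\ge 0$. This functional is nonnegative on $\widehat Q_\lambda$ and vanishes exactly on the face of Step~1, so that face is exposed by it.
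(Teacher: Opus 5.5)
Your proposal is correct and follows essentially the same route as the paper: central projection of the box faces $F_{I,J}$ (bounded exactly when $I\neq\emptyset$), then the active-set characterization $I\subseteq S\subseteq J$ obtained by comparing $P_v^S$ with $P_v$ at $\tau$, then duality. The differences are minor. You enumerate the bounded faces of $\widehat Q_\lambda$ by tight inequalities, with an explicit exposing functional, where the paper uses the index sets $I(\vec p)$, $J(\vec p)$ of $g_\lambda$; and you check the codimension $|J|-|I|$ directly, via independent equalities and a witness point, where the paper appeals to polyhedral duality.
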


\begin{proof}[Proof of Theorem~\ref{thm:ttfs_dual_complex_structure}] 
For $\emptyset\neq I\subseteq J\subseteq[d]$, consider the face
\[
F_{I,J}
=
\left\{
\vec x\in B_\lambda:
x_i=\lambda_i \text{ for }i\in I,\quad
x_i=0 \text{ for }i\notin J
\right\}.
\]
Its free coordinates are precisely those indexed by $J\setminus I$, and hence
$\dim F_{I,J}=|J|-|I|$. 
Moreover, its vertices are $\vec x_S=\sum_{i\in S}\lambda_i\vec e_i$, $I\subseteq S\subseteq J$.
Since $I\neq\emptyset$, the face $F_{I,J}$ does not contain the origin.
Therefore the central projection
$\pi(\vec x)
=
-\frac{\vec x}{\langle\vec x,\1\rangle}
$
is well defined on $F_{I,J}$ and maps it projectively onto 
\[
C_{I,J}
=
\operatorname{conv}
\{\vec p_S:I\subseteq S\subseteq J\}.
\]
In particular, $\dim C_{I,J}=|J|-|I|$, 
and $C_{I,J}$ is combinatorially a cube of that dimension. 

It remains to show that these are precisely the cells of the regular
subdivision. By Theorem~\ref{thm:ttfs_polytope_structure}, the upper hull of
$Q$ is the graph over $-\Delta_{d-1}$ of
$g_\lambda(\vec p)
=
\min_{i\in[d]}\frac{p_i}{\lambda_i}$. 
For $\vec p\in-\Delta_{d-1}$, let
\[
I(\vec p)
=
\left\{
i:
\frac{p_i}{\lambda_i}=g_\lambda(\vec p)
\right\},
\qquad
J(\vec p)
=
\{i:p_i<0\}.
\]
Since $\sum_i p_i=-1$, both sets are nonempty and
$I(\vec p)\subseteq J(\vec p)$. The relative interior of the region on which
these two sets are fixed is characterized by
\[
\frac{p_i}{\lambda_i}=g_\lambda(\vec p)
\quad(i\in I),\qquad
g_\lambda(\vec p)<\frac{p_i}{\lambda_i}<0
\quad(i\in J\setminus I),\qquad
p_i=0
\quad(i\notin J).
\]
Its closure is exactly $C_{I,J}$. Hence the cells of the regular subdivision
are precisely the $C_{I,J}$.

We now identify the dual cells. Let
\[
\tau=t_v(\vec t),
\qquad
I=\{i:t_i<\tau\},
\qquad
J=\{i:t_i\leq\tau\}.
\]
At the firing time,
$\theta_v
=
\sum_{i\in I}w_i(\tau-t_i)$. 
If $I\subseteq S\subseteq J$, then every index in $S\setminus I$ satisfies
$t_i=\tau$, and therefore
$\theta_v
=
\sum_{i\in S}w_i(\tau-t_i)$. 
Rearranging gives
$
\tau
=
\frac{\theta_v+\sum_{i\in S}w_i t_i}{W_S}
=
t_v^S(\vec t)$. 
Thus every affine piece indexed by $S$ with
$I\subseteq S\subseteq J$ is active at $\vec t$.

Conversely, suppose $t_v^S(\vec t)=\tau$. Then 
$\theta_v
=
\sum_{i\in S}w_i(\tau-t_i)$. 
Comparing this with $\theta_v
=
\sum_{i:t_i<\tau}w_i(\tau-t_i)$ 
and using $w_i>0$, we see that every index with $t_i<\tau$ must belong to $S$, while no index with $t_i>\tau$ can belong to $S$. 
Hence $
I\subseteq S\subseteq J$. 
Therefore the active affine pieces are exactly those indexed by the Boolean
interval
\[
[I,J]
=
\{S:I\subseteq S\subseteq J\}.
\]

By polyhedral duality, the cell dual to $C_{I,J}$ therefore consists of the
inputs satisfying
\[
t_i<\tau \quad(i\in I),\qquad
t_i=\tau \quad(i\in J\setminus I),\qquad
t_i>\tau \quad(i\notin J),
\]
with $\tau=t_v(\vec t)$. Since $\dim C_{I,J}=|J|-|I|$, 
the dual cell has codimension $|J|-|I|$, completing the proof. 
\end{proof}

\begin{corollary}[Explicit description of the causal complex]
\label{cor:ttfs_causal_complex_explicit}
For every nonempty $S\subseteq[d]$, let 
\[
t_v^S(\vec t)
\coloneqq
\frac{\theta_v+\sum_{i\in S}w_i t_i}{W_S},
\qquad
W_S=\sum_{i\in S}w_i,
\]
and, for $j\in[d]$,
\begin{equation}
\label{eq:ttfs_wall_form}
\phi_{S,j}(\vec t)
\coloneqq
W_S t_j-\sum_{i\in S}w_i t_i-\theta_v
=
W_S\bigl(t_j-t_v^S(\vec t)\bigr).
\end{equation}
Then the dual causal complex admits the following explicit description.

For every $\emptyset\neq I\subseteq J\subseteq[d]$, the relative interior of
the dual cell indexed by $(I,J)$ is
\begin{equation}
\label{eq:dual_cell_explicit}
\mathcal R_{I,J}^{\circ}
=
\left\{
\vec t\in\R^d:
\begin{array}{ll}
\phi_{I,i}(\vec t)<0, & i\in I,\\
\phi_{I,j}(\vec t)=0, & j\in J\setminus I,\\
\phi_{I,k}(\vec t)>0, & k\notin J
\end{array}
\right\}.
\end{equation}
Equivalently,
\[
t_i<\tau_I(\vec t)\quad(i\in I),\qquad
t_j=\tau_I(\vec t)\quad(j\in J\setminus I),\qquad
t_k>\tau_I(\vec t)\quad(k\notin J).
\]
\end{corollary}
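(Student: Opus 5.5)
The plan is to derive the corollary directly from part~3 of Theorem~\ref{thm:ttfs_dual_complex_structure}. That result describes $\operatorname{relint}(C_{I,J}^*)$ through the actual firing time $\tau=t_v(\vec t)$. The corollary replaces $\tau$ by the single affine candidate $\tau_I(\vec t)\coloneqq t_v^I(\vec t)$. So the work is to show that on the relevant sets these two quantities coincide, and then to rewrite the inequalities via the wall forms $\phi_{I,j}$.

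First, the translation step. Since $W_I>0$, \eqref{eq:ttfs_wall_form} gives $\phi_{I,j}(\vec t)=W_I\bigl(t_j-t_v^I(\vec t)\bigr)$. Hence $\phi_{I,j}<0$, $=0$, $>0$ is equivalent to $t_j<\tau_I$, $=\tau_I$, $>\tau_I$, respectively. This proves the claimed equivalence between \eqref{eq:dual_cell_explicit} and the description in terms of $\tau_I$. It remains to show that this set equals the set $\{t_i<\tau\ (i\in I),\ t_j=\tau\ (j\in J\setminus I),\ t_k>\tau\ (k\notin J)\}$ with $\tau=t_v(\vec t)$ from Theorem~\ref{thm:ttfs_dual_complex_structure}.

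For the inclusion ``$\subseteq$'' (Theorem's set into $\mathcal R_{I,J}^\circ$), let $\vec t$ lie in the Theorem's set. Then the strict causal set of $\vec t$ is $\{i: t_i<\tau\}=I$. By Proposition~\ref{cor:tv_equals_tvs}, or directly from the active-piece statement of part~3 with $S=I$, which lies in $[I,J]$, we get $\tau=t_v^I(\vec t)=\tau_I(\vec t)$. Substituting this gives the $\tau_I$-inequalities.

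For the inclusion ``$\supseteq$'', let $\vec t$ satisfy the $\tau_I$-conditions. Then $t_v^I(\vec t)>t_i$ for $i\in I$ and $t_v^I(\vec t)\le t_j$ for all $j\notin I$. These are exactly the causal feasibility inequalities \eqref{eq:ineqs} for $S=I$, so $\vec t\in R_I$. Proposition~\ref{cor:tv_equals_tvs} then gives $t_v(\vec t)=\tau_I(\vec t)$, and the conditions become those of the Theorem with $\tau=t_v(\vec t)$.

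The only delicate point is the ``$\supseteq$'' direction. There the $\tau_I$-conditions use a candidate firing time, and one must argue that it is the true one. I would handle this by noting that the non-strict condition $\le$ for $j\in J\setminus I$ is still admissible in the definition \eqref{eq:region_Rs} of $R_I$, which is exactly why $R_I$ uses $\le$ for indices outside $S$. Everything else is routine sign bookkeeping with $W_I>0$.
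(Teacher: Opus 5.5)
Your proposal is correct and follows the paper's proof. Like the paper, you invoke part~3 of Theorem~\ref{thm:ttfs_dual_complex_structure}, identify $\tau=t_v^I(\vec t)$ because exactly the indices in $I$ contribute to the potential at firing time, and translate the comparisons via $\phi_{I,j}=W_I\bigl(t_j-t_v^I(\vec t)\bigr)$ with $W_I>0$. Your explicit reverse inclusion, where the $\tau_I$-conditions are the feasibility inequalities of $R_I$ and so $t_v=\tau_I$ by Proposition~\ref{cor:tv_equals_tvs}, is a step the paper leaves implicit, and it is a welcome addition.
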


Corollary~\ref{cor:ttfs_causal_complex_explicit} shows, taking $I=J=S$, that the full-dimensional causal region indexed by a nonempty
$S\subseteq[d]$ is
\begin{equation}
\label{eq:causal_region_explicit}
\mathcal R_S^{\circ}
=
\left\{
\vec t\in\R^d:
\phi_{S,i}(\vec t)<0\ \text{for }i\in S,\quad
\phi_{S,j}(\vec t)>0\ \text{for }j\notin S
\right\},
\end{equation}
with closure obtained by replacing the strict inequalities by weak ones. 

Two full-dimensional regions $R_S$ and $R_T$ share a codimension-one boundary precisely when the corresponding vertices $p_S$ and $p_T$ are joined by an edge of the subdivision, meaning that $|S\triangle T|=1$. 
Thus a codimension-one dual cell has $J=S\cup\{j\}$ for some $j\notin S$. Its defining equality is
\[
t_j=t_v^S(\vec t), 
\]
which is equivalent to $W_S t_j-\sum_{i\in S}w_i t_i=\theta_v$. 
The codimension-one boundary between the adjacent causal regions
$\mathcal R_S$ and $\mathcal R_{S\cup\{j\}}$, where
$\emptyset\neq S\subseteq[d]$ and $j\notin S$, lies in the affine hyperplane
\begin{equation}
\label{eq:ttfs_causal_wall}
\mathcal H_{S,j}(w,\theta_v)
\coloneqq
\left\{
\vec t\in\R^d:
W_S t_j-\sum_{i\in S}w_i t_i=\theta_v
\right\}.
\end{equation}

Hence the causal complex of a positive TTFS neuron is supported on the
parameterized hyperplane family
\begin{equation}
\label{eq:ttfs_hyperplane_family}
\mathcal A(w,\theta_v)
=
\left\{
\mathcal H_{S,j}(w,\theta_v):
\emptyset\neq S\subseteq[d],\ j\notin S
\right\}.
\end{equation}

\begin{proof}[Proof of Corollary~\ref{cor:ttfs_causal_complex_explicit}]
By Theorem~\ref{thm:ttfs_dual_complex_structure}, the relative interior of
the dual cell indexed by $\emptyset\neq I\subseteq J\subseteq[d]$ consists
precisely of the inputs for which, with $\tau=t_v(\vec t)$,
\[
t_i<\tau\quad(i\in I),\qquad
t_j=\tau\quad(j\in J\setminus I),\qquad
t_k>\tau\quad(k\notin J).
\]
On this cell, the indices in $I$ are exactly those contributing strictly
positively to the membrane potential at firing time. Hence $\theta_v
=
\sum_{i\in I}w_i(\tau-t_i)$, 
and therefore 
$
\tau
=
\frac{\theta_v+\sum_{i\in I}w_i t_i}{W_I}
=
t_v^I(\vec t)$. 
Since $W_I>0$,
\[
\phi_{I,j}(\vec t)
=
W_I\bigl(t_j-t_v^I(\vec t)\bigr),
\]
so the three comparisons with $t_v^I(\vec t)$ are equivalent to the corresponding sign conditions in~\eqref{eq:dual_cell_explicit}. 
A codimension-one dual cell has $J=S\cup\{j\}$ for some $j\notin S$.
Its defining equality is
\[
t_j=t_v^S(\vec t), 
\]
which is equivalent to $W_S t_j-\sum_{i\in S}w_i t_i=\theta_v$. 
This gives~\eqref{eq:ttfs_causal_wall}. Since every codimension-one cell is
of this form, the causal complex is supported on the hyperplane family
$\mathcal A(w,\theta_v)$.
\end{proof}

\subsection{Interpretation} 

This construction above is analogous to the lifted Newton-polytope constructions familiar from tropical geometry \cite{MaclaganSturmfels2015Tropical}, which have been used in previous studies of piecewise linear neural networks, including ReLU networks 
\cite{ZhangNaitzatLim2018TropicalGeometryDNN, CharisopoulosMaragos2018}, maxout networks  \cite{montufar2022sharpboundsmaxout, balakin2025maxoutpolytopes}, and max-pooling \cite{maxpooling}. 
In contrast to standard maxout units, which are 
pointwise maxima of parametric affine functions with unconstrained coefficient vectors, the coefficients of the SNN firing-time map are highly constrained. 
Indeed, the $2^d-1$ lifted coefficient vectors $\vec y_S$, $\emptyset\neq S\subseteq[d]$, are determined by the $d$ weights $w_1,\ldots, w_d$ and the firing threshold $\theta_v$. 
Moreover, simultaneous positive scaling of all weights and the threshold leaves the lifted coefficient vectors unchanged, so this family has only $d$ effective degrees of freedom. 
The resulting geometry also differs from the zonotopal geometry arising in polyhedral descriptions of single-hidden layer ReLU networks. 
Already for $d=3$ with $w_1 = w_2 = w_3 = \theta_v=1$, the polytope $Q$ has the seven vertices, corresponding to the seven nonempty subsets of $[3]$. 
Since every positive-dimensional zonotope is centrally symmetric and therefore has an even number of vertices, this seven-vertex polytope cannot be a zonotope.

\section{Network-level causal patterns} 
\label{subsec:full_signature_counts}

For a single neuron, each causal region $R_S$ is a convex polyhedron. 
In a layered network, the same principle applies recursively: 
the firing behavior of each neuron is determined by the vector of spike times it receives from the previous layer, which in turn determines its causal set. 
This suggests that the natural combinatorial object at the network level is the collection of causal sets across all neurons. 
We refer to this collection as the causal pattern of the network.

\begin{definition}
\label{def:layerwise_causal_set_pattern_nodelay}
Let $\Phi = (W^\ell,\Theta^\ell)_{\ell=1}^L$ be a feedforward SNN of depth $L$ with layer widths $N_0,\dots,N_L$, and fix input spike times $\vec t:=t^{(0)} \in \R^{N_0}$. 
For neuron $(\ell,i)$ (the $i$th neuron in the $\ell$th layer), 
denote the presynaptic spike time vector by 
\begin{equation*}
    t^{\ell-1}(\vec t) = (t^{\ell-1}_1(\vec t), \dots, t^{\ell-1}_{N_{\ell-1}}(\vec t)) \in \R^{N_{\ell-1}} . 
\end{equation*}
Following \eqref{eq:region_Rs} with input dimension $N_{\ell-1}$, weights $(w^\ell_{ij})_{j=1}^{N_{\ell-1}}$, and threshold $\theta^\ell_i$, 
the causal set of this neuron at a presynaptic vector $t^{\ell-1}(\vec t)$ is the unique subset 
$
    S^\ell_i 
    \subseteq [N_{\ell-1}]
$ 
such that $t^{\ell-1}(\vec t) \in R_{S^\ell_i 
}$. 

The \emph{causal pattern} of $\Phi$ at input $\vec t$ is the assignment 
\begin{equation*}
    \mathbf{S}(\vec t) \coloneqq \bigl(S^\ell_i(\vec t)\bigr)_{i\in[N_\ell], \ell\in[L]}.
\end{equation*}
Thus $\mathbf{S}$ is a map $\mathbf{S} \colon \R^{N_0} \to \bigtimes_{\ell\in[L]}\bigtimes_{i\in[N_\ell]} 2^{[N_{\ell-1}]}$. 

We define the \emph{region count} of $\Phi$ to be the number of distinct causal patterns realized by inputs in $\R^{N_0}$:  
\begin{equation*}
    R(\Phi) \coloneqq \bigl \lvert \mathbf{S}(\R^{N_0})\bigr \rvert.
\end{equation*}
Furthermore, for fixed layer widths and depth, we define the \emph{maximal region count} to be the maximum of the region count over all possible choices of weights and thresholds: 
\begin{equation*}
    R_{\max}(N_0, \dots, N_L) \coloneqq \max_{\Phi} R(\Phi). 
\end{equation*}
\end{definition}

In analogy with fixing an activation pattern in a ReLU network, the causal pattern $\mathbf{S} = (S_{i}^{\ell})_{i,\ell}$ serves as a combinatorial descriptor of the network’s computation. 
Specifically, it records, layer by layer and neuron by neuron, which presynaptic spikes arrive early enough to influence the firing time of each postsynaptic neuron. In this way, it encodes the causal structure governing the propagation of information through the network. 
This reflects the fact that, unlike neurons in a conventional feedforward network, neurons in an SNN do not process an entire input vector at once. Instead, they integrate incoming spikes over time and fire as soon as their membrane potential threshold is reached.

\begin{remark}
    For a single neuron, the causal set is determined by the feasibility inequalities \eqref{eq:ineqs}. 
    Its causal regions are described as unions of regions of the arrangement $\cA_{\rm TTFS}$. 
    For network, we work with a causal pattern $\mathbf{S}$. 
    Once a full causal pattern is fixed, every neuron uses a fixed causal set, and therefore every firing time becomes an affine function of the network's input $\vec t$. 
    Substituting these affine expressions recursively through the layers turns all consistency conditions for the pattern into linear inequalities in $\vec t$. 
    Hence the causal pattern region 
    \begin{equation*}
        R_{\mathbf{S}} \coloneq \{\vec t \in \R^{N_0} \colon \mathbf{S}(\vec t)=\mathbf{S}\}
\end{equation*}
    is cut out directly by affine halfspaces in the input space. 
    We formalize this next in Proposition~\ref{prop:convex_polyhedron_SPhi}. 
\end{remark}

\begin{proposition}\label{prop:convex_polyhedron_SPhi}
    Let $\Phi = (W^\ell,D^\ell,\Theta^\ell)_{\ell=1}^L$ be a feedforward SNN of depth $L$ with layers of widths $N_0,\dots,N_L$. 
    For every causal pattern $\mathbf{S}$, the corresponding region $R_{\mathbf{S}}$ is a convex polyhedron. 
    Moreover, on $R_{\mathbf{S}}$, each firing time $t_i^\ell(\vec t)$ is an affine function of the input $\vec t \in \R^{N_0}$. 
\end{proposition}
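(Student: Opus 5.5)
Fix a causal pattern $\mathbf{S}=(S^\ell_i)$. I will argue by induction on the layer index $\ell$ that there are affine maps $A^\ell\colon\R^{N_0}\to\R^{N_\ell}$ and a convex polyhedron $P^\ell\subseteq\R^{N_0}$ with the following property: for $\vec t\in P^\ell$, the causal sets of all neurons in layers $1,\dots,\ell$ agree with $\mathbf{S}$, and $t^\ell(\vec t)=A^\ell(\vec t)$. Moreover, $P^\ell$ is exactly the set of inputs whose causal sets in layers $1,\dots,\ell$ agree with $\mathbf{S}$. The base case is $P^0=\R^{N_0}$ with $A^0=\mathrm{id}$. Taking $\ell=L$ gives $R_{\mathbf{S}}=P^L$, and on this set every firing time is affine.

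\textbf{Inductive step.} Assume the claim for $\ell-1$, and fix $\vec t\in P^{\ell-1}$. By Proposition~\ref{cor:tv_equals_tvs} and the definition \eqref{eq:region_Rs}, neuron $(\ell,i)$ has causal set $S^\ell_i$ if and only if $t^{\ell-1}(\vec t)\in R_{S^\ell_i}$. This region is defined by the finitely many affine inequalities \eqref{eq:ineqs}, formed with weights $w^\ell_{ij}$ and threshold $\theta^\ell_i$. When the condition holds, $t^\ell_i=t^{S^\ell_i}(t^{\ell-1})$, which is affine in $t^{\ell-1}$. On $P^{\ell-1}$ we may substitute $t^{\ell-1}=A^{\ell-1}(\vec t)$. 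Each inequality $t^{S}(\cdot)>t_j$ or $t^S(\cdot)\le t_j$ composed with an affine map is again an affine strict or weak inequality in $\vec t$. So define
\[
P^\ell:=P^{\ell-1}\cap\bigcap_{i}(A^{\ell-1})^{-1}\bigl(R_{S^\ell_i}\bigr).
\]
Preimages of polyhedra under affine maps are polyhedra, and intersections of polyhedra are polyhedra, so $P^\ell$ is a polyhedron. Setting $A^\ell_i:=t^{S^\ell_i}\circ A^{\ell-1}$ gives the required affine map.

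\textbf{Delays.} The statement allows delays $D^\ell$, whereas the single-neuron analysis assumed zero delay. With delays, the potential is $\sum_j w_{ij}\sigma(t-t_j-d_{ij})$. Replacing $t_j$ by the shifted time $t_j+d_{ij}$ reduces to the zero-delay case. This shift is a translation, so both the causal region and the affine firing time remain affine in $t^{\ell-1}$. The same argument therefore goes through once I note that shifted inputs are an affine function of $t^{\ell-1}$, with a separate shift for each postsynaptic neuron.

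\textbf{Main obstacle.} The delicate point is the meaning of ``convex polyhedron'' given the mix of strict and weak inequalities in \eqref{eq:ineqs}. The sets $R_S$ are intersections of finitely many open and closed halfspaces, so they are convex but not necessarily closed. I would state explicitly that ``polyhedron'' here means a finite intersection of (open or closed) affine halfspaces, matching the single-neuron usage for $R_S$. Convexity is then immediate, and the closure is a closed polyhedron if one wants that version. Beyond this, the only care needed is to check that the equality $t^\ell=A^\ell(\vec t)$ holds on all of $P^\ell$, which is what allows the next layer's constraints to be rewritten purely in $\vec t$.
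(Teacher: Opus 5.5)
Your proposal is correct and follows the same layer-by-layer induction as the paper's proof. In both, you fix the pattern, use that presynaptic times are affine on the set realizing the pattern up to layer $\ell-1$, substitute into the affine firing-time formula and the feasibility inequalities \eqref{eq:ineqs}, and intersect. Your explicit treatment of delays, via the per-neuron shift $t_j\mapsto t_j+d_{ij}$ (which presupposes that causal sets record arrival times), and of the mixed strict/weak inequalities fills in two points the paper's proof leaves implicit; the paper ignores delays and calls the region ``(relatively) open.''
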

The proof is deferred to the Appendix~\ref{appendix:convex_polyhedron_SPhiproof}.

\section{Causal region complexity for shallow SNNs} 
\label{section:boundsshallowsnn}
In this section, we consider shallow SNNs with input dimension $N_0=d$ and a single hidden layer of width $N_1 = m$, and study the number of distinct causal-set tuples they can realize. 
We first derive general upper and lower bounds for arbitrary positive weights. 
These bounds already show that, for fixed input dimension $d$, the number of realizable causal regions grows polynomially with the hidden-layer width $m$, despite the exponentially many possible causal sets for each individual neuron. 
We then consider the more structured class of SNNs where all hidden neurons share the same weight vector, for which we obtain explicit region counts.

\subsection{Upper bounds} 
\label{sec:upperboundshallowSNN}

We begin with a simple general upper bound that we refine further below. 

\begin{proposition}
\label{prop:naive_bounds}
A single neuron with $d$ inputs admits $2^d-1$ nonempty causal sets. 
Consequently, for a shallow SNN with 
$m$ hidden neurons and input dimension $d$, 
the immediate upper and lower bounds are  
\begin{equation}
\label{eq:naive_bound_shallow}
    R_{\max}(d,m)\ \le\ (2^d-1)^m, 
\end{equation}
and 
\begin{equation*}
    R_{\max}(d,m)\ \ge\ 2^d-1. 
\end{equation*}
\end{proposition}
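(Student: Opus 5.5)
The proof splits into three parts: the count of causal sets for a single neuron, the upper bound, and the lower bound. All three follow from the single-neuron analysis of Section~\ref{section:single_neuron_ttfsmodel}.

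For the count of causal sets, recall that the causal set at $\vec t$ is $S=\{i\colon t_i<t_v(\vec t)\}$. It is nonempty, since $\theta_v>0$ forces $P_v(t_v)>0$, so at least one input must have arrived. Hence every causal set is one of the $2^d-1$ nonempty subsets of $[d]$. The remark on the number of regions shows conversely that each $R_S$ is nonempty, so all $2^d-1$ causal sets are actually realized.

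For the upper bound, take a shallow network with $L=1$ and $N_0=d$, $N_1=m$. By Definition~\ref{def:layerwise_causal_set_pattern_nodelay}, a causal pattern is a tuple $(S^1_1,\dots,S^1_m)$ in which each $S^1_i$ is the causal set of hidden neuron $i$ at the common input $\vec t$. By the first part, each $S^1_i$ lies in $2^{[d]}\setminus\{\emptyset\}$. The map $\mathbf S$ therefore takes values in a set of cardinality $(2^d-1)^m$, so $R(\Phi)\le(2^d-1)^m$ for every $\Phi$. Taking the maximum over $\Phi$ gives \eqref{eq:naive_bound_shallow}.

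For the lower bound, choose any admissible parameters, for instance all weights and thresholds equal to one, and look only at the first hidden neuron. Its causal set $S^1_1(\vec t)$ is a coordinate of $\mathbf S(\vec t)$, so distinct values of $S^1_1$ give distinct causal patterns. By the remark following Proposition~\ref{cor:tv_equals_tvs}, every nonempty $S\subseteq[d]$ is realized as $S^1_1(\vec t)$ for some $\vec t$. A concrete choice is $t_i=0$ for $i\in S$ and $t_j=c>\theta/W_S$ for $j\notin S$. Hence $|\mathbf S(\R^d)|\ge 2^d-1$, and so $R_{\max}(d,m)\ge 2^d-1$.

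No step poses a real obstacle. The only point needing care is nonemptiness of the causal set, which comes from $\theta_v>0$.
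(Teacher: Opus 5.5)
Your proof is correct and follows the paper's reasoning. The paper gives no separate proof and treats the proposition as immediate: the upper bound comes from counting tuples of nonempty subsets of $[d]$, and the lower bound from the Remark on the number of regions (same witness $t_i=0$ for $i\in S$, $t_j=c>\theta_v/W_S$ otherwise) applied to a single hidden neuron, exactly as you do.
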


The upper bound is typically very loose. 
Since the $m$ neurons share the same input $\vec t \in \R^d$ and their causal partitions are geometrically constrained, many tuples of causal sets cannot be realized simultaneously. 
The example below illustrates this. 

\begin{example}
    Let $d=2$ and consider a shallow SNN with $m=2$ hidden neurons. 
    For $r=1,2$, let neuron $r$ have positive weights $(w_1^{(r)}, w_2^{(r)}) \in \R^2_{>0}$ and threshold $\theta_r > 0$. 
    For an input $\vec t = (t_1, t_2) \in \R^2$, we let $\Delta \coloneqq t_2 - t_1$. 
    
    Each neuron $r$ admits the three causal sets $\{1\},\{2\},\{1,2\}$. Their causal regions can be expressed entirely in terms of $\Delta$ as  
    \begin{equation*}
        S_r = \begin{cases}
            \{1\}, & \Delta>\ \theta_r/w^{(r)}_1,\\
            \{2\}, & \Delta<-\theta_r/w^{(r)}_2,\\
            \{1,2\}, & -\theta_r/w^{(r)}_2 \leq \Delta \leq \theta_r/w^{(r)}_1.
            \end{cases}
    \end{equation*}
Thus, although the two neurons may have different weights and thresholds, their causal sets are determined by the same scalar variable $\Delta$. 
Consequently, not all pairs of causal sets can occur, regardless of the choice of parameters. 
For instance, the pair $\bigl(S_1(\vec t), S_2(\vec t)\bigr)=\bigl(\{1\},\{2\}\bigr)$ would require simultaneously $\Delta > \theta_1/w^{(1)}_1>0$ and $\Delta < -\theta_2/w^{(2)}_2<0$, which is impossible. 
Similarly, the pair $(\{2\}, \{1\})$ is also impossible. 
Thus, even though each neuron individually admits three causal sets, not all $3^2 = 9$ causal-set pairs are realizable. 
\end{example}

We next derive a sharper upper bound by considering the finite hyperplane arrangement $\cA$ whose regions refine the causal partition of the input space. 
Since the causal-set tuple is constant on each region of $\cA$, the number of realizable causal-set tuples is bounded by the number of regions of the arrangement.

For each neuron, consider the TTFS arrangement $\cA_{\rm TTFS}$ introduced in Section~\ref{sec:causal-hyperplanes}. 
We first count the number of hyperplanes in this arrangement. 
For each nonempty $S$, there are $d-\lvert S \rvert$ choices of $j\notin S$. 
Thus, the number of hyperplanes in the 
TTFS arrangement of a single neuron is 
\begin{equation}
\label{eq:N_d}
\kappa_d \coloneq \#\{H_{S,j}:\  j\notin S, \emptyset \neq S\subseteq[d]\} = \sum_{\emptyset\neq S\subseteq[d]}(d-|S|)=d(2^{d-1}-1) , 
\end{equation}
where the last equality follows from the standard binomial identities 
$\sum_{k=0}^d\binom{d}{k}=2^d$ and $\sum_{k=0}^d k\binom{d}{k}=d2^{d-1}$.

For a shallow SNN with $m$ hidden neurons and $d$ inputs, 
let $\cA$ denote the union of all TTFS arrangements associated with the $m$ neurons. 
Since each neuron contributes $\kappa_d$ hyperplanes, the arrangement $\cA$ contains at most $m\kappa_d$ hyperplanes. 
Recall from Section~\ref{sec:causal-hyperplanes} that each TTFS arrangement, and hence their union, can be essentialized to $T = \1^\bot \cong\mathbb{R}^{d-1}$. 
Combining this observation with the standard region bound for hyperplane arrangements yields the following upper bound.

\begin{proposition}\label{prop:upper_lower_bound_finite_Hsj}
Consider a shallow SNN with $m$ hidden neurons for input $\vec t \in \R^d$. Let
$\cA$ be the corresponding arrangement. 
Then, the causal set tuple $(S_1,\dots,S_m)$ is constant on each region of 
$\mathbb{R}^d\setminus \cA$. 
Consequently,
\begin{equation}
\label{eq:upper_region_count_refined}
R_{\max}(d,m) \leq \min\Biggl\{\sum_{k=0}^{d-1}\binom{m\kappa_d}{k}, (2^d-1)^m \Biggr\}
\end{equation}
where $\kappa_d = d(2^{d-1}-1)$.  
Moreover, 
\begin{enumerate}[(i)]
    \item ($m \to \infty$ and fixed $d$). Then, 
    \begin{equation}
    \label{eq:fixed_d_asympt}
        R_{\max}(d,m) = \mathcal{O}\bigl((m\kappa_d)^{d-1}\bigr) = \mathcal{O}\bigl(m^{d-1}\bigr).
    \end{equation}

    \item ($d \to \infty$ and fixed $m$). Then,
    \begin{equation}\label{eq:large_d_asympt}
        R_{\max}(d,m) \leq \min \Biggl\{ d\left(\frac{e\,m\kappa_d}{d-1}\right)^{d-1}, (2^d-1)^m \Biggr\} . 
    \end{equation}
    Since $\kappa_d = d(2^{d-1}-1) = \Theta(d2^d)$, this yields $R_{\max}(m,d) \leq 2^{\mathcal{O}(d)}$ for fixed $m$.
\end{enumerate}
\end{proposition}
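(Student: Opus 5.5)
The plan is to reduce everything to counting regions of a hyperplane arrangement in $\R^{d-1}$. First, for each hidden neuron $r\in[m]$, Lemma~\ref{lem:causal_feas_and_order_constant} applied to that neuron's own TTFS arrangement $\cA^{(r)}_{\rm TTFS}$ shows that its causal set $S_r$ is constant on each region of $\R^d\setminus\cA^{(r)}_{\rm TTFS}$. Let $\cA=\bigcup_r\cA^{(r)}_{\rm TTFS}$. Every region of $\R^d\setminus\cA$ is connected and avoids all hyperplanes of every $\cA^{(r)}_{\rm TTFS}$, so it lies in a single region of each of them. Hence the tuple $(S_1,\dots,S_m)$ is constant on it.

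To pass from ``constant on open regions'' to a count of realized patterns, we must also handle inputs lying on hyperplanes. Take any $\vec t$ with pattern $(S_1,\dots,S_m)$. For each $r$, $\vec t\in R_{S_r}$, and $R_{S_r}$ is a polyhedron defined by strict inequalities $t_v^{S_r}>t_i$ and weak inequalities $t_v^{S_r}\le t_j$. I would move $\vec t$ slightly into the interior. The easiest choice seems to be a small perturbation $\vec t+\varepsilon\vec u$, with $\vec u$ pointing strictly into every weak halfspace that is tight, simultaneously for all $r$.

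One direction that works for neuron $r$ is to increase the coordinates $t_j$, $j\notin S_r$, that are tight. This only changes coordinates outside $S_r$, so $t_v^{S_r}$ is unchanged and the inequalities become strict. Such coordinates, however, may lie inside $S_{r'}$ for another neuron $r'$, and there the move changes $t_v^{S_{r'}}$. This compatibility is the main obstacle. If it fails, I would fall back on the dual-cell description in Theorem~\ref{thm:ttfs_dual_complex_structure}: the causal set at a boundary point equals the ``$I$'' of its cell, and that cell is a face of the closure of the full-dimensional region $R_I^\circ$.

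The conclusion I want is that every realized tuple is already realized on some open region of $\cA$. Failing that, I would simply bound the number of patterns by the number of faces of $\cA$, which changes only constants. Either way, $R(\Phi)\le|\cC(\cA)|$.

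Next, by \eqref{eq:normal_HSj} every hyperplane of $\cA$ contains the direction $\1$. So $|\cC(\cA)|=|\cC(\cA^{\rm ess})|$, where $\cA^{\rm ess}$ is an arrangement of at most $m\kappa_d$ affine hyperplanes in $T\cong\R^{d-1}$, with $\kappa_d$ as computed in \eqref{eq:N_d}. Zaslavsky's bound, or the general-position count, then gives $|\cC|\le\sum_{k=0}^{d-1}\binom{m\kappa_d}{k}$. Taking the minimum with Proposition~\ref{prop:naive_bounds} yields \eqref{eq:upper_region_count_refined}.

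For the asymptotics:

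(i) For fixed $d$ and $m\to\infty$, the sum is dominated by its top term, giving $\binom{m\kappa_d}{d-1}\le(m\kappa_d)^{d-1}$, which is $\mathcal O(m^{d-1})$.

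(ii) Use $\binom{N}{k}\le(eN/k)^k$ and the fact that $(eN/k)^k$ is increasing in $k$ for $k\le N$. This bounds each of the $d$ terms by $(em\kappa_d/(d-1))^{d-1}$, giving the factor $d$. Substituting $\kappa_d=\Theta(d2^d)$ gives $(\Theta(m2^d))^{d-1}$. For fixed $m$, the other branch $(2^d-1)^m\le2^{md}=2^{\mathcal O(d)}$ gives the final claim.
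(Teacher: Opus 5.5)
Your overall route is the paper's: refine the causal partition by the union $\cA$ of the $m$ TTFS arrangements, essentialize along $\1$, bound the regions of at most $m\kappa_d$ hyperplanes in $\R^{d-1}$, take the minimum with Proposition~\ref{prop:naive_bounds}, and do the binomial asymptotics. Parts (i) and (ii) are fine as written.

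The gap is the step you flag yourself and then leave open. You need to pass from ``the tuple is constant on every open region of $\cA$'' to ``every tuple realized anywhere, including on hyperplanes of $\cA$, is realized on some open region''. Neither fallback closes it. The dual-cell description of Theorem~\ref{thm:ttfs_dual_complex_structure} works one neuron at a time. It places a boundary point in the closure of each $(R^{(r)}_{S_r})^\circ$ separately, but that gives no common direction into $\bigcap_r (R^{(r)}_{S_r})^\circ$. For instance, the origin lies in the closure of the interiors of $\{x_1\ge0\}$ and $\{x_1\le0\}$, yet those interiors are disjoint. Counting faces does not give $R(\Phi)\le|\cC(\cA)|$, as you assert ``either way''. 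It only bounds $R(\Phi)$ by the total number of faces of $\cA$, which still yields the $\mathcal O(m^{d-1})$ rate but not the exact inequality \eqref{eq:upper_region_count_refined}. The paper's proof passes over this point without comment, so you are right that it needs an argument.

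The missing idea is a single perturbation that serves all neurons at once: scale the input. Suppose $\vec t$ realizes $(S_1,\dots,S_m)$. Let $\tau_r=t^{S_r}_{v,r}(\vec t)$ be the firing time of neuron $r$, and let $W^{(r)}_{S_r}=\sum_{i\in S_r}w^{(r)}_i$. For $\vec t'=(1+\varepsilon)\vec t$ with $\varepsilon>0$, a direct computation gives, for every $r$ and every $k\in[d]$,
\[
t^{S_r}_{v,r}(\vec t')-t'_k=(1+\varepsilon)(\tau_r-t_k)-\varepsilon\,\frac{\theta_r}{W^{(r)}_{S_r}}.
\]
If $k\notin S_r$, then $\tau_r\le t_k$. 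The right-hand side is then at most $-\varepsilon\theta_r/W^{(r)}_{S_r}<0$, so every weak inequality in \eqref{eq:ineqs} becomes strict. If $k\in S_r$, then $\tau_r>t_k$, so the right-hand side stays positive for all sufficiently small $\varepsilon$. Equivalently, scaling the input by $1+\varepsilon$ amounts to lowering each threshold to $\theta_r/(1+\varepsilon)$, which makes every neuron fire strictly earlier while keeping its causal set.

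Thus all defining inequalities of $\bigcap_r R^{(r)}_{S_r}$ hold strictly at $\vec t'$, and hence on a ball around $\vec t'$. This ball meets $\R^d\setminus\bigcup_{H\in\cA}H$, so the tuple is attained on some region of $\cA$. This gives $R(\Phi)\le|\cC(\cA)|$, and with it the rest of your argument goes through.
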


The causal-set tuple is constant on each region of the arrangement $\cA$, and therefore the number of distinct causal tuples is bounded by the number of regions of the arrangement. 
The result then follows from the classical bound for the number of regions cut out by $m \kappa_d$ hyperplanes in $\R^{d-1}$; see \cite{zaslavsky1975}. The asymptotic bounds are direct.  
The full proof is given in the Appendix~\ref{subsec:appendix_proof_Attfs_ub}. 

For fixed $d$, the bound in \eqref{eq:fixed_d_asympt} grows polynomially in $m$ (of degree at most $d-1$), whereas the naive bound $(2^d-1)^m$ in \eqref{eq:naive_bound_shallow} grows exponentially in $m$. 
On the other hand, for small $m$ and large $d$, the bound 
obtained from the union of arrangements can exceed the naive bound (e.g., when $d=3$ and $m=1$).  
Even in the fixed-$d$ regime, however, the arrangement bound need not give the exact number of realizable causal tuples. 
Indeed, we count all the regions from a large arrangement, while many regions can have the same causal set tuple; see Figure~\ref{fig:causal_regions_d2_d3}.

\subsection{Lower bounds} 
We derive a lower bound on the maximum number of realizable causal set tuples. 
We give a constructive argument showing that the polynomial growth in the width $m$ predicted by the upper bound is asymptotically attainable. 
Further below, in Section~\ref{sec:lower-bounds-shallow-SNN-shared-weights}, we consider the more structured setting in which all neurons share the same weight vector. In this regime, the causal-region partition has sufficient structure to allow an exact count. 
The resulting bound recovers the same asymptotic growth in $m$ for fixed $d$, while also revealing exponential growth in $d$ for fixed $m$.

The next result gives an asymptotic lower bound. 

\begin{proposition}
\label{prop:lower_bound_asymptotic}
Consider a shallow network with $m$ hidden neurons and $d \geq 2$ input neurons. There exist parameters such that
\begin{equation*}
    R_{\max}(d,m) \geq \binom{m-1}{d-1} = \Omega(m^{d-1}), \quad (d \text{ fixed}).
\end{equation*}
\end{proposition}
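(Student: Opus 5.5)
The plan is to embed a generic affine arrangement of $m$ hyperplanes in $\R^{d-1}$ into the causal partition, one hyperplane per neuron, and then to invoke Zaslavsky's count of bounded regions. We arrange the network so that, on a suitable open set $U\subseteq\R^d$, every hidden neuron $r$ has causal set either $[d-1]$ or $[d]$. The switch between the two occurs exactly across the single causal wall $\mathcal H_{[d-1],d}(w^{(r)},\theta_r)$ from \eqref{eq:ttfs_causal_wall}. Then the causal tuple $(S_1,\dots,S_m)$ on $U$ is just the sign vector of $\vec t$ with respect to these $m$ walls. Distinct cells of this arrangement meeting $U$ therefore give distinct causal tuples. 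A generic arrangement of $m$ affine hyperplanes in $\R^{d-1}$ has exactly $\binom{m-1}{d-1}$ bounded regions \cite{zaslavsky1975}, and it suffices that all of these lie inside $U$.

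\textbf{Reduction to $\R^{d-1}$.} By translation invariance along $\1$ (Section~\ref{sec:causal-hyperplanes}), we normalize $t_d=0$ and work with $x=(t_1,\dots,t_{d-1})\in\R^{d-1}$. With $S=[d-1]$ and $j=d$, the wall of neuron $r$ becomes the affine hyperplane $\sum_{i<d} w^{(r)}_i x_i = -\theta_r$, whose normal $w^{(r)}_{[d-1]}$ lies in the open positive orthant. The key point is that any generic normals in the positive orthant are allowed, and the offsets are controlled freely by $\theta_r$. Fix $c>0$ and the base point $p=-c\1\in\R^{d-1}$. Pick generic positive normals $w^{(r)}_{[d-1]}$ and an arbitrary $w^{(r)}_d>0$. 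Pick generic small perturbations $\delta_r\in\R^{d-1}$, scaled by a parameter $\varepsilon>0$, and set $\theta_r\coloneqq -\langle w^{(r)}_{[d-1]},p+\varepsilon\delta_r\rangle$. For $\varepsilon$ small this is $\approx c\,W^{(r)}_{[d-1]}>0$, so the thresholds are admissible. The resulting $m$ hyperplanes are in general position. All their intersection points, and hence all bounded regions, lie in a ball $B$ of radius $O(\varepsilon)$ around $p$.

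\textbf{Feasibility check on $B$.} This is the step needing the most care. For $x\in B$ all $t_i$ with $i<d$ lie in $(-c-O(\varepsilon),-c+O(\varepsilon))$, so they are negative and nearly equal, while $t_d=0$. The earliest-spikes-prefix remark after Proposition~\ref{cor:tv_equals_tvs} says the causal set is a prefix of the spike order. Since $t_d$ is last, the only candidates of size at least $d-1$ are $[d-1]$ and $[d]$. We must exclude smaller prefixes. For the prefix $[d-1]$, the inequality $t^{[d-1]}_{r}(\vec t)>t_i$ for $i<d$ reads $\theta_r+\sum_{k<d}w_k^{(r)}(t_k-t_i)>0$. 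This holds because $\theta_r\approx cW>0$ while the spread $|t_k-t_i|=O(\varepsilon)$. The same estimate shows that no proper sub-prefix of $[d-1]$ is feasible: its candidate firing time exceeds the next spike time by roughly $\theta_r/W_S-O(\varepsilon)>0$. Hence on $B$ the causal set of neuron $r$ is $[d-1]$ exactly when $t^{[d-1]}_r(\vec t)\le t_d=0$, and $[d]$ otherwise. This is precisely the sign of $x$ relative to the $r$-th hyperplane. The main obstacle is making the $O(\varepsilon)$ bookkeeping uniform over the $m$ neurons. Because there are finitely many neurons with fixed normals, a single small enough $\varepsilon$ works.

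\textbf{Conclusion.} Each of the $\binom{m-1}{d-1}$ bounded regions lies in $B$. Their sign vectors are pairwise distinct, so they realize pairwise distinct causal tuples. This gives $R_{\max}(d,m)\ge\binom{m-1}{d-1}$, and for fixed $d$ this is $\Omega(m^{d-1})$. If one prefers to avoid citing Zaslavsky, one can instead count the bounded regions of a generic arrangement by induction on $m$ via deletion–restriction. The plan above uses the classical formula directly.
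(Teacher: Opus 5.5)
Your proposal is correct and follows essentially the same route as the paper. You localize near a point where $t_1,\dots,t_{d-1}$ nearly coincide and precede $t_d$, so that each neuron's causal set is $[d-1]$ or $[d]$ according to the single wall $\mathcal H_{[d-1],d}$, and then place these $m$ walls in general position and count the $\binom{m-1}{d-1}$ bounded regions. Your choice of offsets, fixed generic normals with perturbations of size $O(\varepsilon)$ around a common base point, gives a more explicit justification than the paper's brief remark that all bounded regions can be placed inside the local box.
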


The proof is constructive. The idea is that in an open region of input space, each hidden neuron contributes a single hyperplane 
distinguishing two causal sets. 
By choosing these $m$ hyperplanes in general position inside $T\cong \R^{d-1}$, one realizes at least as many causal regions as the number of bounded regions of a generic arrangement of $m$ hyperplanes in $\R^{d-1}$, that is $\binom{m-1}{d-1}$. The full proof is given in Appendix~\ref{subsec:appendix_proof_asymp_lb}. 

Propositions~\ref{prop:upper_lower_bound_finite_Hsj} and~\ref{prop:lower_bound_asymptotic} show that, for fixed input dimension $d$, 
the maximum number of causal regions of a shallow network has a tight asymptotic behavior $\Theta(m^{d-1})$ and thus grows polynomially in the width $m$. 
Obtaining an exact formula will require exploiting the precise structure of the causal regions beyond what is visible from the hyperplanes alone.

\subsection{Shallow SNNs with shared weights}
\label{sec:lower-bounds-shallow-SNN-shared-weights}

We now consider a structured class of SNNs that permits exact enumeration of causal regions, and yields sharper general lower bounds. 
Specifically, we study the regime in which all hidden neurons share the same positive weight vector. 
This shared-weight structure imposes a nestedness property of the causal sets, which leads to explicit counting formulas. 
These allow us to obtain improved lower bounds for shallow networks and upper bounds for deep networks with shared weights. 

\begin{lemma}
\label{lem:nested_same_weights}
Let $d\ge2$. 
Then, for every input $\vec t \in \mathbb R^d$, the map $\theta\mapsto t_v(\vec t;\theta)$ from threshold parameter to firing time is strictly increasing. Consequently, for any $0<\theta_1<\theta_2$, we have
\begin{equation*}
t_v(\vec t;\theta_1)\ < \ t_v(\vec t;\theta_2)
\qquad\Longrightarrow\qquad
S(\theta_1)\subseteq S(\theta_2).
\end{equation*}
\end{lemma}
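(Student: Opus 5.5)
Fix $\vec t\in\R^d$ and the common weights $w_1,\dots,w_d>0$, and regard the potential $P_v(t;\vec t)=\sum_{i=1}^d w_i\sigma(t-t_i)$ as a function of $t$ alone. It does not depend on $\theta$. For each threshold, $t_v(\vec t;\theta)=\min\{t:P_v(t)=\theta\}$. The plan is to derive strict monotonicity in $\theta$ directly from the shape of $P_v$, and then read off nestedness from the definition of the causal set, $S(\theta)=\{i: t_i<t_v(\vec t;\theta)\}$.

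\emph{Step 1: structure of $P_v$.} Let $t_{\min}=\min_i t_i$. Then $P_v(t)=0$ for $t\le t_{\min}$. On $[t_{\min},\infty)$ the function $P_v$ is continuous and strictly increasing, since for $t>t_{\min}$ its right derivative is at least $w_{i^*}>0$, where $i^*$ attains the minimum. Hence $P_v$ restricted to $[t_{\min},\infty)$ is a continuous bijection onto $[0,\infty)$. For every $\theta>0$, the firing time $t_v(\vec t;\theta)$ is therefore the unique preimage $P_v|_{[t_{\min},\infty)}^{-1}(\theta)$, and it lies strictly above $t_{\min}$.

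\emph{Step 2: strict monotonicity.} The inverse of a strictly increasing continuous bijection is itself strictly increasing. So $0<\theta_1<\theta_2$ gives $t_v(\vec t;\theta_1)<t_v(\vec t;\theta_2)$. To argue this directly: if $\tau_2:=t_v(\vec t;\theta_2)\le\tau_1:=t_v(\vec t;\theta_1)$, then monotonicity of $P_v$ yields $\theta_2=P_v(\tau_2)\le P_v(\tau_1)=\theta_1$, which is a contradiction.

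\emph{Step 3: nesting.} Suppose $i\in S(\theta_1)$. Then $t_i<t_v(\vec t;\theta_1)<t_v(\vec t;\theta_2)$, so $i\in S(\theta_2)$, and hence $S(\theta_1)\subseteq S(\theta_2)$. The argument is elementary throughout. The only point needing care is Step 1: strictness must hold on the whole relevant range, and not merely where $P_v$ happens to be positive. The hypothesis $d\ge2$ plays no essential role; the argument works for any $d\ge1$.

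An alternative route uses Proposition~\ref{cor:tv_equals_tvs}: $t_v(\vec t;\theta)=\min_S\bigl(\theta+\sum_{i\in S}w_it_i\bigr)/W_S$ is a minimum of finitely many strictly increasing affine functions of $\theta$, and is therefore strictly increasing. I would mention this as a one-line alternative.
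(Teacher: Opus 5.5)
Your proof is correct and follows the same route as the paper: the potential $P_v$ does not depend on $\theta$ and is monotone, so the first hitting time increases with $\theta$, and the nesting then follows directly from $S(\theta)=\{i: t_i<t_v(\vec t;\theta)\}$. Your version is more careful than the paper's, which simply asserts that a larger threshold gives a later hitting time. Note also that your contradiction argument in Step~2 uses only that $P_v$ is nondecreasing and that $P_v(t_v(\vec t;\theta))=\theta$, so the strictness you establish in Step~1 is not actually needed for the monotonicity.
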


\begin{proof}
For fixed $\vec t$, recall that the membrane potential $P_v(t)$ of a neuron $v$ in \eqref{eq:membrane_potential} is given as 
\begin{equation*}
    P_v(t) = \sum_{i=1}^d w_i \, \sigma(t - t_i),
\end{equation*}
which is continuous and increasing in $t$ because all $w_i>0$. Increasing the firing threshold $\theta$ increases the first hitting time $t_v(\vec t;\theta)=\min\{t\in \R: P_v(t) = \theta\}$. 
If $t_v(\theta_1) < t_v(\theta_2)$, then any index $i$ with $t_i<t_v(\theta_1)$ also satisfies
$t_i<t_v(\theta_2)$, hence $S(\theta_1)\subseteq S(\theta_2)$.
\end{proof}

Lemma~\ref{lem:nested_same_weights} is the basic structural fact behind the next result. It shows that, under shared weights, the causal-set tuple of a shallow SNN must form a chain of nested subsets. 

We illustrate this for $m=2$ and $d=3$ in the following example. 

\begin{example}[$m=2, d=3$]
\label{example:19_same_weights_d3}
Let $d=3$ and consider two hidden neurons with the same positive weight vector $\vec{w} \in \mathbb R_{>0}^3$ and thresholds $0<\theta_1<\theta_2$. Let $S_r$ be the causal set of neuron $r$ at input $\vec t$. 
Then, for every $\vec t$, Lemma~\ref{lem:nested_same_weights} gives $S_1 \subseteq S_2$. 
In particular, the number of distinct nonempty causal-set tuples $(S_1,S_2)$ that can occur 
is at most the number of nonempty pairs $(S_1,S_2) \in 2^{[d]}\times 2^{[d]}$ with $S_1\subseteq S_2$, which equals $19$, namely:

\medskip
\noindent\textbf{(i) $|S_2|=1$ (3 pairs).} $(\{1\},\{1\}), (\{2\},\{2\}), (\{3\},\{3\}).$

\medskip
\noindent\textbf{(ii) $|S_2|=2$ (9 pairs).}
\begin{align*} (\{1\},\{1,2\}), (\{2\},\{1,2\}), (\{1,2\},\{1,2\}), \\
(\{1\},\{1,3\}), (\{3\},\{1,3\}), (\{1,3\},\{1,3\}), \\
(\{2\},\{2,3\}), (\{3\},\{2,3\}), (\{2,3\},\{2,3\}). 
\end{align*} 

\medskip
\noindent\textbf{(iii) $|S_2|=3$ (7 pairs).} 
\begin{align*}
    & (\{1\},\{1,2,3\}), (\{2\},\{1,2,3\}), (\{3\},\{1,2,3\}), \\
    & (\{1,2\},\{1,2,3\}), (\{1,3\},\{1,2,3\}), (\{2,3\},\{1,2,3\}) \\
    & (\{1,2,3\}, \{1,2,3\}).
\end{align*}
\end{example}

The number of nested sequences can be given explicitly as follows. 
\begin{lemma}
\label{lem:nested-squences}
The number of nondecreasing sequences of nonempty subsets of $[d]$ is 
    \begin{equation}
    \label{eq:counting_chains}
        \sum_{k=1}^{d} \binom{d}{k}\,\bigl( m^{k}-(m-1)^{k}\bigr) = (m+1)^d - m^d.
    \end{equation}
\end{lemma}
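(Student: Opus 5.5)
The plan is to count chains $S_1\subseteq S_2\subseteq\cdots\subseteq S_m$ of nonempty subsets of $[d]$ through a bijection with functions on $[d]$. This is the setting of Lemma~\ref{lem:nested_same_weights}, with thresholds ordered increasingly. Since the chain is nested, every $S_r$ is nonempty if and only if $S_1\neq\emptyset$. So I first drop the nonemptiness condition and count all chains of possibly empty subsets, and then subtract those with $S_1=\emptyset$.

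\textbf{Step 1 (bijection).} To a chain $(S_1,\dots,S_m)$ I associate the \emph{entry time} map $f\colon[d]\to\{1,\dots,m+1\}$. Here $f(i)$ is the smallest $r$ with $i\in S_r$, and $f(i)=m+1$ if $i\notin S_m$. Conversely, given $f$, set $S_r=\{i\colon f(i)\le r\}$. This is nested, and the two maps are mutually inverse. Hence there are exactly $(m+1)^d$ chains of possibly empty subsets.

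\textbf{Step 2 (nonemptiness).} The condition $S_1\neq\emptyset$ means that $f$ takes the value $1$ at least once. The maps avoiding the value $1$ are the maps $[d]\to\{2,\dots,m+1\}$, and there are $m^d$ of them. This gives the closed form $(m+1)^d-m^d$.

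\textbf{Step 3 (the sum form).} I refine the count by the top set $J=S_m$ with $|J|=k$. On $J$ the map $f$ takes values in $[m]$ and must hit $1$, which gives $m^k-(m-1)^k$ choices. Off $J$ the map is forced to equal $m+1$. The case $k=0$ contributes nothing, and $J$ can be chosen in $\binom{d}{k}$ ways, so the total is the stated sum. As an independent consistency check, the binomial theorem gives $\sum_{k=0}^d\binom dk m^k=(m+1)^d$ and $\sum_{k=0}^d\binom dk(m-1)^k=m^d$. In the difference the $k=0$ terms cancel, which confirms the identity. For $m=2$ and $d=3$ the formula gives $27-8=19$, matching Example~\ref{example:19_same_weights_d3}.

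I do not expect a real obstacle. The only point needing care is the correct boundary convention in the bijection, namely the value $m+1$ for indices never entering the chain. I also need to note that nestedness reduces nonemptiness of all $S_r$ to nonemptiness of $S_1$.
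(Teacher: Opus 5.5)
Your proof is correct and follows essentially the same route as the paper: both encode a chain by the entry index of each element and subtract the chains with $S_1=\emptyset$, and your Step~3 is exactly the paper's argument (fix $S_m$ with $|S_m|=k$, count $m^k-(m-1)^k$, sum, apply the binomial theorem). The only difference is that your sentinel value $m+1$ in Step~1 absorbs the choice of $S_m$ and yields the closed form $(m+1)^d-m^d$ directly, without needing the binomial theorem.
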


Lemmas \ref{lem:nested_same_weights} and \ref{lem:nested-squences} immediately yield an upper bound on the number of realizable causal-set tuples in shared-weights shallow SNNs. 
We show that this upper bound is attainable, which yields the following result, giving the exact number of the maximum of causal-set tuples realizable in shallow SNNs. 

\begin{proposition} 
\label{prop:chain_bound_general_dm}
Let $d\ge 2$ and $m\ge 1$. Consider a shallow SNN with input dimension $d$ and $m$ hidden neurons sharing the same positive weight vector $\vec w \in \R_{\geq 0}^d$ and having pairwise distinct thresholds $0<\theta_1<\theta_2<\cdots<\theta_m$. 
For an input $\vec t\in\R^d$, let 
$S_r \coloneqq S(\vec t;\theta_r)\subseteq[d]$ 
denote the nonempty causal set of hidden neuron $r$. 
Then the maximum number of distinct realizable causal-set tuples $(S_1, \dots, S_m)$ is 
\begin{equation}
\label{eq:chain_count}
       R_{\max}^{\mathrm{shared}}(d,m) 
      = 
        (m+1)^d - m^d.
    \end{equation}
\end{proposition}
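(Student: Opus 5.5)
The proof splits into an upper bound and a matching construction. Both rest on the nestedness of Lemma~\ref{lem:nested_same_weights}. For the upper bound, fix the shared weights and thresholds $0<\theta_1<\cdots<\theta_m$. For each input $\vec t$, Lemma~\ref{lem:nested_same_weights} gives $t_v(\vec t;\theta_1)<\cdots<t_v(\vec t;\theta_m)$ and hence $S_1\subseteq S_2\subseteq\cdots\subseteq S_m$, where every $S_r$ is nonempty. So every realized tuple is a chain of nonempty subsets of $[d]$.

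To count these chains, I encode each chain by the map $[d]\to\{1,\dots,m,\infty\}$ sending $i$ to the first index $r$ with $i\in S_r$, and to $\infty$ if $i$ lies in no $S_r$. This is a bijection between all chains $S_1\subseteq\cdots\subseteq S_m$ and the $(m+1)^d$ such maps. The requirement that all $S_r$ be nonempty is equivalent to $S_1\neq\emptyset$, i.e.\ to some element being sent to $1$. Removing the $m^d$ maps that avoid the value $1$ leaves $(m+1)^d-m^d$ chains, in agreement with Lemma~\ref{lem:nested-squences}. This proves the inequality $\le$.

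For the lower bound, I will show that in fact \emph{every} choice of positive shared weights and distinct thresholds realizes every such chain, so the maximum is attained. Fix a chain and set $S_0=\emptyset$ and $g_r=S_r\setminus S_{r-1}$; note $g_1=S_1\neq\emptyset$. I place the inputs group by group. All $i\in g_1$ get $t_i=0$, and $\tau_1$ is the firing time of neuron $1$ driven by $S_1$ alone, which is explicitly $\tau_1=\theta_1/W_{S_1}>0$. Inductively, suppose $S_{r-1}$ has been placed and neuron $r-1$ fires at $\tau_{r-1}$ with causal set exactly $S_{r-1}$. Let $\sigma_r$ be the firing time of neuron $r$ when it is driven only by the inputs in $S_{r-1}$. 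By the strict monotonicity in Lemma~\ref{lem:nested_same_weights}, $\sigma_r>\tau_{r-1}$. If $g_r=\emptyset$, I set $\tau_r=\sigma_r$. Otherwise I place every $i\in g_r$ at a common time $s_r\in(\tau_{r-1},\sigma_r)$ and let $\tau_r$ be the resulting firing time of neuron $r$. Then $\tau_r>s_r$, because the potential of neuron $r$ just after $s_r$ is still below $\theta_r$ (as $s_r<\sigma_r$). Finally, every $j\notin S_m$ is placed at some time strictly greater than $\tau_m$.

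It remains to check, by induction, that each neuron $r$ sees exactly $S_r$ strictly before it fires. Every later group is placed after $\tau_{r-1}$, so neurons $1,\dots,r-1$, which fire by time $\tau_{r-1}$, are unaffected by it. Neuron $r$ receives all of $S_r$ strictly before $\tau_r$, and every input outside $S_r$ arrives at a time $s_{r'}$ with $r'>r$, or after $\tau_m$. Since $s_{r'}>\tau_{r'-1}\ge\tau_r$ by monotonicity, each such input arrives at or after $\tau_r$, so it is excluded from the causal set, consistent with the strict inequality $t_i<t_v$ in its definition.

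The step needing the most care is this bookkeeping. One has to check that inserting a new group does not alter the firing times of earlier neurons, and that the strict inequalities $s_r<\sigma_r$ and $s_r>\tau_{r-1}$ are always simultaneously satisfiable. Both facts follow from the continuity and strict monotonicity of the potential once the first spike has arrived.
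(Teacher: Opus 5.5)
Your proof is correct, and the upper bound is the paper's own argument: nestedness from Lemma~\ref{lem:nested_same_weights} plus a chain count by first-entry indices. Your encoding by maps $[d]\to\{1,\dots,m,\infty\}$ does in one step what the paper does with the sum $\sum_k\binom{d}{k}(m^k-(m-1)^k)$ and the binomial theorem.

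The attainability half takes a different route.
\begin{itemize}
\item \emph{The paper's route.} It orders all of $[d]$ by a permutation $\pi$ compatible with the entry indices, so the chain becomes a weakly increasing prefix-length tuple $q_r=|S_r|$. It then invokes Lemma~\ref{lemma:single_layer_local_braid_cone_attainability}. That lemma rewrites feasibility of a length-$a$ prefix as $A_a<\theta\le A_{a+1}$, where $A_a$ is the accumulated potential at the successive arrivals. It chooses levels $A_a$ interleaving the thresholds and then solves for strictly increasing spike times.
\item \emph{Your route.} You work directly in time with the single potential $P$ shared by all neurons. Placing $g_r$ at $s_r\in(\tau_{r-1},\sigma_r)$ is exactly the condition $\theta_{r-1}<P(s_r)<\theta_r$. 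This is the same interleaving principle, but you obtain it from continuity and strict monotonicity of $P$ alone, without computing the levels.
\end{itemize}

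What each buys:
\begin{itemize}
\item Your construction is shorter and self-contained. It suffices here, because realizability only requires one input.
\item The paper's lemma gives a more general statement: every weakly increasing prefix-length tuple, for any number $K$ of presynaptic neurons, is realized on an open set inside any prescribed ordering cone $\mathcal{B}_\pi$. The paper reuses this in the deep shared-weight discussion.
\end{itemize}

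Nothing is lost by tying each group to a common time. All your defining inequalities are strict: $\tau_{r-1}<s_r<\tau_r$, and the inputs outside $S_m$ arrive strictly after $\tau_m$. Since firing times depend continuously on $\vec t$, the same chain is realized on a whole neighborhood of your point.
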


The full proof is presented in Appendix~\ref{appendix:proof_shallowSNN_chainbound}.

\begin{remark}
    The nesting of the causal sets has a direct geometric interpretation in the polyhedral picture of Section~\ref{subsec:ttfs_polytope}.  
    Since all hidden neurons share the same weight vector, the coefficients $\vec a_S$ are identical across neurons, while only the offsets $b_S = \theta_r/W_S$ vary with the threshold $\theta_r$. 
    Thus, increasing $\theta$ translates each causal boundary parallel to itself. 
    In the $d=3$ example illustrated in Figure~\ref{fig:ttfs_polytope_c}, the central region $R_{\{1,2,3\}}$ expands as $\theta$ increases, while the six unbounded boundaries move outward without changing directions. 
    Hence, for $\theta_1 < \theta_2$, the corresponding causal partitions form nested, scaled copies of one another. 
    This reflects the inclusion $S_1 \subseteq S_2$. 
\end{remark}

\subsection{Interpretation} 

For a single-hidden-layer SNN with input dimension $d$ and $m$ hidden neurons, Propositions~\ref{prop:upper_lower_bound_finite_Hsj} and \ref{prop:lower_bound_asymptotic} imply that, for fixed $d$,  
$$
R_{\max}(d,m) = \Theta(m^{d-1}) \qquad \text{as $m\to\infty$}. 
$$
On the other hand, Proposition~\ref{prop:chain_bound_general_dm} gives, for fixed $m$,  
$$
R_{\max}(d,m) \geq 
R_{\max}^{\mathrm{shared}}(d,m) = 
(m+1)^d - m^d = (1-o(1))(m+1)^d
\qquad \text{as $d\to\infty$}. 
$$

For comparison, consider a single-hidden-layer ReLU-ANN with $d$ inputs and $m$ hidden neurons. 
The classical hyperplane-arrangement formula gives $R_{\max}^{\rm ReLU}(d,m) 
    = \sum_{j=0}^{d}\binom{m}{j}$ (see, e.g., \cite{pascanu2013responseregions}).  
Consequently, for fixed $d$, 
$$
R_{\max}^{\rm ReLU}(d,m) = \Theta(m^d)\qquad  \text{as $m\to\infty$}, 
$$
whereas for fixed $m$ and $d\geq m$,  
$$
R_{\max}^{\rm ReLU}(d,m) = 2^m . 
$$

Thus, the two models have qualitatively different scaling in the two asymptotic regimes. For fixed input dimension $d$, 
the maximum number of regions of a shallow ReLU-ANN grows as $m^d$, which is one power of $m$ faster than the $\Theta(m^{d-1})$ growth of realizable causal-set tuples in a shallow SNN. 
In contrast, 
for fixed width $m$, the ReLU region count saturates at $2^m$ once $d\geq m$, whereas the SNN admits at least 
$(m+1)^d - m^d = (1-o(1))(m+1)^d$ distinct causal-set tuples. 
Hence the SNN lower bound grows exponentially with the input dimension~$d$.

It is also worth noting that arbitrary positive and shared weight SNNs have rather different causal geometry, despite having the same asymptotic dependence on width. With arbitrary positive weights, different neurons can introduce causal boundaries with different orientations, allowing more freedom in how their causal partitions intersect. Under shared weights, by contrast, different neurons have same boundary orientations and varying the thresholds produces the nested structure described above. Thus, the additional geometric freedom provided by neuron-specific weights may affect the exact finite-width count and quantifying this gap would be an interesting direction for future work. 
Figure~\ref{fig:snn_regions_d3L2L3} illustrates this difference in causal region geometry for a simple case of input dimension $d=3$.

\section{Causal region complexity for deep SNNs}
\label{sec:deepSNN}

In this section, we give upper and lower bounds on the maximum number of distinct causal patterns realizable by deep SNNs. 
We first establish a general upper bound for arbitrary positive weights. 
Then we give a constructive lower bound that grows exponentially with depth. 
We further consider the shared weight regime, for which we obtain sharper upper bound.

Consider a feedforward SNN of depth $L$ with layer widths $N_0=d,\; N_1,\dots,N_L$. 
Given an input spike-time vector $\vec t^{(0)} \in \R^d$, 
layer $\ell$ produces the spike-time vector  
\begin{equation*}
    \vec t^{(\ell)}(\vec t^{(0)}) = \bigl(t^{(\ell)}_1(\vec t^{(0)}),\dots,t^{(\ell)}_{N_\ell}(\vec t^{(0)})\bigr) \in \R^{N_\ell} . 
\end{equation*}
For the $r$th neuron in layer $\ell$, define its causal set relative to layer $\ell-1$ by 
\begin{equation*}
    S^{(\ell)}_r \coloneq \bigl\{j\in[N_{\ell-1}] \colon t^{(\ell-1)}_j < t^{(\ell)}_r \bigr\}.
\end{equation*}
Thus, $S^{(\ell)}_r$ records the presynaptic neurons in layer $\ell-1$ that spike strictly before neuron $(\ell,r)$. 
The full causal pattern of the network is the tuple 
\begin{equation*}
    \Bigl((S_r^{(1)})_{r=1}^{N_1}, (S_r^{(2)})_{r=1}^{N_2}, \dots, (S_r^{(L)})_{r=1}^{N_L} \Bigr).
\end{equation*}

\subsection{Upper bounds} 

The idea is to analyze the layers successively. 
Once the causal pattern through layer $\ell - 1$ is fixed, all firing times produced by the corresponding subnetwork are affine functions of the original input. 
Therefore, within each such causal region, the boundaries introduced by layer $\ell$ are obtained by pulling back the layer-$\ell$ arrangement through the affine map realized by the preceding layers. 
Applying the shallow-network bound within each region therefore yields a product of the layerwise upper bounds.

\begin{theorem}\label{thm:deepupperbound}
    For a feedforward SNN of architecture $(d,N_1,\ldots,N_L)$, we have 
    \begin{equation}
    \label{eq:deep_product_shallow_upper}
        R_{\max}(d, N_1, \dots, N_L) \leq \prod_{\ell=1}^{L} \min\Biggl\{
        (2^{N_{\ell-1}}-1)^{N_\ell}, \sum_{k=0}^{N_{\ell-1}-1} \binom{N_\ell\kappa_{N_{\ell-1}}}{k} \Biggr\},
    \end{equation}
    where $\kappa_{N_\ell - 1} = N_{\ell-1}(2^{N_\ell - 1} - 1)$ is given in \eqref{eq:N_d}.

\end{theorem}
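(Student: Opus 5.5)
We argue by induction on the depth $L$, refining the partition of the input space layer by layer. Let $\mathcal P_\ell$ denote the partition of $\R^d$ into the nonempty sets on which the truncated causal pattern $\bigl((S^{(1)}_r)_r,\dots,(S^{(\ell)}_r)_r\bigr)$ is constant. The aim is to show $|\mathcal P_\ell|\le |\mathcal P_{\ell-1}|\cdot M_\ell$, where $M_\ell$ is the $\ell$th factor in \eqref{eq:deep_product_shallow_upper}. Since $R(\Phi)=|\mathcal P_L|$ and $|\mathcal P_0|=1$, this yields the bound for every $\Phi$, and hence for $R_{\max}$.

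For the inductive step, fix a cell $P\in\mathcal P_{\ell-1}$. By Proposition~\ref{prop:convex_polyhedron_SPhi}, applied to the subnetwork consisting of the first $\ell-1$ layers, $P$ is a convex polyhedron. On $P$ the map $\vec t^{(0)}\mapsto\vec t^{(\ell-1)}$ agrees with an affine map $A_P\colon\R^d\to\R^{N_{\ell-1}}$. For inputs in $P$, the causal sets of the layer-$\ell$ neurons are functions of $A_P(\vec t^{(0)})$ alone. They are the causal sets of a shallow network with input dimension $N_{\ell-1}$ and width $N_\ell$, evaluated on the set $A_P(P)$. The number of distinct layer-$\ell$ tuples occurring on $P$ is therefore at most the number of tuples realized by that shallow network on $A_P(P)\subseteq\R^{N_{\ell-1}}$, which is at most the number it realizes on all of $\R^{N_{\ell-1}}$.

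We then bound that number in two ways. The first is Proposition~\ref{prop:naive_bounds}, which gives $(2^{N_{\ell-1}}-1)^{N_\ell}$. The second is Proposition~\ref{prop:upper_lower_bound_finite_Hsj}, which says the tuple is constant on each region of the union arrangement $\cA$ of $N_\ell\kappa_{N_{\ell-1}}$ hyperplanes. All of these hyperplanes are parallel to $\1$, so $\cA$ has at most $\sum_{k=0}^{N_{\ell-1}-1}\binom{N_\ell\kappa_{N_{\ell-1}}}{k}$ regions. Taking the minimum gives $M_\ell$, and summing over $P\in\mathcal P_{\ell-1}$ gives the product bound.

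The main obstacle is the second bound: Proposition~\ref{prop:upper_lower_bound_finite_Hsj} counts tuples only off the hyperplanes, while $A_P(P)$ may lie partly, or even entirely, inside $\cA$. For example, the image can be degenerate, since $A_P$ need not be injective, or neurons of layer $\ell-1$ may fire simultaneously. The cleanest fix is to bound the number of faces rather than open regions. Inputs on a hyperplane still realize a causal set; by the closed/half-open form of $R_S$ in \eqref{eq:region_Rs} and Corollary~\ref{cor:ttfs_causal_complex_explicit}, each point on the boundary is assigned the same causal set as some adjacent open region. Concretely, for a point $x$ on $H_{S,j}$, perturbing $x$ slightly in the appropriate direction lands in a region of the arrangement with the same tuple, because the inequalities $t^S_v\le t_j$ are weak while the strict ones are stable under small perturbations. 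This gives a map from realized tuples to regions of the arrangement, so the region count still applies. I would verify this by checking that, for each neuron, the set of inputs with a given causal set has the form of a polyhedron with nonempty interior containing the point in its closure, and then choosing a single generic perturbation direction that works simultaneously for all $N_\ell$ neurons. If that simultaneous choice fails, the fallback is to use only the naive factor in degenerate cells and to argue that the maximum is attained on generic parameters.
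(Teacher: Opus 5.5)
Your argument is correct and is essentially the paper's proof: you condition on the causal pattern of layers $1,\dots,\ell-1$, view layer $\ell$ as a shallow network on $\R^{N_{\ell-1}}$, bound its realized tuples by the minimum of the naive and arrangement bounds, and multiply over layers. The boundary issue you raise is genuine, since the paper passes over tuples realized only on hyperplanes, and your perturbation fix works with no fallback needed: whenever $\phi_{S_r,j}(x_0)=0$ for some neuron $r$ and $j\notin S_r$, one has $x_{0,j}=t^{S_r}_{v_r}(x_0)>x_{0,i}$ for all $i\in S_r$, so passing to $(1+\epsilon)x_0$ changes $\phi_{S_r,j}$ by $\epsilon\sum_{i\in S_r}w^{(r)}_i(x_{0,j}-x_{0,i})>0$ for every such tight pair at once, while for small $\epsilon>0$ every already-strict inequality stays strict, which puts the same tuple on an open set and hence in some region of $\cA$.
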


\begin{proof}
    We apply the shallow bounds successively to the layers. Consider first layer the $\ell$, and suppose that the causal patterns of all preceding layers $1,\dots,\ell-1$ have been fixed. On the corresponding region of the input space, the presynaptic spike times $\vec t^{\ell - 1}$ entering layer $\ell$ are fixed functions of the input. Viewed in its presynaptic spike-time space $\R^{N_{\ell-1}}$, layer $\ell$ is therefore a shallow SNN with $N_{\ell-1}$ inputs and $N_\ell$ neurons. 
    By Proposition~\ref{prop:upper_lower_bound_finite_Hsj}, the maximum number of regions produced by the arrangement are 
    \begin{equation*}
        \sum_{k=0}^{N_{\ell-1}-1} \binom{N_\ell\kappa_{N_{\ell-1}}}{k}.
    \end{equation*} 
    Restricting this arrangement to the affine image of the preceding region cannot produce more regions than the arrangement in the full space $\R^{N_\ell-1}$.
    Hence, each region generated through layer $\ell-1$ can be subdivided by layer $\ell$ into at most
    \begin{equation*}
        \min\Biggl\{
        (2^{N_{\ell-1}}-1)^{N_\ell}, \sum_{k=0}^{N_{\ell-1}-1} \binom{N_\ell\kappa_{N_{\ell-1}}}{k} \Biggr\},
    \end{equation*}
    where $(2^{N_{\ell-1}}-1)^{N_\ell}$ is the naive causal set bound given by Proposition~\ref{prop:naive_bounds} for the same layer. 
    Multiplying these factors over $\ell=1,\dots,L$ gives \eqref{eq:deep_product_shallow_upper}.  
\end{proof}

The shallow polynomial asymptotic bound in \eqref{eq:fixed_d_asympt} does not directly extend to deep networks when multiple layer widths grow simultaneously, since the presynaptic dimension $N_{\ell-1}$ then also varies with the network widths. 
We therefore retain the finite product bound in \eqref{eq:deep_product_shallow_upper} as our general upper bound for deep network. 
This bound can be quite loose, since it treats the causal sets of different neurons within each layer as if they could vary independently. 
In reality, all neurons in a given layer are driven by the same presynaptic spike-time vector, and deeper layers are further constrained by the causal structure inherited from the preceding layers. 
In Section~\ref{subsec:upperboundsharedweight}, we exploit these dependencies to obtain sharper bounds in the structured shared-weight regime.

\subsection{Lower bounds} 

We now derive a constructive lower bound for an SNN with arbitrary positive weights. 

The construction is based on the following principle. 
Suppose that a subnetwork produces $M$ pairwise distinct causal regions $V_1, \dots, V_M$ and maps each of them affinely and bijectively onto the same output region $W$. 
If $W$ is itself an admissible input region for another copy of the construction, then the same folding mechanism can be applied again to each of the $M$ existing regions. 
Iterating this construction $K$ times therefore produces $M^K$ distinct network-level causal patterns. 
We next summarize the multi-fold construction underlying Lemma~\ref{lemma:multi-fold}, which will be subsequently used to prove Theorem~\ref{thm:multifold_depth}.

\begin{figure}[t]
\centering

\makebox[\textwidth][c]{%
\resizebox{0.96\textwidth}{!}{%
\begin{tikzpicture}[
    >=Latex,
    font=\small,
    line join=round
]

\definecolor{mfcurve}{RGB}{190,92,92}
\definecolor{mfblue}{RGB}{78,125,187}
\definecolor{mfgreen}{RGB}{92,149,103}
\definecolor{mfgray}{RGB}{125,125,125}

\definecolor{mfCOne}{RGB}{205,224,242}
\definecolor{mfCTwo}{RGB}{242,211,210}
\definecolor{mfCThree}{RGB}{216,232,205}
\definecolor{mfCFour}{RGB}{229,215,239}
\definecolor{mfMargin}{RGB}{238,238,238}

\definecolor{mfBranchOne}{RGB}{112,158,203}
\definecolor{mfBranchTwo}{RGB}{200,128,126}
\definecolor{mfBranchThree}{RGB}{126,168,111}
\definecolor{mfBranchFour}{RGB}{165,132,188}

\begin{scope}[xshift=0cm]

\draw[->,thick]
    (0,0) -- (4.10,0)
    node[right] {$t_1$};

\draw[->,thick]
    (0,0) -- (0,4.42)
    node[above] {$t_2$};

\draw[dashed,mfgray!60]
    (0,0) -- (4.05,4.05);

\node[
    font=\scriptsize,
    rotate=45,
    anchor=north
]
    at (3.28,3.28)
    {$t_2=t_1$};

\fill[mfMargin]
    (0,0.55) --
    (0,1.05) --
    (3.10,4.15) --
    (3.60,4.15) -- cycle;

\fill[mfCOne]
    (0,1.05) --
    (0,1.55) --
    (2.60,4.15) --
    (3.10,4.15) -- cycle;

\fill[mfCTwo]
    (0,1.55) --
    (0,2.05) --
    (2.10,4.15) --
    (2.60,4.15) -- cycle;

\fill[mfCThree]
    (0,2.05) --
    (0,2.55) --
    (1.60,4.15) --
    (2.10,4.15) -- cycle;

\fill[mfCFour]
    (0,2.55) --
    (0,3.05) --
    (1.10,4.15) --
    (1.60,4.15) -- cycle;

\fill[mfMargin]
    (0,3.05) --
    (0,3.55) --
    (0.60,4.15) --
    (1.10,4.15) -- cycle;

\draw[densely dotted,mfgray!60,line width=.55pt]
    (-0.42,0.13) -- (0,0.55);

\draw[densely dotted,mfgray!60,line width=.55pt]
    (-0.42,0.63) -- (0,1.05);

\draw[densely dotted,mfgray!60,line width=.55pt]
    (-0.42,1.13) -- (0,1.55);

\draw[densely dotted,mfgray!60,line width=.55pt]
    (-0.42,1.63) -- (0,2.05);

\draw[densely dotted,mfgray!60,line width=.55pt]
    (-0.42,2.13) -- (0,2.55);

\draw[densely dotted,mfgray!60,line width=.55pt]
    (-0.42,2.63) -- (0,3.05);

\draw[densely dotted,mfgray!60,line width=.55pt]
    (-0.42,3.13) -- (0,3.55);

\draw[thick,mfgray!90]
    (0,0.55) -- (3.60,4.15);

\draw[mfgray,line width=.70pt]
    (0,1.05) -- (3.10,4.15);

\draw[mfgray,line width=.70pt]
    (0,1.55) -- (2.60,4.15);

\draw[mfgray,line width=.70pt]
    (0,2.05) -- (2.10,4.15);

\draw[mfgray,line width=.70pt]
    (0,2.55) -- (1.60,4.15);

\draw[mfgray,line width=.70pt]
    (0,3.05) -- (1.10,4.15);

\draw[thick,mfgray!90]
    (0,3.55) -- (0.60,4.15);

\node[
    font=\scriptsize,
    anchor=east,
    fill=white,
    inner sep=.4pt
]
    at (-0.10,0.55)
    {$a$};

\node[
    font=\scriptsize,
    anchor=east,
    fill=white,
    inner sep=.4pt
]
    at (-0.10,1.05)
    {$c_1$};

\node[
    font=\scriptsize,
    anchor=east,
    fill=white,
    inner sep=.4pt
]
    at (-0.10,1.55)
    {$c_2$};

\node[
    font=\scriptsize,
    anchor=east,
    fill=white,
    inner sep=.4pt
]
    at (-0.10,2.05)
    {$c_3$};

\node[
    font=\scriptsize,
    anchor=east,
    fill=white,
    inner sep=.4pt
]
    at (-0.10,2.55)
    {$c_4$};

\node[
    font=\scriptsize,
    anchor=east,
    fill=white,
    inner sep=.4pt
]
    at (-0.10,3.05)
    {$c_5$};

\node[
    font=\scriptsize,
    anchor=east,
    fill=white,
    inner sep=.4pt
]
    at (-0.10,3.55)
    {$b$};

\node at (0.85,2.15) {\footnotesize$V_1$};
\node at (0.85,2.65) {\footnotesize$V_2$};
\node at (0.85,3.15) {\footnotesize$V_3$};
\node at (0.85,3.65) {\footnotesize$V_4$};

\node[
    font=\normalsize,
]
    at (2.125,4.5)
    {$Y_I$};

\draw[->]
    (0.05,-0.82)
    --
    (4.42,-0.82)
    node[right]
    {$x=t_2-t_1$};

\draw
    (0.30,-0.76) -- (0.30,-0.88)
    node[below=2pt,font=\scriptsize] {$0$};

\draw
    (0.78,-0.76) -- (0.78,-0.88)
    node[below=2pt,font=\scriptsize] {$a$};

\draw
    (1.34,-0.76) -- (1.34,-0.88)
    node[below=2pt,font=\scriptsize] {$c_1$};

\draw
    (1.86,-0.76) -- (1.86,-0.88)
    node[below=2pt,font=\scriptsize] {$c_2$};

\draw
    (2.38,-0.76) -- (2.38,-0.88)
    node[below=2pt,font=\scriptsize] {$c_3$};

\draw
    (2.90,-0.76) -- (2.90,-0.88)
    node[below=2pt,font=\scriptsize] {$c_4$};

\draw
    (3.42,-0.76) -- (3.42,-0.88)
    node[below=2pt,font=\scriptsize] {$c_5$};

\draw
    (3.96,-0.76) -- (3.96,-0.88)
    node[below=2pt,font=\scriptsize] {$b$};

\draw[mfgray!55,line width=5pt]
    (0.78,-0.82) -- (1.34,-0.82);

\draw[mfBranchOne,line width=5pt]
    (1.34,-0.82) -- (1.86,-0.82);

\draw[mfBranchTwo,line width=5pt]
    (1.86,-0.82) -- (2.38,-0.82);

\draw[mfBranchThree,line width=5pt]
    (2.38,-0.82) -- (2.90,-0.82);

\draw[mfBranchFour,line width=5pt]
    (2.90,-0.82) -- (3.42,-0.82);

\draw[mfgray!55,line width=5pt]
    (3.42,-0.82) -- (3.96,-0.82);

\node[
    font=\scriptsize,
    above=3pt
]
    at (2.37,-0.82)
    {$I=(a,b)$};

\end{scope}

\begin{scope}[xshift=5.35cm]

\draw[->]
    (0.35,0)
    --
    (4.55,0)
    node[right]
    {$x$};

\draw[->]
    (0.35,0)
    --
    (0.35,4.12)
    node[above]
    {$h(x)$};

\draw[dashed,mfgray!80]
    (0.35,1.00) -- (4.20,1.00);

\draw[dashed,mfgray!80]
    (0.35,2.95) -- (4.20,2.95);

\node[left=2pt] at (0.35,1.00) {$q$};
\node[left=2pt] at (0.35,2.95) {$q+A$};

\draw[densely dotted,mfgray!65]
    (0.72,0) -- (0.72,2.95);
\draw
    (0.72,.07) -- (0.72,-.07)
    node[below=3pt,font=\scriptsize] {$c_1$};

\draw[densely dotted,mfgray!65]
    (1.58,0) -- (1.58,2.95);
\draw
    (1.58,.07) -- (1.58,-.07)
    node[below=3pt,font=\scriptsize] {$c_2$};

\draw[densely dotted,mfgray!65]
    (2.44,0) -- (2.44,2.95);
\draw
    (2.44,.07) -- (2.44,-.07)
    node[below=3pt,font=\scriptsize] {$c_3$};

\draw[densely dotted,mfgray!65]
    (3.30,0) -- (3.30,2.95);
\draw
    (3.30,.07) -- (3.30,-.07)
    node[below=3pt,font=\scriptsize] {$c_4$};

\draw[densely dotted,mfgray!65]
    (4.16,0) -- (4.16,2.95);
\draw
    (4.16,.07) -- (4.16,-.07)
    node[below=3pt,font=\scriptsize] {$c_5$};

\draw[mfBranchOne,line width=4.2pt]
    (0.72,0) -- (1.58,0);

\draw[mfBranchTwo,line width=4.2pt]
    (1.58,0) -- (2.44,0);

\draw[mfBranchThree,line width=4.2pt]
    (2.44,0) -- (3.30,0);

\draw[mfBranchFour,line width=4.2pt]
    (3.30,0) -- (4.16,0);

\node[mfBranchOne,font=\scriptsize]
    at (1.15,-0.42) {$J_1$};

\node[mfBranchTwo,font=\scriptsize]
    at (2.01,-0.42) {$J_2$};

\node[mfBranchThree,font=\scriptsize]
    at (2.87,-0.42) {$J_3$};

\node[mfBranchFour,font=\scriptsize]
    at (3.73,-0.42) {$J_4$};

\draw[
very thick]
    (0.72,1.00)
    --
    (1.58,2.95)
    --
    (2.44,1.00)
    --
    (3.30,2.95)
    --
    (4.16,1.00);

\node[
    mfgray,
    rotate=66,
    font=\scriptsize
]
    at (1.02,2.18)
    {$+\eta/2$};

\node[
    mfgray,
    rotate=-66,
    font=\scriptsize
]
    at (2.1,2.18)
    {$-\eta/2$};

\node[
    mfgray,
    rotate=66,
    font=\scriptsize
]
    at (2.75,2.18)
    {$+\eta/2$};

\node[
    mfgray,
    rotate=-66,
    font=\scriptsize
]
    at (3.84,2.18)
    {$-\eta/2$};

\draw[
line width=2pt]
    (-0.10,1.20) -- (-0.10,2.75);

\node[
    rotate=90
]
    at (-0.36,1.98)
    {$I'$};

\end{scope}

\begin{scope}[xshift=10.70cm]

\draw[->,thick]
    (0,0) -- (4.10,0)
    node[right] {$z_-$};

\draw[->,thick]
    (0,0) -- (0,4.42)
    node[above] {$z_+$};

\draw[dashed,mfgray!60]
    (0,0) -- (4.05,4.05);

\node[
    font=\scriptsize,
    rotate=45,
    anchor=north
]
    at (3.28,3.28)
    {$z_+=z_-$};

\fill[mfMargin]
    (0,0.55) --
    (0,1.05) --
    (3.10,4.15) --
    (3.60,4.15) -- cycle;

\fill[mfCOne]
    (0,1.05) --
    (0,1.55) --
    (2.60,4.15) --
    (3.10,4.15) -- cycle;

\fill[mfCTwo]
    (0,1.55) --
    (0,2.05) --
    (2.10,4.15) --
    (2.60,4.15) -- cycle;

\fill[mfCThree]
    (0,2.05) --
    (0,2.55) --
    (1.60,4.15) --
    (2.10,4.15) -- cycle;

\fill[mfCFour]
    (0,2.55) --
    (0,3.05) --
    (1.10,4.15) --
    (1.60,4.15) -- cycle;

\fill[mfMargin]
    (0,3.05) --
    (0,3.55) --
    (0.60,4.15) --
    (1.10,4.15) -- cycle;

\draw[densely dotted,mfgray!60,line width=.55pt]
    (-0.42,0.13) -- (0,0.55);

\draw[densely dotted,mfgray!60,line width=.55pt]
    (-0.42,0.63) -- (0,1.05);

\draw[densely dotted,mfgray!60,line width=.55pt]
    (-0.42,1.13) -- (0,1.55);

\draw[densely dotted,mfgray!60,line width=.55pt]
    (-0.42,1.63) -- (0,2.05);

\draw[densely dotted,mfgray!60,line width=.55pt]
    (-0.42,2.13) -- (0,2.55);

\draw[densely dotted,mfgray!60,line width=.55pt]
    (-0.42,2.63) -- (0,3.05);

\draw[densely dotted,mfgray!60,line width=.55pt]
    (-0.42,3.13) -- (0,3.55);

\draw[thick,mfgray!90]
    (0,0.55) -- (3.60,4.15);

\draw[mfgray,line width=.85pt]
    (0,1.05) -- (3.10,4.15);

\draw[mfgray,line width=.85pt]
    (0,1.55) -- (2.60,4.15);

\draw[mfgray,line width=.85pt]
    (0,2.05) -- (2.10,4.15);

\draw[mfgray,line width=.85pt]
    (0,2.55) -- (1.60,4.15);

\draw[mfgray,line width=.85pt]
    (0,3.05) -- (1.10,4.15);

\draw[thick,mfgray!90]
    (0,3.55) -- (0.60,4.15);

\node[
    font=\normalsize,
]
    at (2.125,4.5)
    {$\Phi(V_k) = Y_{I'}$};

\node[
    font=\scriptsize,
    anchor=east,
    fill=white,
    inner sep=.4pt
]
    at (-0.10,0.55)
    {$q$};

\node[
    font=\scriptsize,
    anchor=east,
    fill=white,
    inner sep=.4pt
]
    at (-0.10,1.05)
    {$c^\prime_1$};

\node[
    font=\scriptsize,
    anchor=east,
    fill=white,
    inner sep=.4pt
]
    at (-0.10,1.55)
    {$c^\prime_2$};

\node[
    font=\scriptsize,
    anchor=east,
    fill=white,
    inner sep=.4pt
]
    at (-0.10,2.05)
    {$c^\prime_3$};

\node[
    font=\scriptsize,
    anchor=east,
    fill=white,
    inner sep=.4pt
]
    at (-0.10,2.55)
    {$c^\prime_4$};

\node[
    font=\scriptsize,
    anchor=east,
    fill=white,
    inner sep=.4pt
]
    at (-0.10,3.05)
    {$c^\prime_5$};

\node[
    font=\scriptsize,
    anchor=east,
    fill=white,
    inner sep=.4pt
]
    at (-0.10,3.55)
    {$q + A$};

\node[
]
    at (0.65,1.95)
    {\footnotesize$W_1$};

\node[
]
    at (0.65,2.45)
    {\footnotesize$W_2$};

\node[
]
    at (0.65,2.95)
    {\footnotesize$W_3$};

\node[
]
    at (0.65,3.45)
    {\footnotesize$W_4$};

\end{scope}

\end{tikzpicture}%
}
}

\caption{Illustration of the multi-fold construction in Lemma~\ref{lemma:multi-fold}, shown for $m=5$. 
(Left) The relative-time interval $I = (a, b) \subset (0, \infty)$ in the coordinate $x = t_2-t_1$ lifts to the region $Y_I = \{(t_1, t_2) \in \R^2 \colon t_2 - t_1\in I\}$. 
The first-layer neurons introduce the causal boundaries $x = c_r$, for $r = 1, \ldots, 5$, creating four regions $V_1, \ldots, V_4$ within $Y_I$ corresponding to the intervals $J_k = (c_k, c_{k+1})$. 
(Middle) The relative output timing $h(x)$ forms a sawtooth. 
Each restriction $h\vert_{J_k}$ is affine and maps $J_k$ bijectively onto the same output interval $I^\prime$. 
(Right) Each region $V_k$ is mapped affinely and bijectively onto the common output region $Y_{I^\prime} = \{(z_-, z_+) \in \R^2 \colon z_-, z_+) \in I^\prime\}$. 
The next folding block creates regions $W_1, \ldots, W_4$ within this common image, with boundaries $z_+ - z_-= c_r^\prime$. 
This subdivision pulls back to each region~$V_k$. 
}
\label{fig:multi_fold}

\end{figure}

\paragraph{Idea of the multi-fold construction}  

The multi-fold construction is implemented by a two-layer SNN of architecture $(2, m, 2)$. 
Let $x = t_2 - t_1$ be the difference between the two input spike times. 
The first layer introduces $m$ ordered breakpoints 
$c_1 < \cdots < c_m$ 
along the $x$-axis. 
Crossing $c_r$ changes the causal set of neuron $r$, so the $m-1$ intervals $J_k = (c_k,c_{k+1})$ carry pairwise distinct causal patterns. 
The firing times of the first-layer neurons are piecewise-affine functions of $x$. 
The second layer combines these firing times through two output neurons, with firing times $z_-$ and $z_+$. 
We consider their relative firing time $h \coloneq z_+ - z_-$. 
In the construction below, this difference depends only on $x = t_2 - t_1$, and we therefore write it as $h(x)$. 
By choosing the difference between the two output weight vectors so that the slope of $h$ has constant magnitude and alternating sign on successive intervals $J_k$, we obtain a sawtooth function $h$. 
Each interval $J_k$ is mapped affinely and bijectively onto the same output interval $I^\prime$. 
Hence, the two-layer SNN block maps $m-1$ distinct causal regions onto a common two-dimensional output region. This output region has the same form as the original input region and can therefore serve as the input to another copy of the block. 
Figure~\ref{fig:multi_fold} illustrates the construction formalized in Lemma~\ref{lemma:multi-fold}.

\begin{lemma}\label{lemma:multi-fold}
    For any nonempty open interval $J \subseteq (0,\infty)$, let 
    \begin{equation*}
        Y_J \coloneq \Bigl\{(y_1, y_2) \in \R^2 \colon y_2 - y_1 \in J \Bigr\} . 
    \end{equation*}
    Let $m \geq 3$ and let $I = (a,b) \subset(0, \infty)$. 
    Consider a two-layer SNN with architecture $(2,m,2)$, 
    whose TTFS layer maps are $F^{(1)} \colon \R^2 \to \R^m$ and $F^{(2)} \colon \R^m \to \R^2$, and whose two-layer map is $\Phi \coloneq F^{(2)} \circ F^{(1)} \colon \R^2 \to \R^2$. 
    It is possible to choose the weights and thresholds such that the following holds. 
    There exist $m-1$ pairwise disjoint nonempty open sets $V_1, \ldots,V_{m-1} \subset Y_I$, and a nonempty open interval $I^\prime \subset (0, \infty)$ such that 
    \begin{enumerate}[(i)]
        \item the sets $V_1,\dots,V_{m-1}$ have pairwise distinct causal patterns in the first layer, and hence pairwise distinct causal patterns in the two-layer SNN; 
        
        \item for every $k = 1, \dots, m-1$, the restriction $\Phi \big|_{V_k} \colon V_k \to Y_{I^\prime}$ is an affine bijection.
    \end{enumerate}
\end{lemma}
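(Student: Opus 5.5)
The plan is to build $\Phi$ explicitly in three steps: a first layer that cuts $Y_I$ into slabs, a second layer with all hidden spikes causal, and output weights tuned so that $h$ is a sawtooth. Throughout, write $x=t_2-t_1$.

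\textbf{First layer.} Fix equally spaced breakpoints $a<c_1<\cdots<c_m<b$ with spacing $\ell=c_{k+1}-c_k$, and set $J_k=(c_k,c_{k+1})$ and $V_k=\{(t_1,t_2):x\in J_k\}\subset Y_I$. Give hidden neuron $r$ weights $w_{r1},w_{r2}>0$ and threshold $\theta_r=c_r w_{r1}$. By the $d=2$ description of the causal regions (the example preceding Proposition~\ref{prop:upper_lower_bound_finite_Hsj}), for $x>0$ neuron $r$ has causal set $\{1\}$ when $x>c_r$ and $\{1,2\}$ when $x\le c_r$; the set $\{2\}$ never occurs, since $x>0$.

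On $J_k$, neurons $r\le k$ therefore have causal set $\{1\}$ and neurons $r>k$ have $\{1,2\}$. These first-layer patterns are pairwise distinct for different $k$, which gives (i). By Proposition~\ref{cor:tv_equals_tvs}, the firing time of neuron $r$ is $t^{(1)}_r=t_1+g_r(x)$, where
\[
g_r(x)=c_r \quad\text{for } x\ge c_r,
\qquad
g_r(x)=\frac{\theta_r+w_{r2}x}{w_{r1}+w_{r2}} \quad\text{for } x\le c_r .
\]
So $g_r$ is continuous and piecewise affine, with slope $\alpha_r=w_{r2}/(w_{r1}+w_{r2})\in(0,1)$ to the left of $c_r$ and slope $0$ to the right.

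\textbf{Second layer.} Give the outputs $z_\pm$ positive weights $u^\pm_r$ and thresholds $\Theta_\pm$. On $Y_I$ every $g_r(x)$ lies in a bounded interval, so taking $\Theta_\pm$ large forces all $m$ hidden spikes to arrive before either output fires. Both output causal sets are then $[m]$ throughout $Y_I$. Writing $p^\pm_r=u^\pm_r/U_\pm$, where $U_\pm=\sum_r u^\pm_r$, we get
\[
z_\pm=t_1+\frac{\Theta_\pm}{U_\pm}+\sum_r p^\pm_r\, g_r(x),
\qquad
h(x)=z_+-z_-=\mathrm{const}+\sum_r \delta_r\, g_r(x),
\quad \delta_r=p^+_r-p^-_r .
\]

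\textbf{Sawtooth.} The slope of $h$ on $J_k$ is $D_k=\sum_{r>k}\delta_r\alpha_r$. I want $D_k=(-1)^{k+1}s$ for $k=1,\dots,m-1$, with $D_m=0$. This fixes $\delta_r=(D_{r-1}-D_r)/\alpha_r$ for $r=2,\dots,m$. Then set $\delta_1=-\sum_{r\ge2}\delta_r$, so that $\sum_r\delta_r=0$. Neuron $1$ only affects the slope left of $c_1$, outside every $J_k$, so $\delta_1$ is free to play this role.

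The main obstacle is realizing these signed $\delta_r$ with positive weights. I resolve it by taking $s$ small enough that $|\delta_r|<1/m$, then choosing $p^-$ uniform and $p^+=p^-+\delta$. Both are positive probability vectors, and any positive $u^\pm$ with these normalizations works; the slack in $\Theta_\pm$ is unaffected.

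Since $h$ is continuous with slopes $\pm s$ alternating on intervals of equal length $\ell$, each restriction $h|_{J_k}$ is an affine bijection onto the same interval $I'=(q,q+s\ell)$. Increasing $\Theta_+$ shifts $h$ upward without changing any causal set, as long as the thresholds stay large. This lets me choose $q>0$, so that $I'\subset(0,\infty)$.

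\textbf{Conclusion.} On $V_k$ the map $\Phi$ sends $(t_1,x)\mapsto\bigl(z_-,\,z_-+h(x)\bigr)$, where $z_-=t_1+(\text{affine in }x)$ and $t_1$ ranges over all of $\R$. This map is affine, and it is bijective onto $Y_{I'}$ because $h|_{J_k}$ is a bijection $J_k\to I'$ and $t_1$ can be recovered from $z_-$ once $x$ is known. This gives (ii). The hypothesis $m\ge3$ guarantees at least two folds, although the argument itself works for any $m\ge2$.
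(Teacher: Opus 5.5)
Your proposal is correct and follows essentially the same construction as the paper. It uses breakpoints $c_r$ in $I$ to cut out the slabs $V_k$, large second-layer thresholds so that both outputs have causal set $[m]$, alternating tail sums of the output-weight differences to produce a sawtooth $h$, positivity by perturbing the uniform vector $1/m$, and a threshold shift to place $I'$ inside $(0,\infty)$. The only difference is cosmetic: the paper fixes $w_{r1}=w_{r2}=1/2$, so every slope is $1/2$ and the coefficients $\gamma_r$ are explicit, whereas you allow general first-layer weights with $\theta_r=c_r w_{r1}$ and solve $\delta_r=(D_{r-1}-D_r)/\alpha_r$.
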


\begin{proof}
    Let $
    c_1 < c_2 < \cdots < c_m
    $ be equally spaced points in the interval $I = (a,b)$,  
    with common spacing $c_{r+1} - c_r = \Delta > 0$. 
    They partition $(c_1, c_m)$ into $m-1$ consecutive intervals 
    \begin{equation*}
        J_k \coloneq (c_k, c_{k+1}), \quad \text{for  } k=1,\dots,m-1.
    \end{equation*}
    The first layer will be constructed so that crossing each point $c_r$ changes the causal set of a neuron, making the intervals $J_k$ correspond to distinct causal regions. 
    Let
    \begin{equation*}
        V_k \coloneq \bigl\{(t_1,t_2) \in \R^2 \colon t_2-t_1\in J_k \bigr\}.
    \end{equation*}
    Since $J_k \subset I \subset (0, \infty)$, every $(t_1, t_2) \in V_k$ satisfies $t_2>t_1$. Therefore, on $V_k$, only the causal sets $\{1\}$ and $\{1,2\}$ can occur.

    For the first layer, let neuron $r \in [m]$ have weights $w_{r1} = w_{r2} = 1/2$ and threshold $\theta_r = c_r/2$. 
    Writing $x = t_2 - t_1$, its firing time is 
    
    \begin{equation}
    \label{eq:multifold_txn}
        y_r = t_1 + \phi_r(x), \quad \text{where  } \phi_r(x) = \begin{cases}
            \dfrac{x+c_r}{2}, & x<c_r,\\[2mm]
            c_r, & x>c_r.
        \end{cases}
    \end{equation}
    Now fix $x \in J_k = (c_k, c_{k+1})$. Since the  
    $c_k$ are ordered, we have 
    \begin{equation*}
        x > c_r \text{ for } r\leq k, \text{ and } x < c_r \text{ for } r\geq k+1.
    \end{equation*}
    Hence, the first $k$ neurons have causal set $\{1\}$, while the remaining $m-k$ neurons have causal set $\{1,2\}$. The first-layer causal pattern on $V_k$ is 
    \begin{equation}
    \label{eq:multifold_pattern}
        C_k = \Bigl(\underbrace{\{1\},\dots,\{1\}}_{k}, \underbrace{\{1,2\},\dots,\{1,2\}}_{m-k} \Bigr).
    \end{equation}
    For different $k$ these patterns are pairwise distinct. The first layer has therefore created $m-1$ distinct causal pieces in the region $Y_I \subset \{t_2 > t_1\}$. 

     We now construct the second layer which has two output neurons with firing times denoted by $z_-$ and $z_+$. 
     Our goal is to choose the second-layer parameters so that, on every causal piece $V_k$, the relative output timing 
    \begin{equation}\label{eq:hx}
        h(x) \coloneq z_+(x) - z_-(x)
    \end{equation}
    ranges bijectively over the same interval $I^\prime$. 
    This provides the folding over the relative time coordinate.
    We verify at the end of the proof that the full two-dimensional map $\Phi|_{V_k}$ is an affine bijection from $V_k$ onto $Y_{I^\prime}$.

    For this, we choose the weights $\alpha_1^{\pm}, \dots, \alpha_m^{\pm} > 0$ 
    such that $\sum_{r=1}^{m} \alpha_r^{\pm} = 1$, and choose the thresholds $\theta^{\pm}$ sufficiently large such that both second layer neurons fire after all $m$ first layer neurons, that is, their causal sets are both $\{1, \dots, m\}$. 
    Then, 
    \begin{equation}
        z_{\pm} = \theta^{\pm} + \sum_{r=1}^{m}\alpha_r^{\pm}y_r.
        \label{eq:z+_}
    \end{equation}
    We control the relative output time $h(x) = z_+(x) - z_-(x)$ in \eqref{eq:hx}. 
    Using \eqref{eq:multifold_txn}, we have 
    \begin{equation}
    \label{eq:multifold_h}
        h(x) = z_+(x) - z_-(x) = \theta^+ - \theta^- + \sum_{r=1}^{m}\gamma_r \phi_r(x),
    \end{equation}
    where $\gamma_r \coloneq \alpha^+_r - \alpha^-_r$. Since $\sum_{r=1}^{m}\alpha_r^{\pm} = 1$, we have $\sum_{r=1}^{m}\gamma_r = 0$. 

    We now determine the coefficients $\gamma_r$ so that the relative output time $h$ folds all intervals $J_k$ onto the same output interval. 
    Fix $k \in \{1, \dots, m-1\}$. On $J_k = (c_k, c_{k+1})$, we have $x>c_r$ for $r\leq k$ and $x<c_r$ for $r\geq k+1$. Thus, by \eqref{eq:multifold_txn}, the derivative of $\phi_r(x)$ satisfies
    \begin{equation*}
        \phi_r^\prime(x) = \begin{cases}
            0, & r\leq k, \\[1mm]
            \dfrac{1}{2}, &r\geq k+1. 
        \end{cases} 
    \end{equation*}
    and hence by \eqref{eq:multifold_h} the derivative of $h(x)$ satisfies
  \begin{equation}
  \label{eq:multifold_slope}
        h^\prime(x) = \frac{1}{2}\sum_{r=k+1}^{m} \gamma_r, \quad \text{for  } x \in J_k. 
    \end{equation}
    
    Equation~\eqref{eq:multifold_slope} determines the slope of $h$ on $J_k$, which is described by the corresponding tail sum of the coefficients $\gamma_r$. 
    Since all intervals $J_k$ have the same length $\Delta$, we can make every interval map onto the same output interval by choosing the slopes to have same magnitude but alternating signs. Then, $h$ increases by the same amount on $J_1$, decreases by the same amount on $J_2$, increases again on $J_3$, and so on. 

    Fix $0 < \eta < 1/m$. We choose the tail sums to have magnitude $\eta$, so that, by \eqref{eq:multifold_slope}, the slope of $h$ on each interval $J_k$ has magnitude $\eta/2$, with alternating sign. 
    We impose 
    \begin{equation}\label{eq:multifold_tail}
        \sum_{r=k+1}^{m} \gamma_r = (-1)^{k+1}\eta, \text{ for } k = 1, \dots, m-1.
    \end{equation}
    Subtracting two consecutive tail sums gives
    \begin{equation*}
        \gamma_r = 2(-1)^r\eta, \text{ for } 2\leq r\leq m-1,
    \end{equation*}
    while the last condition gives $\gamma_m = (-1)^m\eta$. 
    Finally, since $\sum_{r=1}^{m} \gamma_r = 0$, we obtain
    \begin{equation}\label{eq:multifold_gamma}
        \gamma_1 = -\eta, \quad \gamma_r = 2(-1)^r\eta, \quad 2\leq r\leq m-1, \quad \gamma_m = (-1)^m\eta.
    \end{equation}
    We now realize these coefficients $\gamma_r$ as differences of two weight vectors by setting 
    \begin{equation}\label{eq:multifold_alpha}
        \alpha_r^\pm = \frac{1}{m} \pm \frac{\gamma_r}{2}, \text{ for } r = 1, \dots, m.
    \end{equation}
    Then, it can readily be seen that $\gamma_r = \alpha_r^+ - \alpha_r^-$, and since $\sum_r \gamma_r = 0$, we get $\sum_{r=1}^{m} \alpha_r^\pm = 1$. Moreover, 
    \begin{equation*}
        \alpha_r^{\pm} \geq \frac{1}{m} - \frac{\lvert \gamma_r \rvert}{2} \geq \frac{1}{m} - \eta > 0, 
    \end{equation*}
    all second layer weights are positive.

    Once the coefficients $\gamma_r$ are fixed, the slopes of $h$ on the intervals $J_k$ are determined. The remaining freedom in \eqref{eq:multifold_h} is the constant term $\theta^+ - \theta^-$, which only shifts the sawtooth vertically. 

    Fix any $q > 0$, which will determine the lower level of the sawtooth, an define
    
    \begin{equation*}
        \beta \coloneq \sum_{r=1}^{m}\gamma_r \phi_r(c_1) 
        = \gamma_1 c_1 + \frac{1}{2}\sum_{r=2}^{m}\gamma_r(c_1 + c_r)
        = \frac{1}{2}\sum_{r=1}^{m}\gamma_rc_r, 
    \end{equation*}
    where the last equality follows from $\sum_r\gamma_r = 0$. Since,
    \begin{equation*}
        h(c_1) = \theta^+ - \theta^- + \beta,
    \end{equation*} 
    to enforce $h(c_1) = q$, it suffices to define $\delta \coloneq q - \beta$ and choose $\theta^+ - \theta^- = \delta$.

    Finally, to ensure that both output neurons fire after all first-layer spikes, we add a sufficiently large common offset. Specifically, set

    \begin{equation}
        M \coloneq c_m + 1 + \lvert \delta \rvert, \quad \theta^- \coloneq M, \quad \theta^+ \coloneq M + \delta.  
    \end{equation}
    Then, $\theta^+ - \theta^- = \delta = q-\beta$, and $\theta^{\pm} > c_m > 0$. Hence, 
    \begin{equation*}
        h(c_1) = \theta^+ - \theta^- + \beta = q. 
    \end{equation*}

\medskip 
    
    We now describe the fold explicitly. 
    By~\eqref{eq:multifold_slope} and~\eqref{eq:multifold_tail},
    \begin{equation}\label{eq:multifold_alternating}
        h^\prime(x) = \frac{\eta}{2}(-1)^{k+1}, \text{ for } x\in J_k.
    \end{equation}
    Thus, $h$ is increasing on $J_1$, decreasing on $J_2$, increasing on $J_3$, and so on. Since every interval $J_k$ has length $\Delta$, the change of $h$ across $J_k$ is
    \begin{equation*}
        h(c_{k+1}) - h(c_k) = (-1)^{k+1}A,
    \end{equation*}
    where $A \coloneq \frac{\eta\Delta}{2} > 0$. Starting from $h(c_1) = q$, it follows that
    \begin{equation}\label{eq:multifold_breakpoint_values}
        h(c_k) = \begin{cases}
        q, & k \text{ odd},\\[1mm]
        q+A, & k \text{ even}. \end{cases}
    \end{equation}
    Equivalently, on each interval $J_k$, we can also write $h$ as
    \begin{equation}\label{eq:multifold_piecewise_h}
        h(x) = \begin{cases}
        q + \frac{\eta}{2}(x - c_k), & x\in J_k, \quad k \text{ odd}, \\[3mm]
        q + A - \frac{\eta}{2}(x - c_k), & x\in J_k, \quad k \text{ even}.
        \end{cases}
    \end{equation}
    Hence, with $I^\prime \coloneq (q, q+A)$, every restriction
    \begin{equation*}
        h|_{J_k} \colon J_k \to I^\prime
    \end{equation*}
    is an affine bijection. 
    Thus, every affine piece of the sawtooth covers the same output interval $I^\prime$.

\medskip 

    It remains to verify that the full two-dimensional map is affine and bijective on each $V_k$. 
    On $J_k$, using \eqref{eq:multifold_txn} in \eqref{eq:z+_} and $\sum_{r}\alpha_r^- = 1$ gives
    \begin{equation*}
        z_- = t_1 + \theta^- + \sum_{r=1}^{m}\alpha_r^-\phi_r(x) = t_1 + \psi_k(x) 
    \end{equation*}
    where
    \begin{equation*}
        \psi_k(x) = \theta^- + \sum_{r=1}^{k}\alpha_r^-c_r + \frac{1}{2}\sum_{r=k+1}^{m}\alpha_r^-(x+c_r)
    \end{equation*}
    is affine in $x$. 
    
    Since by definition, $z_+(x) = z_-(x) + h(x)$, we may equivalently use $(z_-, h)$ as output coordinates. 
    On $V_k$, 
    we therefore consider the map
    \begin{equation}
    \label{eq:multifold_branch_map}
        (t_1, x) \mapsto (t_1 + \psi_k(x), h(x)),
    \end{equation}
    whose linear part is $\begin{pmatrix} 1 & \psi_k^\prime \\ 0 & h^\prime\end{pmatrix}$, and the determinant of this matrix is $h^\prime = \frac{\eta}{2}(-1)^{k+1} \neq 0$. 
    Thus, the affine map is injective on $V_k$. 
    Moreover, $h(J_k) = I^\prime$, and for each fixed $x\in J_k$, the coordinate $z_- = t_1 + \psi_k(x)$ ranges over all of $\R$ as $t_1$ ranges over $\R$. Hence, its image is exactly $Y_{I^\prime}$. Therefore, every point of $Y_{I^\prime}$ has exactly one preimage in $V_k$, so, $\Phi|_{V_k} \colon V_k \to Y_{I^\prime}$ is an affine bijection.
\end{proof}

The following example illustrates the construction for $m=5$; see Figure~\ref{fig:multi_fold} for the corresponding geometry.

\begin{example}[The case $m=5$.] 
    Consider the construction in Lemma~\ref{lemma:multi-fold} with $m=5$. Choose five equally spaced points $0 < a < c_1 < \dots < c_5 < b$ inside the interval $I = (a, b)$, and let $J_k = (c_k, c_{k+1})$ for $k = 1,\dots, 4$. 
    
    Recall that $Y_I = \bigl\{(t_1,t_2) \in \R^2 \colon t_2 - t_1 \in I
    \bigr\}$ is the two dimensional region between the parallel lines $t_2 - t_1 = a$ and $t_2 - t_1 = b$; see Figure~\ref{fig:multi_fold}. The intervals $J_1, \dots, J_4$ correspond to the four regions $V_k = \bigl\{(t_1, t_2) \in \R^2 \colon t_2 - t_1 \in J_k \bigr\}$ for $k = 1, \dots, 4$.

    The first layer has five neurons. Neuron $r$ contributes the breakpoint $c_r$, that is, crossing the line $x = t_2 - t_1 = c_r$ changes the causal set of neuron $r$. Hence, the four regions $V_1, \dots, V_4$ carry four distinct first-layer causal patterns. 

    The second layer combines the first-layer firing times into two output spike times $z_-$ and $z_+$. Their relative firing time $h(x) = z_+(x) - z_-(x)$ forms a sawtooth with one affine piece on each $J_k$, and each restriction $h\vert_{J_k} \colon J_k\to I^\prime$ is an affine bijection onto the same interval $I^\prime$. 

    Therefore, for every $k = 1,\dots, 4$, the full two-layer map restricts to an affine bijection $\Phi\vert_{V_k} \colon V_k\to Y_{I^\prime}$, where $Y_{I^\prime} = \bigl\{(z_-, z_+) \in \R^2 \colon z_+ - z_- \in I^\prime
    \bigr\}$. Thus, the four distinct causal regions $V_1, \dots, V_4$ are all folded onto the same output region $Y_{I^\prime}$. Applying the same construction once more partitions $Y_{I^\prime}$ into four new regions $W_1, \dots, W_4$. Since each map $\Phi\vert_{V_k}$ is bijective, every $W_j$ has one preimage inside each $V_k$. Consequently, after two blocks one obtains $16$ distinct causal regions.
\end{example} 

The example illustrates the general recursive mechanism. Lemma~\ref{lemma:multi-fold} shows that a two-layer SNN can map $m-1$ distinct causal regions $V_1,\ldots, V_{m-1}$ contained in $Y_I$ affinely and bijectively onto a common output set $Y_{I^\prime}$, 
and this output set has the same form as the original input set $Y_I$. 
Consequently, the construction can be iterated.  
Suppose that a second block creates $m-1$ causal regions $W_1, \dots, W_{m-1}$ within $Y_{I^\prime}$. Then, for each preceding region $V_k$, the preimages 
\begin{equation*}
    (\Phi|_{V_k})^{-1}(W_1), \dots, (\Phi|_{V_k})^{-1}(W_{m-1})
\end{equation*} 
are nonempty and pairwise disjoint. 
Thus, each existing causal region gives rise to $m-1$ new causal regions. 
The first layer of each block creates multiplicity and the second layer resets the resulting regions onto a common reusable set. 
Iterating the two-layer block yields the following width-dependent exponential lower bound.

\begin{theorem}
\label{thm:multifold_depth}
    Let $L\geq 1$ and $m_1, \dots, m_L \geq 2$. 
    Consider a depth-$2L$ SNN with input dimension $N_0\geq 2$ and 
    widths satisfying
    $$
        N_{2\ell-1} \geq m_\ell, \quad N_{2\ell} \geq 2, \quad \ell=1,\ldots, L. 
    $$ 
    Then 
    \begin{equation}
    \label{eq:lbdepth}
        R_{\max}
        (N_0, N_1, \ldots, N_{2L})
        \geq \prod_{\ell = 1}^{L} (m_{\ell} - 1).
    \end{equation}
\end{theorem}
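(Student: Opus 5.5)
We will prove Theorem~\ref{thm:multifold_depth} by iterating the two-layer block of Lemma~\ref{lemma:multi-fold}. The plan is to show that each block multiplies the number of distinct causal patterns by $m_\ell-1$.

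\textbf{Step 1: reduce to the exact architecture $(2,m_1,2,m_2,2,\dots,m_L,2)$.} If some $m_\ell=2$, the factor $m_\ell-1=1$ is trivially attained. For $m_\ell\ge 3$, Lemma~\ref{lemma:multi-fold} applies directly. For $m_\ell=2$ we still need a block that maps some open piece affinely and bijectively onto a region of the form $Y_{I'}$, so that the next block can be applied. For this we use Lemma~\ref{lemma:multi-fold} with $m=3$ restricted to $V_1$, or a direct construction with a single fold. Extra input neurons (if $N_0>2$) are handled by giving them very late spike times on an open set, or by restricting to inputs where they are irrelevant. Concretely, we fix inputs $t_3,\dots,t_{N_0}$ to be huge, so that these coordinates lie in no causal set on the region considered; causal patterns stay distinct since they are distinct on the first two coordinates. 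Extra neurons in hidden layers get tiny positive weights and huge thresholds, so that they fire after everything relevant and never enter downstream causal sets on the regions used. They still have some causal set, but this only refines patterns and cannot merge distinct ones. Since weights must be positive, the downstream neurons must have tiny positive weights on these extra neurons rather than zero weights; we handle this by choosing the thresholds large enough that they fire last, so they are excluded from causal sets.

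\textbf{Step 2: induction on $L$.} Let the claim $P(L)$ be: there are $\prod_{\ell\le L}(m_\ell-1)$ nonempty open sets $U_\alpha\subset Y_{I_0}$, carrying pairwise distinct causal patterns in the first $2L$ layers, each mapped by the $2L$-layer network affinely and bijectively onto a common $Y_{I_L}$. The base case $L=1$ is Lemma~\ref{lemma:multi-fold}.

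For the inductive step, we choose block $L+1$ via Lemma~\ref{lemma:multi-fold} with input interval $I=I_L$, which lies in $(0,\infty)$ as required. This gives sets $V_1,\dots,V_{m_{L+1}-1}\subset Y_{I_L}$. We then set $U_{\alpha,k}=(\Phi_{\le 2L}|_{U_\alpha})^{-1}(V_k)$. Each $U_{\alpha,k}$ is nonempty and open, since the inverse of an affine bijection is affine. Two such sets with different $\alpha$ differ in their patterns on layers $\le 2L$; two with the same $\alpha$ but different $k$ differ in their layer-$(2L+1)$ pattern by part (i) of the lemma. Composing the affine bijections gives the map onto $Y_{I_{L+1}}$, which completes the induction.

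\textbf{Step 3: conclude and identify the delicate point.} Distinct open sets with distinct patterns give distinct elements of $\mathbf S(\R^{N_0})$, which yields \eqref{eq:lbdepth}.

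The main obstacle is the bookkeeping in Step 1. Padding neurons must not alter the causal sets used in the lemma, in particular the second-layer outputs' full causal set $\{1,\dots,m\}$. Since weights must be positive, we cannot simply zero out connections. Instead, each padding neuron receives a threshold so large that it fires after the block's outputs on the bounded-in-$x$ regions. The $t_1$ shift invariance makes this uniform, because firing times are $t_1$ plus a bounded function of $x$ on each $U_\alpha$. We must then check that the tiny extra weights only perturb downstream slopes by an amount that can be absorbed. The alternative is to rescale the thresholds of the real neurons so that padded inputs arrive strictly later than their firing, in which case they contribute nothing at all.
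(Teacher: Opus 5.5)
Your proposal is correct and follows the paper's route. You iterate the two-layer block of Lemma~\ref{lemma:multi-fold}, pull each new block's regions back through the affine bijections $U_\alpha\to Y_{I_L}$, and neutralize surplus neurons with thresholds large enough that they fire after every neuron downstream of them; such neurons then contribute nothing to those potentials, so there is no slope perturbation to absorb.

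Two small remarks on your Step~1, where you are more careful than the paper's own proof:
\begin{enumerate}
\item For $m_\ell=2$, the lemma's construction works verbatim, with $\gamma_1=-\eta$ and $\gamma_2=\eta$. Borrowing an $m=3$ block instead would need a third neuron, which $N_{2\ell-1}\geq 2$ does not guarantee.
\item Surplus input coordinates should be late relative to $t_1$, for instance $t_j=t_1+M$ with $M>c_{m_1}$, rather than fixed at a constant. The set $Y_{I_0}$ contains points with arbitrarily large $t_1$, so a fixed constant would eventually arrive before the first-layer neurons fire.
\end{enumerate}
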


\begin{proof} 
    We consider the subnetwork with widths $(2,m_1,2,\ldots, m_L,2)$. Any network satisfying the stated width conditions can reproduce this by choosing the additional neurons so that they remain causally inactive on the sets used in the construction, for example by assigning them sufficiently large thresholds.

    Let $Y_{I_0}$ be an initial input set of the form considered in Lemma~\ref{lemma:multi-fold}. Apply the lemma to the first two-layer block, whose intermediate width is $m_1$. 
    This produces $m_1 - 1$ pairwise distinct causal regions, each of which is mapped affinely and bijectively onto the same output set $Y_{I_1}$. 

    Since $Y_{I_1}$ has the same form as $Y_{I_0}$, we may apply Lemma~\ref{lemma:multi-fold} again, now to a block with intermediate width $m_2$. 
    On each of the $m_1 - 1$ existing regions, the second block creates $m_2-1$ new causal regions, and maps all of them onto a common output set $Y_{I_2}$. 
    The first two blocks therefore produce $(m_1 - 1)(m_2 - 1)$ distinct network-level causal regions. 

    Continuing inductively, after the first $\ell$ blocks the network has $\prod_{k = 1}^{\ell} (m_k - 1)$ distinct causal regions. After $L$ blocks, this gives \eqref{eq:lbdepth}. 
\end{proof}

Theorem~\ref{thm:multifold_depth} provides a basic lower bound that grows with width and depth. 
The construction exploits only a single relative spike-time coordinate and can be strengthened in several ways. In particular, several copies of the multi-fold construction can be run in parallel on disjoint pairs of spike times. If nonnegative weights were allowed, the cross-pair weights could be set to zero, so that the folds decouple exactly.
If, in block $\ell$, $r_{\ell,j}$ neurons are assigned to the $j$th of $p$ pairs, then the resulting Cartesian-product construction yields $\prod_{j=1}^p(r_{\ell,j}-1)$ regions per block. Under the strictly positive weight setting considered here, however, this exact decoupling may no longer be possible, since every neuron necessarily receives a nonzero contribution from the other input pairs. It may nevertheless be possible to develop a robust version of the construction, in which the zero-cross pair weights are replaced by sufficiently small positive weights while preserving the folding behavior on suitable subregions.

Another possible improvement is to exploit both ordering regions $t_1 < t_2$ and $t_2 < t_1$, in contrast to the present construction which uses only $x = t_2 - t_1 > 0$, and maps each fold back into an interval $I^\prime \subset (0, \infty)$. Using $x < 0$ side as well could therefore produce additional regions, but would require a modified reusable output set and a folding construction that can be iterated across both spike time orderings. We leave these possible extensions of the folding construction for future work. 

Finally, the last layer need not have width two: it may instead be replaced by a width-one layer or an affine layer, since the output of the final block need not be mapped onto a reusable set for further iteration.

\subsection{Deep SNNs with shared weights} 
\label{subsec:upperboundsharedweight}

We now consider the shared-weights regime and derive sharper upper bounds for this regime. 
More precisely, we assume that within each layer $\ell$, all neurons share the same positive weight vector. We assume without loss of generality that within each layer the thresholds are strictly ordered.

By Lemma~\ref{lem:nested_same_weights}, shared weights and ordered thresholds induce a corresponding ordering of the firing times; 
for instance, $t^{(\ell)}_1 < t^{(\ell)}_2 < \cdots < t^{(\ell)}_{N_\ell}$.  
For deep SNNs, the key simplification is that, on any region of the input space where the presynaptic spike times entering a layer have a fixed strict ordering, 
every causal set in the next layer is a prefix of that ordering. 
More precisely, let $U \subseteq \R^d$ be a region of the input space on which the spike times in layer $\ell-1$ satisfy, 
for some permutation $\pi$ of $[N_{\ell-1}]$, 
\begin{equation*}
    t^{(\ell-1)}_{\pi(1)} < t^{(\ell-1)}_{\pi(2)} < \cdots < t^{(\ell-1)}_{\pi(N_{\ell-1})}, \quad \text{for every  } \vec t^{(0)} \in U.
\end{equation*}
Then, for every $\vec t^{(0)} \in U$, the causal set of any neuron $r$ in layer $\ell$ is necessarily of the form 
\begin{equation*}
    S_r^{(\ell)} = \{\pi(1), \dots, \pi(q_{l,r})\}
\end{equation*}
for some $q_{\ell,r}\in[N_{\ell-1}]$. We call $q_{\ell,r}$ the \emph{prefix length} of neuron $(\ell, r)$. 

For example, suppose $N_{\ell-1} = 3$ and that, on some region $U$, we have $t^{(\ell-1)}_2 < t^{(\ell-1)}_1 < t^{(\ell-1)}_3$. 
Then the possible causal sets of a neuron in layer $\ell$ are 
$\{2\}$, $\{2,1\}$, and $\{2,1,3\}$, with prefix lengths $1$, $2$ and $3$, respectively. 

Moreover, if we index the neurons such that the thresholds increase with the neuron index, then the prefix lengths are increasing with the neuron index. 
Hence, at each fixed input $\vec t^{(0)}$, the prefix lengths satisfy 
\begin{equation*}
    1\leq q_{\ell,1} \leq q_{\ell,2} \leq \cdots
    \leq q_{\ell,N_\ell}\leq N_{\ell-1}.
\end{equation*} 

We collect these observations in the following Lemma, whose proof is given in Appendix~\ref{appendix:deepSNN}. 

\begin{lemma}
\label{lem:prefix_lengths_deepnetwork}
Assume the shared-weights and distinct threshold setting. 
Fix a layer $\ell \geq 2$ and suppose the neurons are indexed such that $0<\theta_1<\theta_2<\cdots<\theta_{N_\ell}$. 
Let $U \subseteq \R^d$ be an input region on which the presynaptic spike times from layer $\ell-1$ have a fixed strict ordering 
\begin{equation*}
    t^{(\ell-1)}_{\pi(1)} < t^{(\ell-1)}_{\pi(2)} < \cdots < t^{(\ell-1)}_{\pi(N_{\ell-1})}
\end{equation*}
for some permutation $\pi$ of $[N_{\ell-1}]$. 
Then, for every $\vec t^{(0)} \in U$ and every neuron $(\ell,r)$, there exists 
a prefix length $q_{\ell,r} \in [N_{\ell-1}]$ such that 
\begin{equation*}
    S^{(\ell)}_r= 
    \{\pi(1),\ldots, \pi(q_{\ell,r})\}. 
\end{equation*}
Moreover, the prefix lengths are nondecreasing across neurons, so that 
\begin{equation*}
    1\le q_{\ell,1} \leq q_{\ell,2}\leq \cdots \leq q_{\ell,N_\ell} \leq N_{\ell-1}. 
\end{equation*}
\end{lemma}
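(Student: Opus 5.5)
The proof has two parts: the prefix property, which is a pointwise statement about a single neuron, and the monotonicity of prefix lengths, which follows from Lemma~\ref{lem:nested_same_weights} applied to layer $\ell$.

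\emph{Prefix property.} Fix $\vec t^{(0)}\in U$ and a neuron $(\ell,r)$. Write $\tau\coloneqq t^{(\ell)}_r$, and let $\vec s=\vec t^{(\ell-1)}(\vec t^{(0)})$ be the presynaptic vector. By definition,
\[
S^{(\ell)}_r=\{j: s_j<\tau\}.
\]
This set is nonempty because $\theta_r>0$; the remark after \eqref{eq:tv_definition} shows that the causal set is nonempty. If $\pi(k)\in S^{(\ell)}_r$ and $k'<k$, then $s_{\pi(k')}<s_{\pi(k)}<\tau$, so $\pi(k')\in S^{(\ell)}_r$. Hence $S^{(\ell)}_r$ is downward closed in the order $\pi$. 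Setting $q_{\ell,r}\coloneqq|S^{(\ell)}_r|\in[N_{\ell-1}]$ then gives
\[
S^{(\ell)}_r=\{\pi(1),\dots,\pi(q_{\ell,r})\}.
\]
This step uses only the strict ordering of the presynaptic times; it does not use the shared weights.

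\emph{Monotonicity.} All neurons in layer $\ell$ receive the same vector $\vec s$ and share the weight vector. Their firing times are therefore the single-neuron map $t_v(\vec s;\theta)$ evaluated at the thresholds $\theta_1<\cdots<\theta_{N_\ell}$. Lemma~\ref{lem:nested_same_weights} says $\theta\mapsto t_v(\vec s;\theta)$ is strictly increasing, so
\[
t^{(\ell)}_1<t^{(\ell)}_2<\cdots<t^{(\ell)}_{N_\ell},
\]
and the same lemma gives $S^{(\ell)}_r\subseteq S^{(\ell)}_{r+1}$. Since both sets are prefixes of the same order $\pi$, inclusion is equivalent to $q_{\ell,r}\le q_{\ell,r+1}$. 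The bounds $1\le q_{\ell,1}$ and $q_{\ell,N_\ell}\le N_{\ell-1}$ come from nonemptiness and from $S\subseteq[N_{\ell-1}]$.

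\emph{Main obstacle.} The only delicate point is the strict monotonicity in $\theta$ invoked from Lemma~\ref{lem:nested_same_weights}. If I want it self-contained, I would argue as follows. For fixed $\vec s$, the potential $P(t)$ is continuous, vanishes for $t\le\min_j s_j$, and is strictly increasing afterwards. Hence the first hitting time of level $\theta>0$ is the inverse of a strictly increasing continuous function on $(\min_j s_j,\infty)$, and is therefore strictly increasing in $\theta$. Even without strictness, weak monotonicity of $\tau$ already yields $S^{(\ell)}_r\subseteq S^{(\ell)}_{r+1}$ directly from the definition $S=\{j:s_j<\tau\}$. So the inclusion, and with it the ordering of the prefix lengths, is robust.
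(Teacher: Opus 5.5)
Your proof is correct and follows essentially the same route as the paper, which also invokes Lemma~\ref{lem:nested_same_weights} to order the firing times in layer $\ell$ and then reads off $S^{(\ell)}_r\subseteq S^{(\ell)}_{r+1}$, hence $q_{\ell,r}\le q_{\ell,r+1}$, from the definition of the causal set. The only difference is that you spell out two points the paper treats as immediate: the downward-closure argument showing each causal set is a prefix of $\pi$, and a self-contained proof that the firing time is strictly increasing in $\theta$. Both are correct.
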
 

Lemma~\ref{lem:prefix_lengths_deepnetwork} allows us bound the number of causal regions via a combinatorial argument.  
On any region with a fixed ordering of the presynaptic spike times, the layer-$\ell$ causal-set tuple at each input is uniquely encoded by a weakly increasing sequence of prefix lengths $(q_{\ell,1}, \ldots, q_{\ell,N_\ell}) \in [N_{\ell-1}]^{N_\ell}$. 
These weakly increasing sequences form the basic counting objects that we will use in the upper bound below. 
A detailed example illustrating this correspondence is given in Appendix~\ref{appendix:deepSNN}. 

\begin{theorem}
\label{thm:deep_recursive_upper_bound}
Let $\Phi$ be a feedforward SNN with architecture $(d, N_1, \ldots, N_L)$. Suppose that, for every layer $\ell=1, \ldots, L$, all neurons in layer $\ell$ share the same positive presynaptic weight vector in $\R^{N_{\ell-1}}_{> 0}$, and their thresholds satisfy $\theta_{\ell,1} < \theta_{\ell,2} < \cdots < \theta_{\ell,N_\ell}$. 
Then the number of distinct global causal patterns realized by $\Phi$ is bounded by 
\begin{equation}
\label{eq:deep_recursive_upper_bound}
   R_{\max}^{\rm shared}(d, N_1, \dots, N_L) 
\leq C(d,N_1)\cdot \prod_{\ell=2}^{L}\binom{N_{\ell-1}+N_\ell-1}{N_\ell},
\end{equation}
where $C(d,N_1) = (N_1+1)^d - N_1^d$ is the 
region-count upper bound for the first layer. 
\end{theorem}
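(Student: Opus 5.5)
The plan is to bound the number of global causal patterns layer by layer, conditioning on the patterns of earlier layers. The first-layer factor comes from Proposition~\ref{prop:chain_bound_general_dm}. Each later layer is then handled by a counting argument based on Lemma~\ref{lem:prefix_lengths_deepnetwork}.

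\textbf{First layer.} Layer $1$ is a shallow shared-weight SNN with input dimension $d$ and $N_1$ neurons with ordered thresholds. By Lemma~\ref{lem:nested_same_weights} its causal-set tuples are chains $S_1\subseteq\cdots\subseteq S_{N_1}$ of nonempty subsets. Lemma~\ref{lem:nested-squences} (equivalently Proposition~\ref{prop:chain_bound_general_dm}) then bounds the number of distinct layer-$1$ tuples by $C(d,N_1)=(N_1+1)^d-N_1^d$.

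\textbf{Ordering of each layer.} For $\ell\ge1$, the firing times of layer $\ell$ are strictly ordered $t^{(\ell)}_1<\cdots<t^{(\ell)}_{N_\ell}$ at every input. This follows from Lemma~\ref{lem:nested_same_weights}, since the neurons share weights and the map $\theta\mapsto t_v(\vec t;\theta)$ is strictly increasing. The key point is that this ordering is the \emph{same fixed permutation}, the identity, for every input, not merely constant on some region. Hence the hypothesis of Lemma~\ref{lem:prefix_lengths_deepnetwork} holds with $U=\R^d$ and $\pi=\mathrm{id}$ for every layer $\ell\ge2$.

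\textbf{Later layers.} For each layer $\ell\ge2$, the causal set of neuron $(\ell,r)$ is the prefix $\{1,\dots,q_{\ell,r}\}$. The tuple $(q_{\ell,1},\dots,q_{\ell,N_\ell})$ is weakly increasing with values in $[N_{\ell-1}]$. The layer-$\ell$ causal-set tuple is therefore an injective function of this sequence. The number of weakly increasing sequences of length $N_\ell$ in $[N_{\ell-1}]$ is the number of multisets of size $N_\ell$ from $N_{\ell-1}$ elements, namely $\binom{N_{\ell-1}+N_\ell-1}{N_\ell}$, by stars and bars.

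\textbf{Combining the bounds.} The global causal pattern is the tuple of its layer components. Counting distinct patterns is at most the product of the number of possibilities for each layer component: the first layer contributes at most $C(d,N_1)$, and each layer $\ell\ge2$ contributes at most $\binom{N_{\ell-1}+N_\ell-1}{N_\ell}$. No conditioning on regions is even needed, because the prefix structure holds globally. Multiplying gives \eqref{eq:deep_recursive_upper_bound}, and taking the maximum over shared-weight parameters gives the bound on $R^{\rm shared}_{\max}$.

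\textbf{Main obstacle.} The delicate step is justifying that the presynaptic ordering is strict and fixed everywhere. Strictness needs strict monotonicity of $\theta\mapsto t_v$, which Lemma~\ref{lem:nested_same_weights} asserts: once $\vec t$ is fixed, $P_v$ is strictly increasing after the first arrival and $\theta>0$, so the hitting time strictly increases. This holds equally when the inputs to layer $\ell$ are themselves spike times of the previous layer. One should also check that ties $t^{(\ell-1)}_j=t^{(\ell)}_r$ cause no issue, since causal sets use strict inequality and remain prefixes. I expect no further difficulty, as the remainder is pure counting.
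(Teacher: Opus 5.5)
Your proposal is correct and is essentially the paper's proof. It uses the same steps: the first-layer factor from the chain count of Proposition~\ref{prop:chain_bound_general_dm}, the global strict ordering $t^{(\ell-1)}_1<\cdots<t^{(\ell-1)}_{N_{\ell-1}}$ from Lemma~\ref{lem:nested_same_weights}, the prefix and weakly-increasing-length encoding from Lemma~\ref{lem:prefix_lengths_deepnetwork}, stars and bars, and multiplication over layers. Your remark that each layer's possible causal-set tuples form a fixed set independent of the input, so the paper's ``extensions'' count is really a plain product, is exactly what the paper's ``for every input'' step relies on.
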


\begin{proof}
By Proposition~\ref{prop:chain_bound_general_dm}, the first-layer causal-set tuple $(S^{(1)}_1,\dots,S^{(1)}_{N_1})$ has at most $C(d,N_1)$ possible values. 
Now fix a layer $\ell \geq 2$. 
Since the neurons in layer $\ell-1$ share the same positive weight vector and have strictly ordered thresholds, Lemma~\ref{lem:nested_same_weights} gives $t_1^{(\ell-1)}<t_2^{(\ell-1)}<\cdots<t_{N_{\ell-1}}^{(\ell-1)}$ for every input. 
Hence, by Lemma~\ref{lem:prefix_lengths_deepnetwork}, every causal-set tuple in layer $\ell$ is uniquely determined by a weakly increasing sequence of prefix lengths $
1
\leq q_{\ell,1}
\leq \cdots
\leq q_{\ell,N_\ell}
\leq N_{\ell-1}$.
The number of such sequences is
$$
\binom{N_{\ell-1}+N_\ell-1}{N_\ell}. 
$$
Therefore, for each causal pattern realized through layer $\ell-1$, there are at most $\binom{N_{\ell-1}+N_\ell-1}{N_\ell}$ possible extensions to layer $\ell$. 
Multiplying these bounds recursively over layers $\ell = 2, \ldots, L$, together with the first-layer bound $C(d,N_1)$, gives 
\eqref{eq:deep_recursive_upper_bound}. 
\end{proof}

\begin{remark}\label{remark:low_rank_deepSNN_ub}
    The product bound in \eqref{eq:deep_recursive_upper_bound} counts all admissible prefix-length tuples layer by layer. 
    In deep networks, however, not all such combinations need be realizable on a fixed input region. 
    The reason is that, on a given region of the input space, the output of a layer may lie in a lower-dimensional affine subset of $\R^{N_\ell}$. 

    A simple source of this dimension loss is repetition of prefix lengths. 
    Indeed, suppose that on some region, the prefix-length tuple of a layer is 
    \begin{equation*}
        \vec q = (q_1, \dots, q_N), \quad q_1 \leq \cdots \leq q_N,
    \end{equation*}
    and that $\vec q$ contains only $s$ distinct prefix lengths. Neurons having the same prefix length use the same presynaptic causal set, and hence, their firing times differ only by a constant. Therefore, the image of the corresponding affine layer map is contained in an affine subspace of dimension at most $s$. 

    Thus, the set of outputs of the layer forms a lower-dimensional family rather than a full $N_\ell$-dimensional one, and hence the next layer may not be able to realize all weakly increasing prefix-length tuples counted in the combinatorial upper bound. 
\end{remark}

For shared weights networks, it is not clear whether it is possible to obtain a lower bound that grows exponentially in the depth of the network. Lemma~\ref{lemma:shared-weight-obstruction} shows an obstruction to the folding construction used above. In the case of shared-weights, the $(2,m,2)$ construction has a monotone relative output time, so the alternating sawtooth needed for the multi-fold cannot be obtained by this construction. This does not rule out other mechanisms for
obtaining depth-dependent lower bounds in the shared weight setting. We discuss an alternative route in the interpretation below.

\medskip

\subsection{Interpretation} 

The folding construction used above is inspired by the folding constructions for ReLU networks in  \cite{montufar2014linearregions,montufar2017notes,serra2018regions}, which exploit the principle that successive layers can repeatedly fold different parts of the input space onto a common region. 
The construction above shows that an analogous mechanism is available in SNNs. The construction is implemented using blocks of two layers. The first layer creates many distinct causal pieces, while the second combines their firing times so that these pieces are mapped onto the same reusable spike-time region. 
There is a distinction with the ReLU folding constructions. In the constructions of \cite{montufar2014linearregions, montufar2017notes, serra2018regions}, a collection of ReLU units creates multiple pieces, while an affine combination of their activations folds these pieces onto a common region. This affine combination can be absorbed into the preactivations of the following layer. 
Moreover, several such one-dimensional folds can be applied in parallel along different input coordinates. Their Cartesian product then maps multiple full-dimensional input regions onto a common full-dimensional output region. 
The use of two-layer blocks in our construction for SNNs should not be interpreted as indicating that two layers are necessary for width-dependent folding. A potentially stronger construction could use a single layer to perform both tasks, creating multiple causal regions while simultaneously mapping them onto a common reusable output region. Such a construction is more constrained, since the same neuron parameters would have to determine both the causal subdivision and the coincidence of the corresponding vector-valued affine images. 

These observations suggest several directions for strengthening the SNN construction. One possibility is to run multiple independent multi-folds in parallel along different relative spike-time coordinates, producing a Cartesian-product folding construction. More generally, one could seek intrinsically higher-dimensional folds. Either mechanism could lead to stronger lower bounds.

For shared weight networks, another possible route for obtaining lower bounds with exponential growth in depth would be to strengthen one layer attainability result in Lemma~\ref{lemma:single_layer_local_braid_cone_attainability}. Lemma~\ref{lemma:single_layer_local_braid_cone_attainability} shows that the relevant causal patterns are attainable somewhere in the presynaptic spike time space, but this is not sufficient for iteration; the next layer only sees the particular affine image produced by the preceding layers. Thus, a recursive lower bound argument would require a more uniform attainability statement, for example, that the desired causal patterns can be realized on every arbitrary nonempty open set, or more generally on the affine images produced by the preceding layers. Remark~\ref{remark:low_rank_deepSNN_ub} complements this observation by showing that such images may be lower dimensional when prefix lengths are repeated.

There is an analogous loss of geometric freedom for ReLU networks with shared weights. In a shallow ReLU network, sharing the weight vector while varying only the biases produces parallel hyperplanes. Thus, $m$ such parallel hyperplanes divide the space into at most $m+1$ regions, independently of the input dimension. For deep shared weight ReLU networks, this restriction persists through depth. Indeed, after the first layer, all activations depend on the input only through the scalar coordinate $s = \langle \vec w, \vec x \rangle$, where $\vec w$ is the shared weight vector. Hence, the entire network factors through a one-dimensional input. 
Although depth can generate exponentially many one-dimensional linear pieces, 
weight sharing prevents the network from exploiting several independent input directions. Thus, in both SNN and ReLU networks, weight sharing restricts the geometry available for generating regions, although the resulting partitions have different structures.

\section{Experiments}
\label{section:experiments}

The goal of our experiments is to estimate the number of linear regions the input space is partitioned into by randomly initialized SNNs. 
We focus on three main questions. 
First, we study the effect of width and depth. 
Second, we compare SNN and ReLU networks under matched width and depth configurations. 
Third, we examine the effect of restricting the SNN weights by comparing networks with arbitrary positive
weights and networks where all neurons in a layer use the same incoming weight vector. The code and implementation for the experiments can be found on GitHub at this \href{https://github.com/manjot4/ttfs-linear-regions/}{link}.

\subsection{Experimental settings}

\paragraph{Models} An SNN consists of an affine linear encoder that maps inputs to spike times, followed by one or more SNN layers, and an affine decoder that produces class logits. 
The ReLU baseline is chosen to match the corresponding SNN in encoder size, hidden widths, depth, and decoder size. We report the average results over five random seeds. Full implementation and initialization details are given in Appendix~\ref{appendix:experiments}.

\paragraph{Width and depth configurations} 
For the width experiment, we use a single SNN layer and vary the width over $\{16, 32, 64, 128, 256, 512\}$. For the depth experiments, we consider networks of depths up to eight. 
We consider networks with up to eight layers and compare constant-width networks with architectures whose width decreases progressively through depth, denoted \emph{Gradual}, with widths $[256,256,256,128,128,128,64,64]$, and \emph{More gradual}, with widths $[512,256,256,128,128,64,64,64]$. 
These choices allow us to examine the trade-off between maintaining a large width and adding more layers.

\paragraph{Estimating region complexity} We estimate region complexity using a trajectory-based method. 
We randomly sample pairs of inputs $(x_0, x_1)$ from the dataset, interpolate along the line segment $x(t) = (1-t)x_0 + t x_1$ for $t \in [0,1]$, and track distinct activation patterns along the trajectory, using $20,\!000$ uniformly spaced intervals. 
For an SNN, a region is identified by the causal pattern, that is, a binary pattern indicating which input neurons belong to the causal prefix of each non-input neuron. 
Each unique causal pattern corresponds to a distinct linear region. Along a trajectory, we count the number of distinct causal patterns encountered. 
For ReLU networks, a region is defined by the binary activation pattern of all ReLU units. 
For each model and configuration, we report the average number of unique regions encountered per trajectory. 
The region statistics are averaged over many trajectories and random seeds. 
We complement this with exact enumeration of regions over two-dimensional slices of the input space in selected experiments.

\paragraph{Data sets} 
We estimate region complexity along straight line interpolations between randomly selected pairs of CIFAR-10 \cite{krizhevsky2009learning} and MNIST \cite{deng2012mnist} samples.  
Unless otherwise stated, all region statistics reported in this section are measured at initialization, without training. 
Additional experiments examining region complexity during training, comparisons between positive and arbitrary weight SNNs, and initialization ablations are provided in Appendix~\ref{appendix:experiments}.

\subsection{Experimental results}

\paragraph{Effect of width and depth}
Figure~\ref{fig:cifar_mnist_init1} shows the CIFAR-10 and MNIST results, respectively. 
For both datasets, the number of regions encountered along a trajectory increases rapidly with the width. 
The effect of depth is more architecture dependent. At smaller widths, increasing the depth has little effect on the observed region count, whereas at larger widths, deeper networks produce more regions. 
One possible explanation is that sufficiently wide layers preserve a higher-dimensional affine image across depth, allowing subsequent layers to introduce additional subdivisions, while narrow layers may create an early dimensional bottleneck that limits further region growth.

\begin{figure}[h]
    \centering

\begin{tabular}{
        @{}
        >{\centering\arraybackslash}m{0.01\textwidth}
        >{\centering\arraybackslash}m{0.77\textwidth}
        @{}
    }
        \rotatebox{90}{MNIST}
        &
        \includegraphics[width=\linewidth]{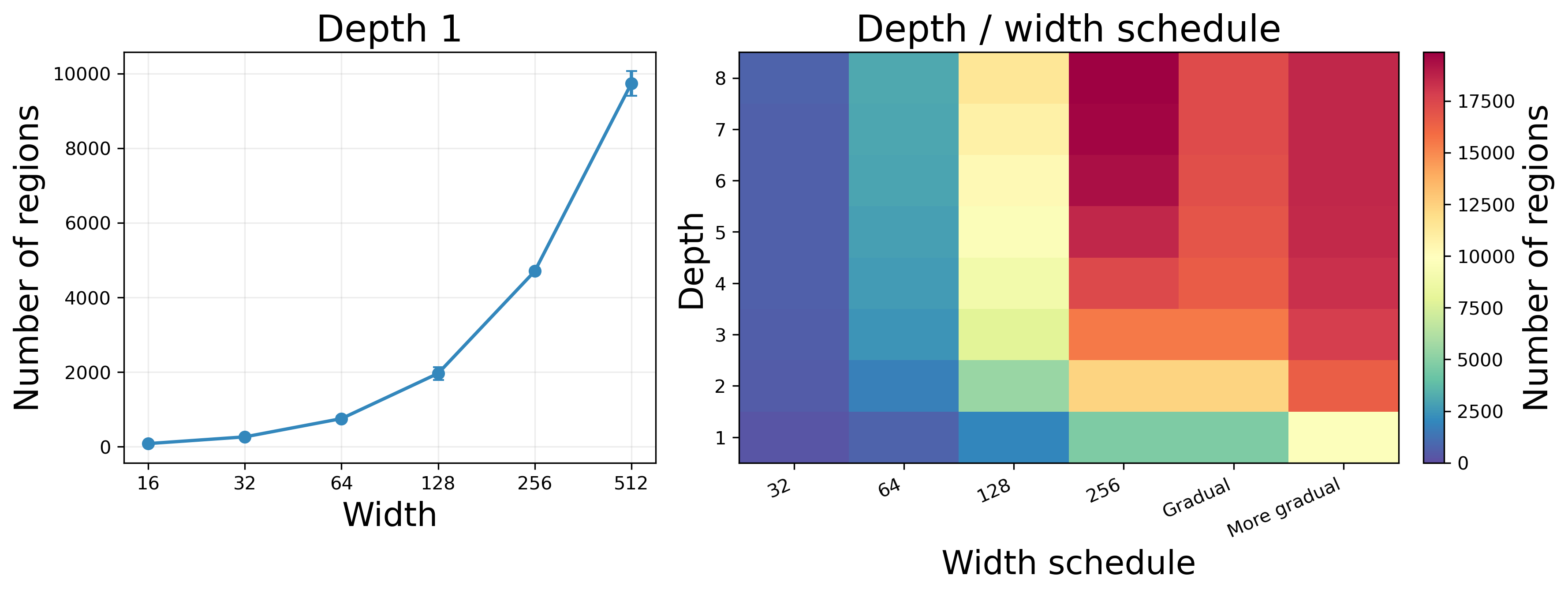}
        \\ 
        \rotatebox{90}{CIFAR-10}
        &
        \includegraphics[width=\linewidth]{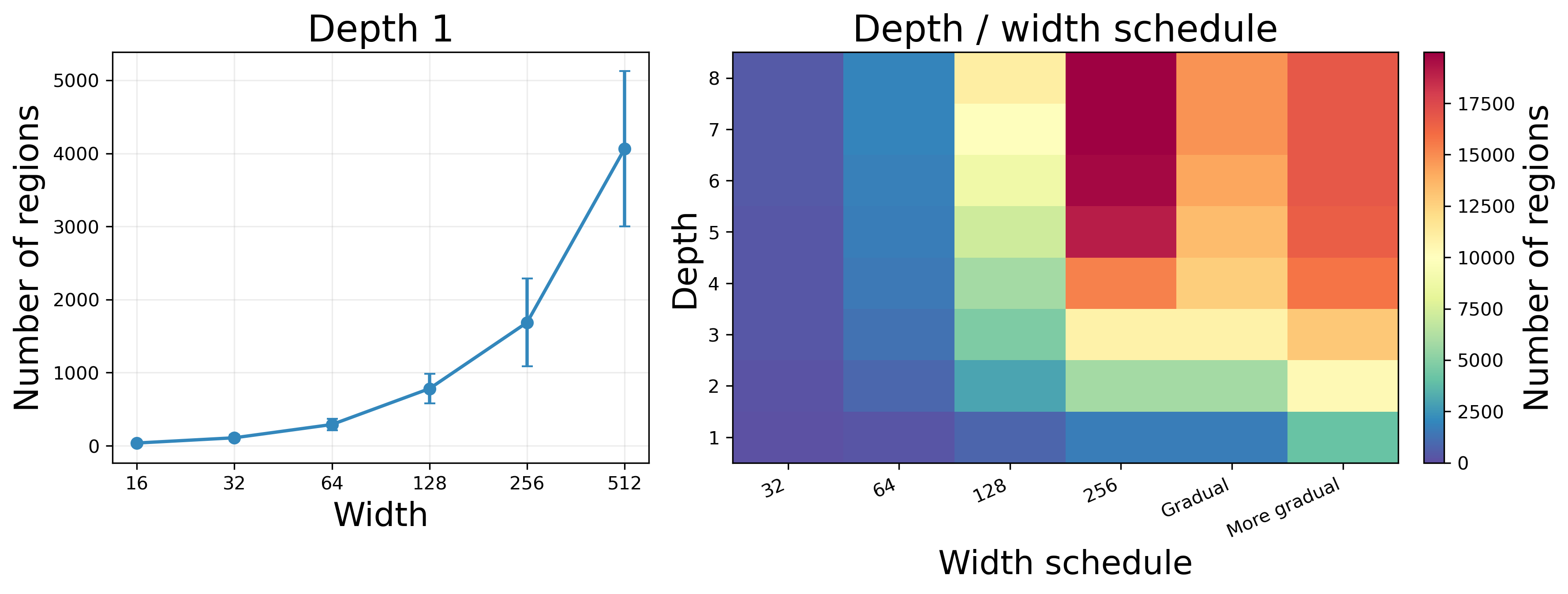}
    \end{tabular}

    \caption{
    Effect of width and depth on the number of causal regions of SNNs at initialization, evaluated for MNIST (top) and CIFAR-10 (bottom).} 
    \label{fig:cifar_mnist_init1}
\end{figure}

\paragraph{Comparison of independent vs shared weights} 
We compare SNNs with arbitrary positive weights to those with shared weights; see Figure~\ref{fig:cifar_mnist_shared_arbitrary_init1}. 
For a single hidden layer, the shared and arbitrary positive weight networks produce very similar region counts over a broad range of widths on both datasets. 
The depth experiments also remain qualitatively similar, although the arbitrary positive weight SNN generates more regions for some architectures and depths. 
A more controlled analysis during training could help isolate the effect of weight sharing and quantify its contribution more precisely. 

\begin{figure}[h]
    \centering
\begin{tabular}{
        @{}
        >{\centering\arraybackslash}m{0.01\textwidth}
        >{\centering\arraybackslash}m{0.97\textwidth}
        @{}
    }
        \rotatebox{90}{MNIST}
        &
            \includegraphics[width=\linewidth]
                {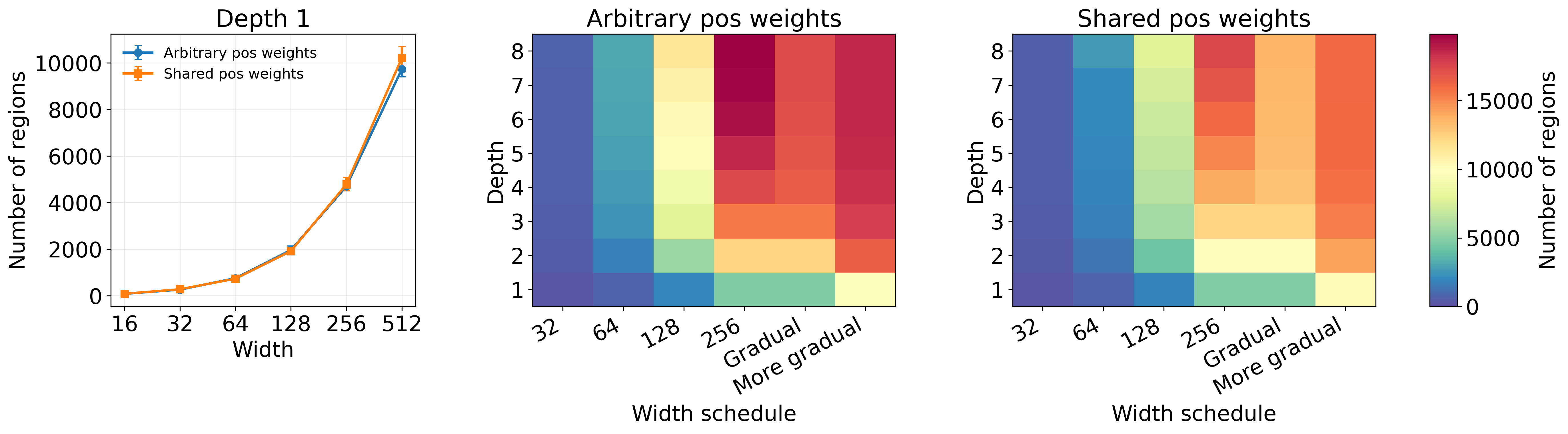}
        \\ 
        \rotatebox{90}{CIFAR-10}
        &
            \includegraphics[width=\linewidth]
                {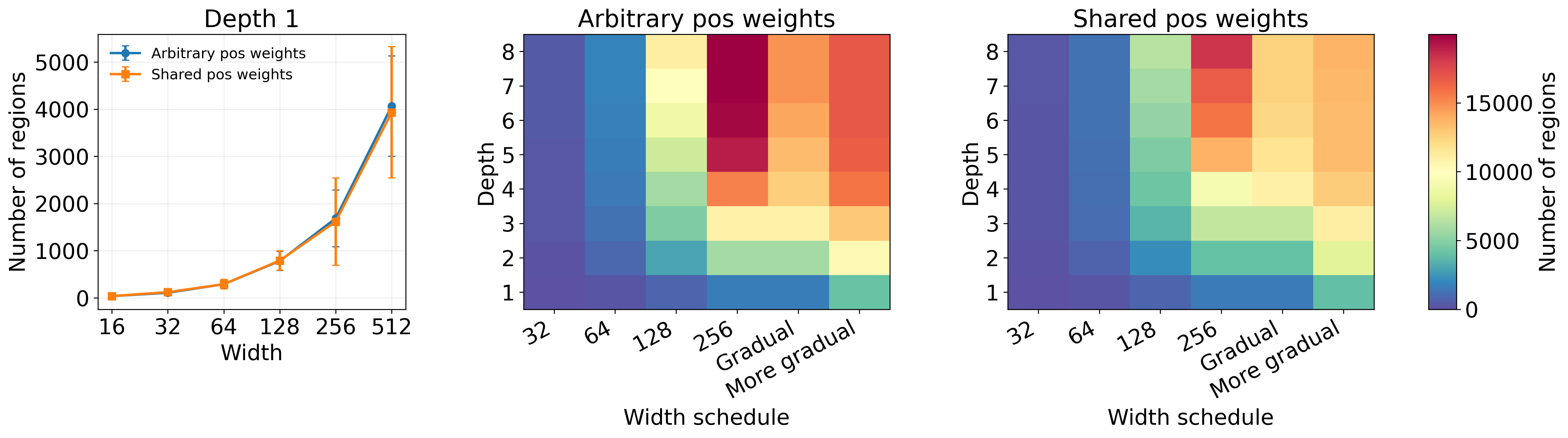}
    \end{tabular}

    \caption{
    Comparison of SNNs with arbitrary positive weights and SNNs with shared weights, showing the effect of width and depth, 
    evaluated for MNIST (top) and CIFAR-10 (bottom). 
    }
    \label{fig:cifar_mnist_shared_arbitrary_init1}
\end{figure}

\paragraph{Comparison of SNNs vs ReLU networks} The estimated number of regions for SNNs is consistently larger than that of the corresponding ReLU networks.
The difference is already visible for shallow networks and remains pronounced across depth on both CIFAR-10  and MNIST,  shown in Figure~\ref{fig:cifar_mnist_relu_ttfs_init1}. 

\begin{figure}[h]
    \centering
    \begin{tabular}{
        @{}
        >{\centering\arraybackslash}m{0.01\textwidth}
        >{\centering\arraybackslash}m{0.97\textwidth}
        @{}
    }
        \rotatebox{90}{MNIST}
        &
        \includegraphics[width=\linewidth]
            {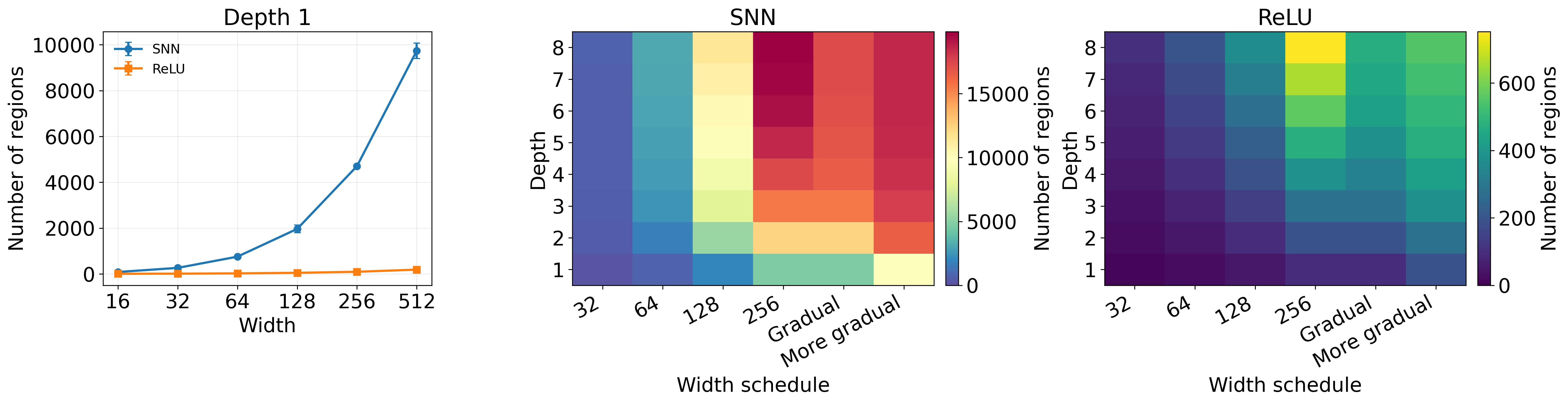}
        \\[0.5em]
        \rotatebox{90}{CIFAR-10}
        &
        \includegraphics[width=\linewidth]
            {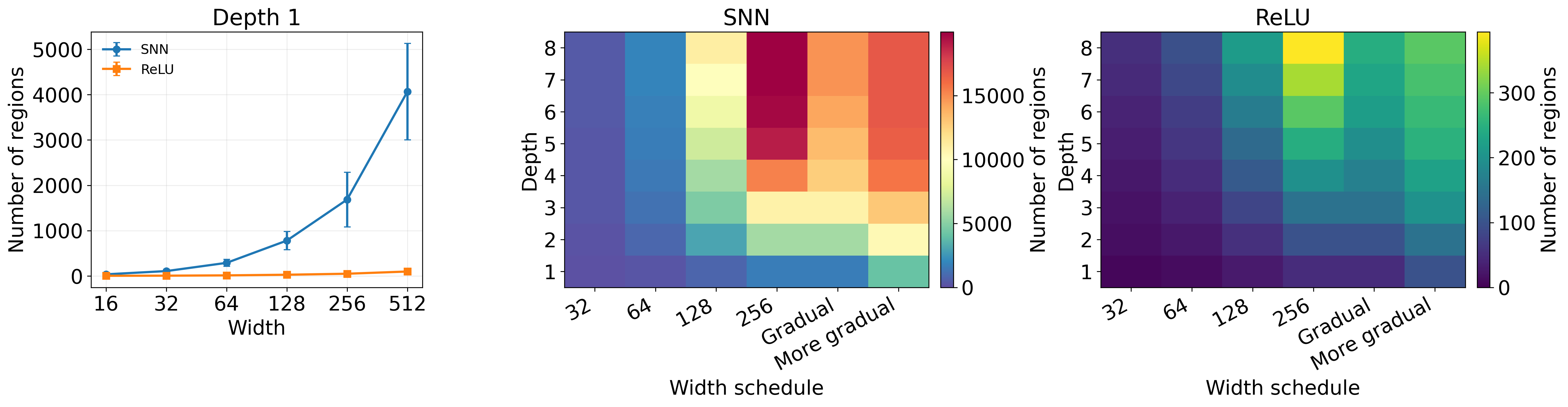}
    \end{tabular}

    \caption{
    Comparison of SNNs and matched ReLU networks, showing the effect of width and depth on the estimated region count, 
    evaluated for MNIST (top) and CIFAR-10 (bottom).  
    }
    \label{fig:cifar_mnist_relu_ttfs_init1}
\end{figure}

\paragraph{Exact enumeration over two-dimensional affine slices} 
To complement the trajectory-based region estimates at initialization, we additionally perform exact region enumeration on bounded two-dimensional affine slices of the input space during training, following the general type of exact polyhedral analysis considered in~\cite{tseran2021maxoutexpectedcomplexity}. 
This experiment is intended as a small scale complement to the  trajectory-based study. 
Figure~\ref{fig:mnist_w10_d3_ttfs} 
illustrates how the exact region partition and the corresponding decision boundary evolve during the first few training epochs for SNN and ReLU networks of depth $3$ and width $10$, respectively. The two-dimensional enumeration reveals substantially richer local partition than is observed along individual one-dimensional trajectories, and the SNN realizes a considerably finer partition 
than the matched ReLU network. 
Full details of the enumeration procedure are provided in Appendix~\ref{appendix:experiments}.

\begin{figure}[h]
    \centering
    \begin{tabular}{c|c}
    \includegraphics[width=0.49\textwidth, clip=true, trim=30cm 0cm 30cm 0cm]{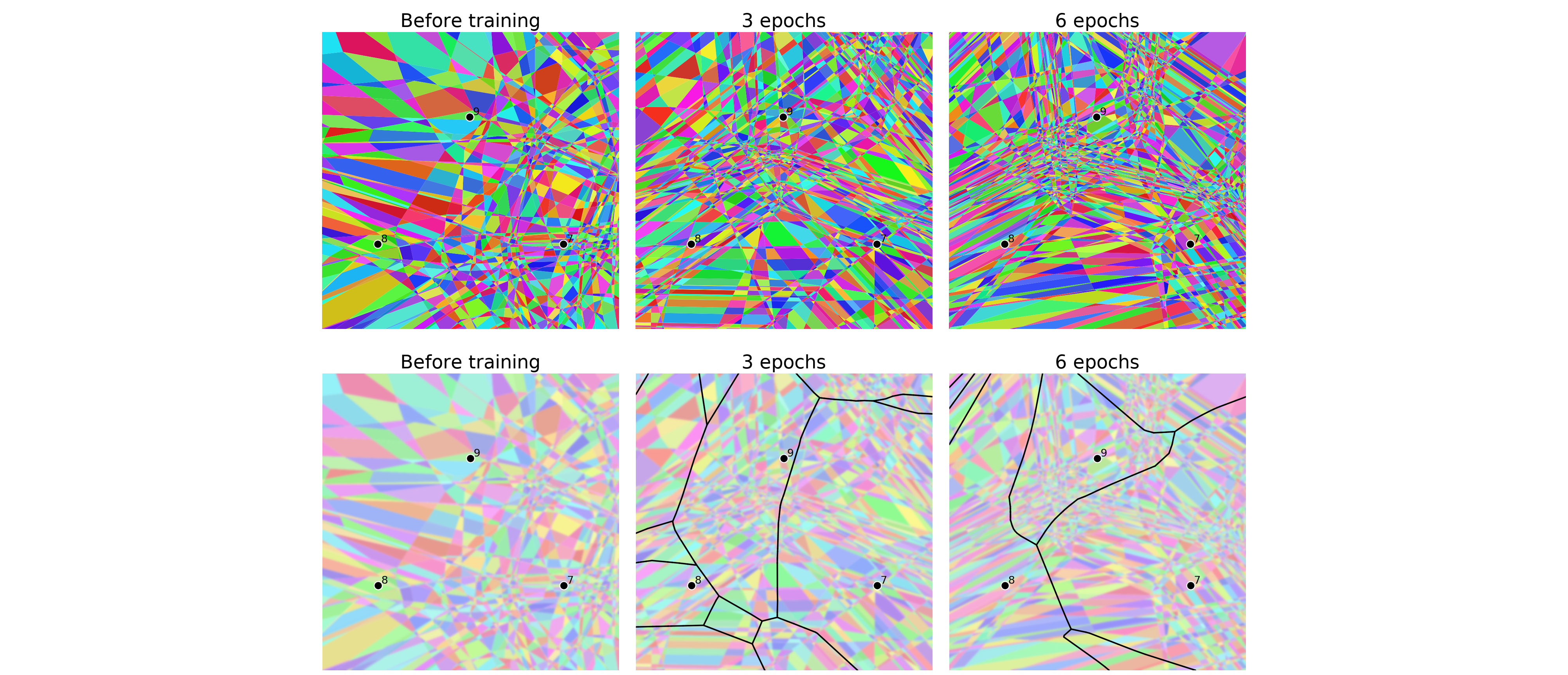}
    &
    \includegraphics[width=0.49\textwidth, clip=true, trim=30cm 0cm 30cm 0cm]{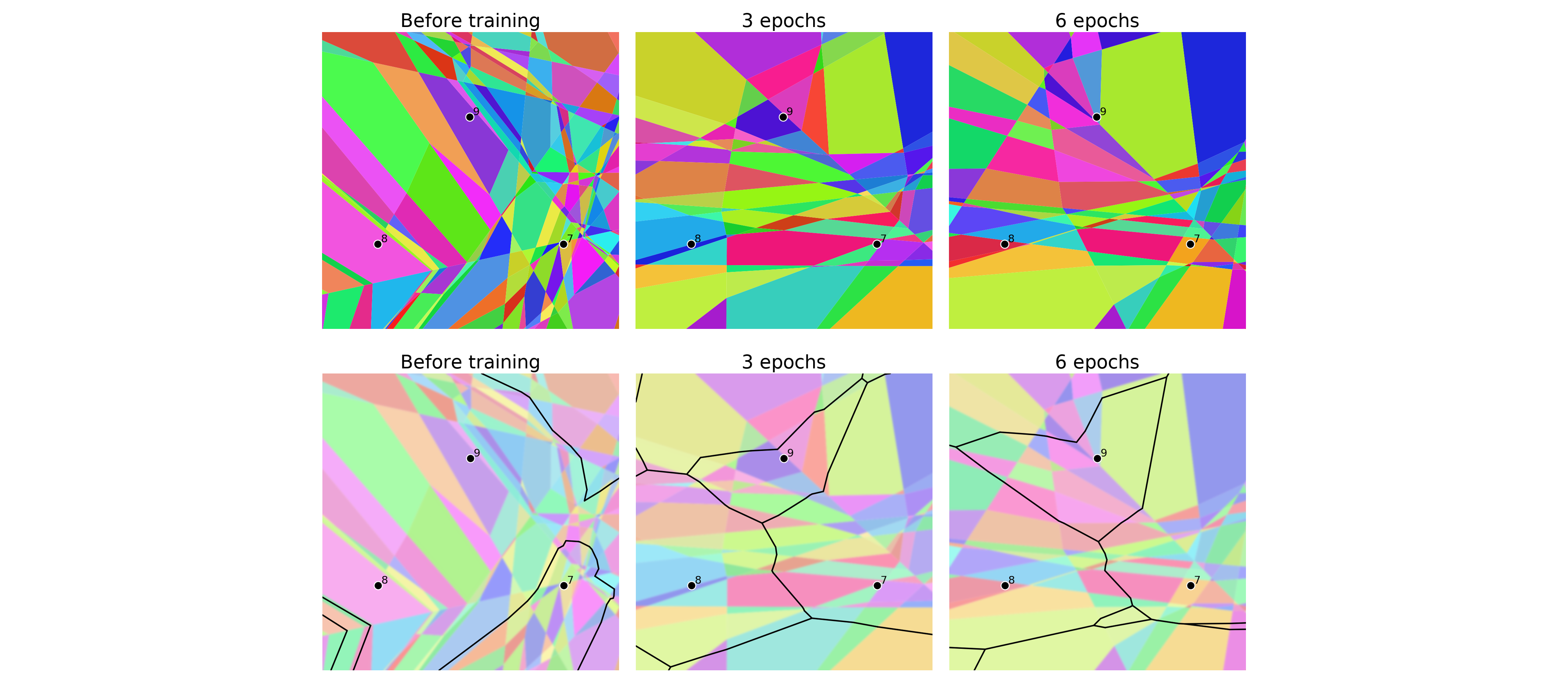}\\
    SNN & ReLU 
    \end{tabular} 
    \caption{Evolution of linear regions (top) and decision boundary (bottom) during training for an SNN (left panels) and a ReLU network (right panels) of depth $3$ and width $10$ on MNIST, visualized on a fixed two-dimensional slice of the input space given by the affine span of three randomly selected input samples.}
    \label{fig:mnist_w10_d3_ttfs}
\end{figure}

\paragraph{Additional experiments} Appendix~\ref{appendix:experiments} reports several complementary experiments. First, we show that the region complexity can depend substantially on the choice of initialization. Second, we compare positive and arbitrary weight SNNs to examine how allowing negative weights affects the estimated region count. Finally, we track the evolution of region complexity during training together with the corresponding test accuracy.

\section{Conclusion}
\label{section:conclusion}

We developed a geometric framework for studying the linear regions of the input-output maps computed by TTFS spiking neural networks in the continuous piecewise-linear setting. 
A central feature of this framework is that the region structure is described in terms of \emph{causal sets}: subsets of presynaptic spikes that arrive early enough to influence the firing time of a neuron. This provides a geometric and combinatorial description of TTFS computation that is adapted to asynchronous spike-time dynamics. 

At the level of a single neuron, we characterized the causal regions and showed that the firing-time map is the pointwise minimum of an exponential number of highly structured affine functions. In particular, a TTFS neuron with $d$ inputs can exhibit up to $2^d-1$ distinct causal sets, giving it a substantially different computational profile from a ReLU neuron with the same number of inputs and parameters. Building on this characterization, we derived upper and lower bounds on the maximal number of linear regions realized by shallow TTFS networks. 
For fixed input dimension, the maximal region count grows polynomially with width, while the underlying causal structure imposes strong dependencies among the regions generated by different neurons. 

For deep networks, we showed that these causal partitions can compose to produce exponential growth with depth. Our construction gives an explicit mechanism by which successive TTFS layers fold the input space and multiply the number of linear regions. Thus, despite the strong causal constraints of individual neurons, deep TTFS networks can generate combinatorially complex input-output maps. We also analyzed networks with shared weights, where the causal sets across neurons become nested. In this setting, we obtained substantially sharper region counts, including an exact count for shallow networks. These results illustrate how architectural constraints such as weight sharing translate directly into constraints on causal geometry and hence on expressivity.

Although both TTFS SNNs and ReLU ANNs compute CPWL maps, their region partitions arise through different mechanisms. In ReLU networks, the regions are determined by binary activation patterns, with each neuron being either active or inactive for a given input. In TTFS SNNs, by contrast, the regions are determined by causal patterns, with each neuron associated with a set of presynaptic spikes that causally contribute to its firing.  This distinction gives rise to fundamentally different combinatorial structures.

Our experiments complement this theoretical picture by examining the region geometry realized by TTFS networks at initialization. They show that TTFS networks can already realize large numbers of linear regions before training and provide an initial comparison with corresponding ReLU architectures. For ReLU networks, variance-preserving schemes such as He initialization \cite{heinit2015} are designed to prevent signals from degenerating through depth. In TTFS networks, the scale of the signals is only part of the picture: the relative ordering and separation of spike times determine causal patterns and the locations of transitions between causal regions. Developing initialization and normalization schemes that explicitly account for this temporal geometry is therefore an interesting direction for future work. We view the experiments presented here as an initial investigation of this question rather than an exhaustive study of SNN initialization.

\paragraph{Future work} 

Several theoretical directions remain open. An immediate objective is to tighten the upper and lower bounds obtained here. One possible route is suggested by the polyhedral representation developed in Section~\ref{section:single_neuron_ttfsmodel}. 
For maxout networks, linear-region counts can be studied through vertices of polytopes constructed by iterated Minkowski sums and convex hulls of the polytopes associated with individual neurons
\cite{ZhangNaitzatLim2018TropicalGeometryDNN,CharisopoulosMaragos2018,montufar2022sharpboundsmaxout,balakin2025maxoutpolytopes}. 
Since TTFS neurons admit a maxout-like representation with constrained parameters, it is natural to ask whether an analogous polytope calculus can be developed for compositions of TTFS neurons. 
Understanding the combinatorics of the resulting structured polytopes may lead to sharper region-counting results for both shallow and deep SNNs.

A second direction is to understand more systematically the role of synaptic delays. Their effect on causal geometry depends strongly on how they are parameterized. The proof of Theorem~\ref{thm:deep_recursive_upper_bound} relies on two structural facts: increasing thresholds produce nested causal sets across neurons within a layer, as in Lemma~\ref{lem:nested_same_weights}, and, within a fixed region, all neurons see the same ordered presynaptic firing times, so their causal sets are prefixes of a common order, as in Lemma~\ref{lem:prefix_lengths_deepnetwork}. Both properties hold in the zero-delay setting considered in the theorem. If delays preserve a common effective presynaptic order within each layer, for example, if all neurons in a layer share the same delay vector from the preceding layer, the same nested-prefix argument continues to apply. By contrast, neuron-specific delay vectors can induce different arrival-time orderings for different postsynaptic neurons. Causal sets then need not be prefixes of a common order, potentially allowing substantially richer causal partitions and requiring different counting techniques.

Finally, our theoretical analysis concerns maximal region complexity, whereas the complexity typically realized by a network may be substantially smaller. Developing an expected-region theory for SNNs under random initialization, and ultimately after training, would complement the extremal bounds developed here and provide a more direct connection between causal geometry and practical network behavior. More generally, the causal-region framework provides a way to relate architectural choices, including depth, width, weight sharing, thresholds, and delays, to the geometry and complexity of the functions computed by TTFS networks. We hope that this framework can serve as a basis for a systematic geometric theory of asynchronous computation in spiking neural networks.

\subsection*{Acknowledgements}

M.\ Singh and G.\ Kutyniok acknowledge support by the project Next Generation AI Computing (gAIn), funded by the Bavarian Ministry of Science and the Arts and the Saxon Ministry for Science, Culture, and Tourism. 
M.\ Singh and G.\ Kutyniok are also partially supported by  DAAD programme Konrad Zuse Schools of Excellence in Artificial Intelligence, sponsored by the Federal Ministry of Education and Research, and additionally, they also acknowledge support from the Munich Center for Machine Learning (MCML). 

G.\ Kutyniok furthermore acknowledges support by the German Research Foundation under Grants DFG-SPP-2298, KU 1446/31-1 and KU 1446/32-1, and by the Bavarian Ministry for Digital Affairs. 

G.\ Mont\'ufar was supported in part by NSF grants DMS-2145630, CCF-2212520, and DMS-2522495; DARPA grant HR00112520014 through the Artificial Intelligence Quantified (AIQ) program; DFG project 464109215 within the Priority Programme SPP 2298 ``Theoretical Foundations of Deep Learning''; and the
BMFTR through DAAD project 57616814 (SECAI).

\subsection*{Use of AI-assisted tools} 
The authors used ChatGPT (OpenAI) to assist with language editing, exploration and refinement of some theoretical statements, and the development of computer code. 
All mathematical arguments, results, and conclusions were developed and independently verified by the authors, who take full responsibility for the content of the manuscript.

\bibliography{bib_file}

\appendix

 \newpage 

\section{Supplementary details on causal sets and causal regions}

\subsection{Proof of Lemma~\ref{lem:causal_feas_and_order_constant}}
\label{appendix:proofs_causalsetsregions}

\begin{proof}[Proof of Lemma~\ref{lem:causal_feas_and_order_constant}]
    Each inequality in \eqref{eq:ineqs} is affine in $\vec t$, and its boundary is one of the hyperplanes in $\cA_{\rm TTFS}$. 
    Since $C$ is a connected component of the complement of all such hyperplanes, none of these affine expressions can change sign on $C$. Thus, the causal set $S$ is constant on $C$. 
\end{proof}

\subsection{Proof of Proposition~\ref{prop:convex_polyhedron_SPhi}}
\label{appendix:convex_polyhedron_SPhiproof}

\begin{proof}[Proof of Proposition~\ref{prop:convex_polyhedron_SPhi}]
    Fix a causal pattern $\mathbf{S} = \bigl(S_i^\ell\bigr)_{\ell,i}$. We show by induction on layers that, on the set of inputs realizing a causal pattern $\mathbf{S}$ up to layer $\ell$, all firing times in the original input $\vec t$, and the defining constraints are linear in $\vec t$. 

    For layer $\ell = 1$. For each neuron $(1,i)$, the presynaptic vector is $\vec t^{(0)}=\vec t\in\R^{N_0}$. Fixing $S_i^1$ means $\vec t$ lies in the single-neuron region $R_{S_i^1}$ determined by weights $(w_{ij}^1)_{j=1}^{N_0}$ and threshold $\theta_i^1$. On this region,
    \begin{equation*}
        t_i^1(\vec t)=t_{v}^{S_i^1}(\vec t) = \frac{\theta_i^1+\sum_{j\in S_i^1} w_{ij}^1\, t_j}{\sum_{j\in S_i^1} w_{ij}^1},
    \end{equation*}
    which is affine in $\vec t$. The constraints defining $S_i^1$ are linear inequalities (see \eqref{eq:halfspace_condition} and \eqref{eq:strict_ineq}) in $\vec t$. Intersecting over all $i \in [N_1]$ shows that the set of inputs realizing the layer-1 part of $\mathbf{S}$ is a open convex polyhedron and that all $t_i^1(\vec t)$ are affine there. 

    Now, we do the inductive step. Assume the claim holds for layer $\ell - 1$. On the set of inputs realizing $S_\Phi$ up to layer $\ell-1$, each presynaptic time $t_j^{(\ell-1)}(\vec t)$ is affine in $\vec t$. Fix a neuron $(\ell,i)$ and its prescribed causal set $S_i^\ell\subseteq[N_{\ell-1}]$. Then, we have
    \begin{equation*}
        t_i^\ell(\vec t) = \frac{\theta_i^\ell+\sum_{j\in S_i^\ell} w_{ij}^\ell\, t_j^{(\ell-1)}(\vec t)}{\sum_{j\in S_i^\ell} w_{ij}^\ell}.
    \end{equation*}
    Since each $t_j^{(\ell-1)}(\vec t)$ is affine linear in $\vec t$, it follows that $t_i^\ell(\vec t)$ is affine in $\vec t$. Moreover, each causal constraint $t_j^{(\ell-1)}(\vec t)<t_i^\ell(\vec t)$ or $t_j^{(\ell-1)}(\vec t)\ge t_i^\ell(\vec t)$ becomes a strict or weak linear inequality in $\vec t$. Intersecting these linear constraints over all neurons $(\ell,i)$ and all layers $\ell$ yields that $R_{\mathbf{S}}$ is a (relatively) open convex polyhedron. The affine formulas show that all firing times are affine linear on $R_{\mathbf{S}}$.
    
    The final statement follows because the sets $\{R_{\mathbf{S}}\}$ partition $\R^{N_0}$ up to boundaries where equalities occur.
\end{proof}

\section{Supplementary details for shallow SNN}
\label{section:appendix_proofs_shallow_snn}

In this section, we provide proofs for the results in the case of shallow SNNs from Section~\ref{section:boundsshallowsnn}. We begin with the arrangement-based upper bound and the asymptotic lower bound, and then turn to the shared weight regime, where exact counting and attainability is established. 

\subsection{Proof of Proposition~\ref{prop:upper_lower_bound_finite_Hsj}}
\label{subsec:appendix_proof_Attfs_ub}

\begin{proof}[Proof of Proposition~\ref{prop:upper_lower_bound_finite_Hsj}]
By Lemma~\ref{lem:causal_feas_and_order_constant}, a causal set $S$ can only change when some relevant defining affine inequality becomes tight, that is, on some hyperplane of the form $t_v^S=t_j, j\notin S$. 
Thus, the tuple of causal sets is constant on each region of the arrangement. Hence, the number of distinct tuples is at most the maximal region count cut out by $N$ hyperplanes in $\R^{d-1}$ with $N=m\kappa_d$. 
Combining this with the naive causal set bound from Proposition~\ref{prop:naive_bounds}, we obtain the desired bound. 

For part (i). For fixed $d$ and large $m$, the top term dominates polynomially, using the fact that $\binom{m\kappa_d}{k} \leq \frac{(m\kappa_d)^k}{k!}$ and $\kappa_d$ only depending on $d$, we can bound
\begin{equation*}
    \sum_{k=0}^{d-1}\binom{m\kappa_d}{k} \leq \sum_{k=0}^{d-1}\frac{(m\kappa_d)^k}{k!} = \mathcal{O}\bigl((m\kappa_d)^{d-1}\bigr) = \mathcal{O}(m^{d-1}),
\end{equation*}

For part (ii). For the case of fixed $m$ and large $d$, we use the crude bound $\sum_{k=0}^{d-1}\binom{m\kappa_d}{k} \leq d\binom{m\kappa_d}{d-1}$ and the inequality $\binom{n}{r} \leq (en/r)^r$ to obtain 
\begin{equation*}
    R_{\max}(d,m) \leq d\left(\frac{em\kappa_d}{d-1}\right)^{d-1}.
\end{equation*}
Since $\kappa_d = d(2^{d-1}-1) = \Theta(d2^d)$, for fixed $m$, there is a constant $C>0$ such that $\frac{em\kappa_d}{d-1} \leq C2^d$ for large $d$. Hence, 
\begin{equation*}
    R_{\max}(d,m) \leq d(C2^d)^{d-1} = 2^{\mathcal{O}(d^2)}.
\end{equation*}
On the other hand, the naive bound satisfies 
\begin{equation*}
    (2^d-1)^m \leq 2^{md} = 2^{\mathcal O(d)}
\end{equation*}
for fixed $m$. Taking the smaller of the two estimates, we obtain the desired result. 
\end{proof}

\subsection{Proof of Proposition~\ref{prop:lower_bound_asymptotic}}
\label{subsec:appendix_proof_asymp_lb}
We now give proof for the asymptotic lower bound. On a suitable local domain in the essentialized space $T$, each hidden neuron acts as a single hyperplane, and the resulting count is given by the bounded region count of a generic hyperplane arrangement. 

\begin{proof}[Proof of Proposition~\ref{prop:lower_bound_asymptotic}]
    We work in the input space $T$. For $\vec t \in T$, let $x_i = t_i - t_d$ for $i = 1,\dots, d-1$. Consider an open box 
    \begin{equation*}
        U \coloneqq \{ \vec x \in \R^{d-1} \colon \lvert x_i + 1 \rvert < \varepsilon \}, \quad \text{for } 0<\varepsilon<\frac{1}{4},
    \end{equation*}
    so that on $U$, we have $x_i<0$ and hence $t_i<t_d$ for all $i<d$, with all times $t_1,\dots,t_{d-1}$ within $2\varepsilon$ of each other. For each hidden neuron $r \in [m]$, let $\vec w^{(r)} \in \R^{d-1}_{>0}$ be the positive weights for the first $d-1$ inputs, with $W_r = \sum_{i=1}^{d-1}w^{(r)}_i$, and choose $\theta_r > 0$ so that
    \begin{equation}\label{eq:eps_condition}
       1-\varepsilon < \theta_r/W_r < 1+\varepsilon.
    \end{equation}
    On $U$, condition \eqref{eq:eps_condition} forces any firing that occurs before $t_d$ to happen only after all of inputs $1,\dots,d-1$ have arrived, hence the causal set is $[d-1]$. Moreover, 
    \begin{equation*}
        t_{v,r}^{[d-1]}(\vec t) = t_d + \frac{\theta_r+ \sum_{i=1}^{d-1}w_i^{(r)}x_i}{W_r},
    \end{equation*}
    so $t_{v,r}^{[d-1]}(\vec t) \leq t_d$ holds exactly on the halfspace $\sum_{i=1}^{d-1}w_i^{(r)} x_i + \theta_r \leq 0$. Thus, on $U$ each neuron divides the space into two causal regions
    \begin{equation*}
        S_r(x)=
            \begin{cases}
            [d-1], & \sum_{i=1}^{d-1}w_i^{(r)} x_i + \theta_r \leq 0,\\
            [d], & \sum_{i=1}^{d-1}w_i^{(r)} x_i + \theta_r > 0.
            \end{cases}
    \end{equation*}
    Choose the $m$ hyperplanes $ H_r \colon \sum_{i=1}^{d-1}w_i^{(r)} x_i + \theta_r \leq 0$ in general position and with all bounded regions contained in $U$ (e.g. choose the offset $\theta_r$ in such a way that all $H_r$ pass sufficiently close to $x^0 = (-1,\dots,-1) \in U$)). Then, distinct bounded regions yield distinct causal sets, hence,
    \begin{equation*}
        R_{\max}(d,m) \geq \#\{\text{bounded regions of }\{H_r\}\} = \binom{m-1}{d-1} \ =\ \Omega(m^{d-1}).
    \end{equation*}
\end{proof}

\subsection{Proof of Proposition~\ref{prop:chain_bound_general_dm}}
\label{appendix:proof_shallowSNN_chainbound}

We now move to the shared-weight regime. The attainability result given in Lemma~\ref{lemma:single_layer_local_braid_cone_attainability} 
shows that every nondecreasing prefix-length sequence can be realized on a nonempty open set inside a fixed ordering cone. 
Figure~\ref{fig:braidcone_attainability_d3_clean} illustrates this construction in the case of input dimension $d=3$ and layer width $m=2$.

\begin{figure}[t]
\centering
\resizebox{0.85\linewidth}{!}{%

\begin{tikzpicture}[line cap=round, line join=round, font=\small, >=Latex]

\draw[thick] (-3,-2.1) -- (3,2.1);     
\draw[thick] (-3, 2.1) -- (3,-2.1);    
\draw[thick] (0.0,-3) -- (0.0,3);      

\node[font=\scriptsize, rotate=30, fill=white, inner sep=1pt] at (3.4,2.4) {$t_1=t_2$};
\node[font=\scriptsize, rotate=-30, fill=white, inner sep=1pt] at (3.50,-2.4) {$t_1=t_3$};
\node[font=\scriptsize, rotate=90, fill=white, inner sep=1pt] at (0,4) {$t_2=t_3$};

\fill[blue!10] (0,0) -- (0,3.4) -- (3,2.1) -- cycle;
\draw[blue!60!black, thick] (0,0) -- (0,3.4);
\draw[blue!60!black, thick] (0,0) -- (3,2.1);

\node[font=\scriptsize] at (1.3,2) {$t_1<t_2<t_3$};
\node[font=\scriptsize] at (-1.3,2) {$t_2<t_1<t_3$};
\node[font=\scriptsize] at (-2,0) {$t_2<t_3<t_1$};
\node[font=\scriptsize] at (-1.4,-2) {$t_3<t_2<t_1$};
\node[font=\scriptsize] at (1.4,-2) {$t_3<t_1<t_2$};
\node[font=\scriptsize] at (2,0) {$t_1<t_3<t_2$};

\fill (0,0) circle (1.1pt);

\coordinate (V) at (8.1,-2.7);

\coordinate (R1) at (8.1,3.9);     
\coordinate (R2) at (12.9,3.9);    

\fill[blue!6] (V) -- (R1) -- (R2) -- cycle;

\draw[thick] (V) -- (R1);
\draw[thick] (V) -- (R2);

\draw[gray, dashed, thick] (9.25,-1.6) -- (9.25,3.9);

\draw[gray, dashed, thick] (10.35,-0.1) -- (10.35,3.9);

\draw[very thick] (8.1,-0.95) -- (12.5,1.25);

\draw[very thick] (8.1,0.10) -- (12.5,2.30);

\node[font=\scriptsize, anchor=west] at (11.3,3.2) {$(1,1)$};
\node[font=\scriptsize, anchor=west] at (9.3,2.9) {$(1,2)$};
\node[font=\scriptsize, anchor=west] at (9.3,0.3) {$(1,3)$};
\node[font=\scriptsize, anchor=west] at (8.1,2.8) {$(2,2)$};
\node[font=\scriptsize, anchor=west] at (8.1,-0.1) {$(2,3)$};
\node[font=\scriptsize, anchor=west] at (8.1,-1.4) {$(3,3)$};

\node[font=\scriptsize, anchor=west,  inner sep=1pt] at (10.15,4.1) {$s_2=\theta_2$};
\node[font=\scriptsize, anchor=west,  inner sep=1pt] at (8.8,4.1) {$s_2=\theta_1$};

\node[font=\scriptsize, rotate=27, inner sep=1pt] at (12.05,1.3)
{$s_3=\frac{\theta_1+s_2}{2}$};
\node[font=\scriptsize, rotate=27, inner sep=1pt] at (12.5,2.60)
{$s_3=\frac{\theta_2+s_2}{2}$};

\end{tikzpicture}
}
\caption{Attainability for $d=3, m=2$ (unit weights, thresholds $\theta_1, \theta_2 > 0$). (a) The braid arrangement partitions input space into six braid cones, each ordering the spike times $t_{\pi(1)} < t_{\pi(2)} < t_{\pi(3)}$ for some permutation $\pi$. (b) Fixing one cone reduces feasible causal sets to prefixes of $\pi$ (fix $t_{\pi(1)}< t_{\pi(2)} < t_{\pi(3)}$ and set $s_i = t_{\pi(i)}$). Inside that cone, the attainability inequalities subdivide it into six open regions labeled by the realizable size pairs $(|S(\theta_1)|,|S(\theta_2)|)\in\{(1,1),(1,2),(1,3),(2,2),(2,3),(3,3)\}$. Choosing the cone $\pi$, and choosing a point in the corresponding subregion realizes the desired nested causal set $(S_1,S_2)$.}
\label{fig:braidcone_attainability_d3_clean}
\end{figure}

\begin{lemma}
\label{lemma:single_layer_local_braid_cone_attainability}
    Let $K, N \geq 1$, and consider an SNN layer with $K$ presynaptic spike times $\vec t \coloneq (t_1, t_2, \dots, t_K) \in \R^K$ and $N$ neurons sharing the same positive weight vector $\vec w \coloneq (w_1, \dots, w_K)$ 
    and having strictly ordered thresholds $0<\theta_1<\theta_2<\dots<\theta_N$. 
    Fix a permutation $\pi$ of $[K]$ and consider the cone $\cB_{\pi} \coloneq \{\vec t  \in \R^K \colon t_{\pi(1)} < t_{\pi(2)} < \cdots < t_{\pi(K)}\}$. 
    Then, for every weakly increasing prefix-length tuple 
    \begin{equation*}
        \vec q = (q_1, q_2, \dots, q_N) \in [K]^N, \quad \text{where  } q_1 \leq \cdots \leq q_N, 
    \end{equation*}
  there exists a nonempty open set $Z \subset \cB_{\pi}$ such that, for every $\vec t \in Z$ and every neuron $r \in [N]$, 
  the causal set is 
  $S_r(\vec t) = S_r = \{\pi(1), \dots, \pi(q_r)\}$. 
\end{lemma}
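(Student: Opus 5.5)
We work inside the fixed cone $\cB_\pi$ and relabel coordinates as $s_k \coloneqq t_{\pi(k)}$ and $\omega_k \coloneqq w_{\pi(k)} > 0$, so that $s_1 < s_2 < \cdots < s_K$. Set $s_{K+1} \coloneqq +\infty$. The plan has two parts. First, translate the prescribed causal sets into inequalities between the thresholds and the values of the common membrane potential at the presynaptic spike times. Second, show that these values can be prescribed freely by choosing the gaps $s_{k+1}-s_k$.

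For the first part, all $N$ neurons share the weight vector, so they share the potential
\[
P(t) = \sum_{k=1}^K \omega_k \sigma(t - s_k).
\]
This function is continuous and nondecreasing, vanishes for $t \le s_1$, and is strictly increasing for $t > s_1$. Define $p_k \coloneqq P(s_k) = \sum_{i<k}\omega_i (s_k - s_i)$, with $p_{K+1} \coloneqq +\infty$. Then $0 = p_1 < p_2 < \cdots < p_K$. If
\[
p_{q} < \theta_r < p_{q+1},
\]
then the firing time $t_r$, defined by $P(t_r)=\theta_r$, lies strictly in $(s_q, s_{q+1})$. Hence $S_r = \{\pi(1),\dots,\pi(q)\}$: the strict inequalities place $t_r$ strictly between the $q$-th and $(q+1)$-th spike times, so it is strictly after the first $q$ spikes and strictly before the rest. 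The target is therefore to find $\vec t \in \cB_\pi$ with $p_{q_r} < \theta_r < p_{q_r+1}$ for every $r \in [N]$.

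For the second part, the increments satisfy
\[
p_{k+1} - p_k = \Bigl(\sum_{i\le k}\omega_i\Bigr)(s_{k+1}-s_k),
\]
so any strictly increasing sequence $0 = p_1 < \cdots < p_K$ is realized by choosing the positive gaps $s_{k+1}-s_k$ accordingly, with $s_1$ arbitrary. It remains to choose the values $p_2,\dots,p_K$. Because $\theta_1<\cdots<\theta_N$ and $q_1\le\cdots\le q_N$, the neurons with $q_r \le q$ form an initial block and those with $q_r \ge q+1$ form the complementary final block. We choose $p_{q+1}$ strictly between the largest threshold in the first block and the smallest threshold in the second block. If one of the blocks is empty, we only need to keep the sequence increasing; for example, we place $p_{q+1}$ above $\theta_N$ or below $\theta_1$ while keeping it above $p_q$. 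This choice is consistent because the thresholds are strictly ordered. The condition $\theta_r > p_1 = 0$ for neurons with $q_r = 1$ holds automatically since $\theta_r > 0$.

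We expect this bookkeeping to be the only delicate step: the choice of the $p$'s has to be made monotone at the same time as it separates the threshold blocks. Taking $p_{q+1}$ as the midpoint between the adjacent separating thresholds, and using small increments when several $p$'s fall in the same gap, should handle it.

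Finally, the point constructed in this way satisfies finitely many strict inequalities: the cone inequalities $s_k<s_{k+1}$ and the conditions $p_{q_r}<\theta_r<p_{q_r+1}$. Each $p_k$ is a continuous (indeed affine) function of $\vec t$ on $\cB_\pi$. Hence all these inequalities persist on an open neighbourhood, which we take as $Z \subset \cB_\pi$, and on $Z$ every neuron has the prescribed causal set.
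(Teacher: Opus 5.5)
Your proposal is correct and follows essentially the same route as the paper. Your $p_k=P(s_k)$ are exactly the paper's accumulated levels $A_a$, and your argument likewise chooses increasing levels that separate the ordered threshold blocks, realizes them through the gaps $s_{k+1}-s_k=(p_{k+1}-p_k)/\sum_{i\le k}\omega_i$, and concludes by openness of the strict inequalities. The differences are minor: you read off the causal set directly from the strict monotonicity of the shared potential, whereas the paper derives the equivalent condition $A_a<\theta\le A_{a+1}$ from the candidate-time identities, and you are slightly more explicit about keeping the levels strictly increasing when several fall in the same threshold gap.
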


\begin{proof}[Proof of Lemma~\ref{lemma:single_layer_local_braid_cone_attainability}]
    For $a = 1, \dots, K$, write the ordered input spike times as $y_a \coloneq t_{\pi(a)}$. 
    For a threshold $\theta > 0$, the candidate time corresponding to the prefix $\{\pi(1),\dots,\pi(a)\}$ is 
    \begin{equation*}
        t^a(\theta;\vec t) = \frac{\theta+\sum_{j=1}^a w_{\pi(j)}y_j}{W_a}, \quad \text{where  } W_a \coloneq \sum_{j=1}^{a}w_{\pi(j)}.
    \end{equation*}
    The prefix $\{\pi(1),\dots,\pi(a)\}$ is feasible exactly when $y_a < t^a(\theta;\vec t) \leq y_{a+1}$, with the convention $y_{K+1} \coloneq +\infty$. 
    We now express these feasibility inequalities in a convenient form. 
    Define
    \begin{equation*}
        A_a \coloneq \sum_{j=1}^a w_{\pi(j)}(y_a - y_j)
    \end{equation*}
    to be the total weighted input accumulated from the first $a$ spikes by the time the $a$th spike arrives. 
    A direct computation shows that 
    \begin{equation}\label{eq:propattain_1}
        t^a(\theta;\vec t) - y_a = \frac{\theta-A_a}{W_a}. 
    \end{equation}
    Hence, $t^a(\theta;\vec t) > y_a \iff \theta > A_a$. 
    Similarly, since $A_{a+1} = A_a + W_a(y_{a+1} - y_a)$, one obtains
    \begin{equation}\label{eq:propattain_2}
        y_{a+1}-t^a(\theta;\vec t) = \frac{A_{a+1}-\theta}{W_a}. 
    \end{equation}
    Therefore, from \eqref{eq:propattain_1} and \eqref{eq:propattain_2}, 
    the candidate with prefix length $a$ 
    is feasible precisely when $A_a < \theta \leq A_{a+1}$. 
    
    Thus, it suffices to choose 
    increasing levels $0 = A_1 < A_2 <\cdots < A_K < A_{K+1} \coloneq +\infty$ 
    such that, for every neuron $r\in[N]$, 
    $$
    A_{q_r} < \theta_r <  A_{q_r+1}.
    $$
    Such a choice is possible because both the prefix lengths and thresholds are ordered,  $q_1 \leq \cdots \leq q_N$ and $\theta_1 < \cdots < \theta_N$. 
    Indeed, for each $a=2,\dots,K$, all neurons $r$ satisfying $q_r<a$ precede those satisfying $q_r\geq a$, and hence
\[
\max_{r:q_r<a}\theta_r
<
\min_{r:q_r\geq a}\theta_r.
\]
    Therefore, we can place $A_a$ between the thresholds assigned to prefix lengths less than $a$ and those assigned to prefix lengths at least $a$.

    Once such levels $A_a$ are chosen, it remains to realize them by actual spike times. 
    Set $y_1=0$ and recursively define 
    \begin{equation*}
        y_{a+1} - y_a = \frac{A_{a+1} - A_a}{W_a}, \qquad a=1,\dots,K-1.
    \end{equation*}
    These gaps are positive, so the resulting point $\vec t$ lies in $\cB_\pi$. 
    Moreover, the identity
$A_{a+1}=A_a+W_a(y_{a+1}-y_a)$ shows recursively that the resulting spike
times realize precisely the chosen levels $A_a$.
    By construction, every neuron $r$ satisfies $A_{q_r} < \theta_r < A_{q_r+1},$ and therefore
    \begin{equation*}
        t_{\pi(q_r)} < t^{q_r}(\theta_r;\vec t) < t_{\pi(q_r+1)}.
    \end{equation*}
    Hence, neuron $r$ has causal set $S_r = \{\pi(1),\dots,\pi(q_r)\}.$ Since all inequalities used above are strict, they remain valid on some neighborhood of the constructed point $\vec t$ inside $\cB_\pi$. Hence, there exists a nonempty open set $Z \subset \cB_\pi$ on which the same causal sets persist. 
\end{proof}

We now prove the result for shallow SNN in the shared weight regime by first counting the number of nondecreasing sequences of nonempty subsets of $[d]$, and then applying the local attainability result above to show that each such sequence is realizable by a shared weight shallow SNN.

\begin{proof}[Proof of Lemma~\ref{lem:nested-squences}] 
    Consider a nondecreasing sequence $\emptyset \neq S_1 \subseteq S_2 \subseteq \cdots \subseteq S_m \subseteq [d]$. Fix the last set $B \coloneqq S_m$ with $\lvert B \rvert = k$.
    Any chain $S_1 \subseteq S_2 \subseteq \cdots \subseteq S_m = B$ with $S_1\neq\emptyset$ is uniquely determined by, for each $j\in B$, the first index $p_j\in\{1,\dots,m\}$ at which $j$ enters the chain, i.e., $p_j \coloneq \min\{r \in [m] \colon j\in S_r\} \in [m]$. Thus, there are $m^k$ chains ending at $B$. Among them, $S_1=\emptyset$ holds exactly when $p_j\in\{2,\dots,m\}$ for all $j\in B$, which gives $(m-1)^k$ chains. Therefore, the number of chains ending at $B$ with $S_1 \neq \emptyset$ equals $m^k-(m-1)^k$. Summing over all choices of $B\subseteq[d]$ with $\lvert B \rvert = k$ 
    gives 
    \begin{equation}\label{eq:mkmkminus1}
        \sum_{k=1}^{d} \binom{d}{k}(m^k - (m-1)^k)
    \end{equation}
    and by the application of binomial theorem to \eqref{eq:mkmkminus1}, we obtain \eqref{eq:counting_chains}. 
\end{proof}

\begin{proof}[Proof of Proposition~\ref{prop:chain_bound_general_dm}] By Lemma~\ref{lem:nested_same_weights}, for each fixed $\vec t$ the causal sets are nested as thresholds increase, hence, $S_1 \subseteq \cdots \subseteq S_m$. Therefore, Lemma~\ref{lem:nested-squences} gives 
\begin{equation*}
    R_{\max}^{\rm {shared}} \leq (m+1)^d - m^d.
\end{equation*}
It remains to show that every nested sequence counted by Lemma~\ref{lem:nested-squences} is realizable. 
Let $\emptyset \neq S_1\subseteq S_2\subseteq\cdots\subseteq S_m\subseteq[d]$ be an arbitrary such sequence. 
Set $B = S_m$ and $k = \lvert B \rvert$ as before. 
For each $j$, let $p_j = \min\{r \in [m] \colon j \in S_r\}$ denote the first index at which $j$ enters the chain.
Order the elements $b_1, \dots, b_k$ in $B$ such that $p(b_1) \leq \cdots \leq p(b_k)$, and extend to a permutation $\pi$ of $[d]$ by arranging the elements of $[d] \setminus B$ after $b_1, \dots, b_k$ in arbitrary order. For each $r \in [m]$, define $q_r \coloneq \lvert S_r \rvert \in [k]$. Then, because the chain is nested and the ordering $b_1, \dots, b_k$ is compatible with their entry indices, 
we have 
\begin{equation*}
    S_r = \{b_1, \dots, b_{q_r}\} = \{\pi(1), \dots, \pi(q_r)\}, \qquad r \in \{1,\dots, m\},
\end{equation*} 
and the sequence $(q_1,\dots, q_m)$ is nondecreasing. Now, we apply Lemma \ref{lemma:single_layer_local_braid_cone_attainability} with $K = d, N=m$, and the braid cone $\cB_{\pi}$. Since $(q_1, \dots, q_m) \in [d]^m$ is nondecreasing, the lemma yields a nonempty open set $U \subset \cB_\pi$ such that for every $\vec t \in U$, neuron $r$ has causal set $S_r = \{\pi(1), \dots, \pi(q_r)\}$. Hence, the prescribed chain is realized on $U$. Since the chain was arbitrary, every chain counted in \eqref{eq:chain_count} is realizable. Therefore the upper bound in \eqref{eq:chain_count} is tight. 
\end{proof}

\section{Supplementary details for deep SNN}
\label{appendix:deepSNN}

In this subsection, we give the proofs used in the deep network analysis. 

\begin{proof}[Proof of Lemma~\ref{lem:prefix_lengths_deepnetwork}] As a result of Lemma~\ref{lem:nested_same_weights}, the threshold ordering and shared weights imply nested causal sets across neurons in the same layer, and strict ordering of firing times. Indeed, if $t^{(\ell)}_r \leq t^{(\ell)}_{r+1}$, then any presynaptic index $j$ satisfying $t^{(\ell-1)}_j < t^{(\ell)}_r$ also satisfies $t^{(\ell-1)}_j < t^{(\ell)}_{r+1}$. This yields $S^{(\ell)}_r \subseteq S^{(\ell)}_{r+1}$, which for prefixes is exactly $q_{\ell,r} \leq q_{\ell,r+1}$.
\end{proof}

The next example illustrates how Lemma~\ref{lem:prefix_lengths_deepnetwork} and Theorem~\ref{thm:deep_recursive_upper_bound} combine to show, in the case of a two-layer network, all potential causal sets for the neurons in the second layer.

\begin{example}[$d=N_1=N_2=3$]
    Consider a two hidden layer network with $N_0 = d = 3, N_1=N_2 = 3$ in the shared weight regime. 
    
    \noindent \emph{Layer 1 bound.}
    Proposition~\ref{prop:chain_bound_general_dm} gives
    \begin{equation*}
        C(3,3) = \sum_{k=1}^{3}\binom{3}{k}\bigl(3^k-2^k\bigr) = \binom31(3-2)+\binom32(9-4)+\binom33(27-8) = 3+15+19=37.
    \end{equation*}

    \noindent \emph{Layer 2 bound via prefixes (relative to the layer-1 order).} On some region of the input space, the three layer-1 spike times admit a strict order, that is, there exists a permutation $\pi$ of $\{1,2,3\}$ such that $t^{(1)}_{\pi(1)}<t^{(1)}_{\pi(2)}<t^{(1)}_{\pi(3)}$. 
    Lemma~\ref{lem:prefix_lengths_deepnetwork} implies that each layer-2 neuron has a prefix causal set with respect to this order, that is, one of $\{\pi(1)\},\quad \{\pi(1),\pi(2)\},\quad \{\pi(1),\pi(2),\pi(3)\}$ and, the corresponding prefix lengths $(q_{2,1},q_{2,2},q_{2,3})\in\{1,2,3\}^3$ form a weakly increasing tuple. Hence the number of possible layer-2 label tuples is
    \begin{equation*}
        \binom{N_1+N_2-1}{N_2}=\binom{3+3-1}{3}=\binom{5}{3}=10.
    \end{equation*}
    Therefore, Theorem~\ref{thm:deep_recursive_upper_bound} yields $R_\max(3;3;3) \leq C(3,3) \cdot \binom53 = 370$. The $10$ weakly increasing triples in $\{1,2,3\}^3$
    \begin{equation*}
        (1,1,1),(1,1,2),(1,1,3),(1,2,2),(1,2,3),(1,3,3),(2,2,2),(2,2,3),(2,3,3),(3,3,3).
    \end{equation*}
\end{example}

\begin{lemma}[Obstruction to a two-layer shared-weight multi-fold]
\label{lemma:shared-weight-obstruction}
Consider the two-layer multi-fold architecture $(2,m,2)$ without
synaptic delays. 
Suppose that the neurons within each layer share their
incoming positive weights and may differ only in their thresholds. 
For the ordered-threshold first-layer construction described above, the relative firing time of the two second-layer neurons,
\[
    h(x)=z_+(x)-z_-(x),
\]
is monotone whenever their thresholds are ordered. In particular, the
second layer cannot fold two or more consecutive intervals \(J_k\)
affinely and bijectively onto a common nondegenerate output interval.
\end{lemma}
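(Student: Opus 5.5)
Since the two second-layer neurons share the same incoming weight vector $\alpha=(\alpha_1,\dots,\alpha_m)$ and differ only in their thresholds, say $\theta^-<\theta^+$, I would not compute $h$ explicitly. Instead I would work with the potential: both neurons see the same membrane potential $P(t)=\sum_r \alpha_r\sigma(t-y_r)$, with $y_r=t_1+\phi_r(x)$ from \eqref{eq:multifold_txn}, and they fire at its first hitting times of $\theta^-$ and $\theta^+$. Translation invariance gives $P(t)=G_x(t-t_1)$, where $G_x(s)=\sum_r\alpha_r\sigma(s-\phi_r(x))$, so $h$ depends only on $x$. Moreover, $h(x)=\tau^+(x)-\tau^-(x)$, where $\tau^\pm(x)$ are the hitting times of $G_x$ at the levels $\theta^\pm$. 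Since $G_x$ is strictly increasing above its first kink, we can write
\[
h(x)=\int_{\theta^-}^{\theta^+}\frac{d\theta}{G_x'(\tau(\theta;x))},
\]
where $G_x'$ is the slope at the hitting time of level $\theta$.

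The key step is a monotonicity property of each $\phi_r$ in $x$. For the ordered-threshold first layer with shared weights $w_{r1}=w_{r2}$, we have $\phi_r(x)=\min\{(x+c_r)/2,\,c_r\}$. Each $\phi_r$ is nondecreasing in $x$, and its slope in $x$ is either $1/2$ or $0$. Consequently, as $x$ increases, every presynaptic relative spike time is delayed. The pointwise ordering $\phi_1\le\cdots\le\phi_m$ is preserved because of Lemma~\ref{lem:nested_same_weights}.

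The main obstacle is turning this into monotonicity of the slope of $G_x$ at a fixed level. At level $\theta$, the slope equals $W_{S(\theta,x)}=\sum_{r\in S}\alpha_r$, where $S(\theta,x)$ is the prefix of first-layer neurons that have fired before the hitting time. I would show that $S(\theta,x)$ is monotone in $x$. I would try to do this through the quantities $A_a$ from the proof of Lemma~\ref{lemma:single_layer_local_braid_cone_attainability}: the level set $\{A_a<\theta\le A_{a+1}\}$ selects prefix length $a$, and $A_a(x)=\sum_{j\le a}\alpha_j(\phi_a(x)-\phi_j(x))$. For $x\in J_k$, the derivative of $\phi_a-\phi_j$ with $j<a$ is $0$ if both are frozen or both are moving, and $+1/2$ if $j\le k<a$. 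Hence each $A_a$ is nondecreasing in $x$. It follows that the prefix length at a fixed level $\theta$ is nonincreasing in $x$, so $W_{S(\theta,x)}$ is nonincreasing. Therefore the integrand above is nondecreasing in $x$ for every $\theta$, and $h$ is nondecreasing. This handles ordered thresholds $\theta^-<\theta^+$; the reverse ordering gives a nonincreasing $h$.

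Finally, a monotone continuous $h$ cannot map two consecutive intervals $J_k$ and $J_{k+1}$ each bijectively onto the same nondegenerate interval $I'$. The images $h(J_k)$ and $h(J_{k+1})$ of adjacent intervals under a monotone map intersect in at most one point, namely $h(c_{k+1})$. So the required sawtooth, which needs alternating slope signs as in \eqref{eq:multifold_alternating}, is impossible.
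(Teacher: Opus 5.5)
Your proof is correct, but it takes a different route from the paper's.

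\textbf{The paper's route.} It works directly in $x$. On each $J_k$, the two second-layer neurons have nested prefix causal sets $\{1,\dots,s_-\}\subseteq\{1,\dots,s_+\}$ by Lemma~\ref{lem:nested_same_weights}. The paper differentiates the firing time $z_s=(\eta+\sum_{r\le s}q_ry_r)/Q_s$, giving $z_s'=\sum_{r\le s}q_ry_r'/Q_s$. This is a running weighted average of the first-layer slopes $y_r'\in\{0,\alpha\}$. It is nondecreasing in $s$ because those slopes are nondecreasing in $r$, and hence $z_+'\ge z_-'$.

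\textbf{Your route.} You fix $x$ and integrate over the threshold level. You then show that the potential's slope $W_{S(\theta,x)}$ at each fixed level is nonincreasing in $x$, using the levels $A_a$ from Lemma~\ref{lemma:single_layer_local_braid_cone_attainability}.

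\textbf{How they relate.} The two arguments are dual views of one fact: the hitting time $\tau(\theta,x)$ has increasing differences. The paper checks this as ``$\partial_x\tau$ is nondecreasing in $\theta$'', and you check it as ``$\partial_\theta\tau$ is nondecreasing in $x$''. Both rest on $\phi_a'-\phi_j'\ge 0$ for $j<a$.

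\textbf{What each buys.} The paper's route is shorter and avoids the inverse-function and integral representation. Yours never has to track where the second-layer causal sets change, and never invokes their nesting explicitly, since that is built into the integral from $\theta^-$ to $\theta^+$. It also gives an explicit formula for $h$ that could be used quantitatively.

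\textbf{Minor remark.} You specialized to $w_{r1}=w_{r2}$, which matches the construction of Lemma~\ref{lemma:multi-fold}. The paper's proof allows arbitrary shared first-layer weights $(a,b)$. Your argument carries over verbatim with the slope $1/2$ replaced by $b/(a+b)$.
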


\begin{proof}
Write \(x=t_2-t_1\) and, by translation equivariance, set \(t_1=0\).
On each interval \(J_k=(c_k,c_{k+1})\), the first-layer firing times satisfy
\[
    y_r'(x)=
    \begin{cases}
        0, & r\leq k,\\
        \alpha, & r>k,
    \end{cases}
    \qquad
    \alpha\coloneqq\frac{b}{a+b}>0,
\]
and remain ordered as \(y_1(x)<\cdots<y_m(x)\).

Let \(q_1,\ldots,q_m>0\) be the weights shared by the two second-layer
neurons, and suppose that their thresholds satisfy \(\eta_-<\eta_+\).
Their causal sets are therefore ordered prefixes
\[
    S_-=\{1,\ldots,s_-\},
    \qquad
    S_+=\{1,\ldots,s_+\},
    \qquad
    s_-\leq s_+.
\]
Writing \(Q_s=\sum_{r=1}^s q_r\), the firing time of a neuron with causal
set \(\{1,\ldots,s\}\) has derivative
\[
    z_s'(x)
    =
    \alpha\frac{\sum_{r=k+1}^s q_r}{Q_s}
    =
    \alpha
    \begin{cases}
        0, & s\leq k,\\[1ex]
        1-\dfrac{Q_k}{Q_s}, & s>k.
    \end{cases}
\]
For fixed \(k\), this quantity is nondecreasing in \(s\). Consequently,
\(z_+'(x)\geq z_-'(x)\) wherever the derivatives exist, and hence
\[
    h'(x)=z_+'(x)-z_-'(x)\geq0.
\]
Since \(h\) is continuous and piecewise affine, it is nondecreasing.
It therefore cannot map two consecutive intervals affinely and
bijectively onto the same nondegenerate interval.
\end{proof}

\section{Experimental details}
\label{appendix:experiments}

Unless otherwise stated, all experiments are performed at random initialization without training. 
For each configuration, region complexity is estimated along linear trajectories using $20{,}000$ intervals, corresponding to $20{,}001$ sampled points. 
Therefore, the largest observable number of regions along a trajectory is $20{,}001$.
We additionally report an experiment tracking region complexity during training. 

\paragraph{SNN forward pass} 
The affine linear encoder maps the real-valued input to input spike times, and these spike times are then processed by the hidden and output SNN layers. 
Consider a neuron $j$ with $d$ presynaptic spike times $t_1, \dots, t_d$. We first sort them as $t_{(1)} \leq \cdots \leq t_{(d)}$, together with their corresponding weights. For the prefix $S_{j,k} = \{(1), \dots, (k)\}$, the candidate firing time is
\begin{equation}\label{eq:expSjcausalset}
    t_j^{S_{j,k}} = \frac{\theta_j + \sum_{r = 1}^{k}w_{r}(t_{(r)}}{\sum_{r=1}^{k}w_{(r)j}}.
\end{equation}
For positive weights, the causal set $S_j$ is the first prefix for which $t_j^{S_{j,k}} < t_{(k+1)}$, with the full prefix always admissible. The output spike time is then $t_j = t_j^{S_j}$. 
The weighted numerator and denominator are accumulated sequentially as the sorted inputs are processed. Thus, after sorting, finding $S_j$ requires checking at most $d$ prefix candidates rather than testing the $2^d$ possible subsets of presynaptic neurons. 
In this sense, the forward pass is explicitly event-based and asynchronous, that is, the effective input to each neuron is not a full activation vector in the ANN sense, but an ordered set of arrival times together with weights and delays. 
This forward pass construction follows the implementation in \cite{Neuman2025SNNTTFS}, but is adapted here to our experimental setting.

\paragraph{Initialization} 
For the ReLU ANN, all weights are initialized with Kaiming normal (He) initialization \cite{heinit2015}, while biases are independently sampled as $b_j \sim \mathcal N(0,0.01^2)$. 

\underline{\emph{Init 1:}} 
For SNN, the weights are sampled according to $w_{ij} = \exp\bigl(\sigma z_{ij}-\frac{\sigma^2}{2}\bigr)$, where $z_{ij} \sim \mathcal N(0,1)$, with $\sigma=0.5$. 
The lognormal distribution guarantees positive weights while the above parametrization keeps their mean equal to one. Thresholds are sampled independently as $\log \theta_j \sim U(\log 0.25,\log 4)$. We refer to this setting as \emph{Init~1}. It is the default SNN initialization used for the experiments reported in the main text.

\underline{\emph{Init 2:}} 
 For SNNs we additionally consider a second initialization, denoted \emph{Init~2} to illustrate the sensitivity of the region complexity to initialization, while emphasizing that a systematic study of SNN initialization is beyond the scope of this work. 
The weights use the same lognormal parametrization as Init~1 but with the broader scale $\sigma=1.5$. 
To initialize the thresholds, let $t_{(1)}< \cdots< t_{(d)}$ denote the sorted presynaptic spike times.
For a candidate causal prefix of size $k$, the threshold at which the neuron changes between prefixes of sizes $k$ and $k+1$ is
\begin{equation*}
    \theta^\star_{j,k}(x) = \sum_{r=1}^{k} w_{(r)j} \bigl(t_{(k+1)} - t_{(r)}\bigr).
\end{equation*}
We assign different neurons different values $k_j$ and initialize
\begin{equation*}
    \theta_j = \operatorname{median}_{x\in\mathcal D_{\mathrm{cal}}} \theta^\star_{j,k_j}(x) \exp(\epsilon_j), \quad \text{where  }
    \epsilon_j\sim\mathcal N(0,0.08^2),
\end{equation*}
where $\mathcal D_{\mathrm{cal}}$ is a fixed small subset of the training data used only to calibrate the initialization. 

Thus, different neurons are initialized near different causal-prefix switching conditions. After each SNN layer, we additionally apply a fixed neuron-wise affine transformation $\widetilde t_j=\gamma_j t_j+\beta_j$, calibrated at initialization to help preserve variation in relative spike times across neurons through depth. These quantities are fixed after initialization and are never trained.

\underline{\emph{Init 3:}} 
For training experiments, we also consider another SNN training-oriented initialization \emph{Init~3}. 
Init~3 retains the positive lognormal weights of Init~1, but normalizes incoming weights and calibrates thresholds near a range of causal-prefix switching conditions, following the idea used in Init~2 and motivated more broadly by prior work on SNN-specific normalization and data-dependent initialization~\cite{wu2018spatiotemporal, bojkovic2024datadriven, che2024ettfs}.

\underline{\emph{Init 4:}} 
We also consider an initialization with arbitrary weights, 
\begin{equation*}
    w_{ij} = \sigma z_{ij}, \quad \text{where  } z_{ij} \sim \mathcal N(0,1).
\end{equation*}
Threshold distribution is kept the same as in Init~1. 
We denote this configuration as \emph{Arbitrary} in Figure~\ref{fig:cifar_mnist_pos_neg}. For arbitrary weights, a causal prefix is admissible only when its cumulative weight is positive and the corresponding candidate firing time occurs before the next presynaptic arrival.

\paragraph{Number of regions at initialization} 

Figure~\ref{fig:cifar_mnist_init2} shows that Init~2 produces larger estimated region counts than Init~1 shown in
Figure~\ref{fig:cifar_mnist_init1}. 
The qualitative behavior is consistent in both initializations, growing both with width and depth. The difference between the two initializations is already visible for depth-one networks. 
In the considered architectures, the empirical growth with depth does not appear exponential; rather, adding depth has a substantially larger effect when the earlier layers are wider. This partially supports the observation that sufficiently wide layers preserve a higher-dimensional affine image across depth, allowing subsequent layers to introduce additional subdivisions. Nonetheless, the results suggest that the number of regions can change significantly with the parameter initialization. 

\begin{figure}[t]
    \centering
    \begin{tabular}{
        @{}
        >{\centering\arraybackslash}m{0.01\textwidth}
        >{\centering\arraybackslash}m{0.97\textwidth}
        @{}
    }
        \rotatebox{90}{MNIST}
        &
        \includegraphics[width=\linewidth]
            {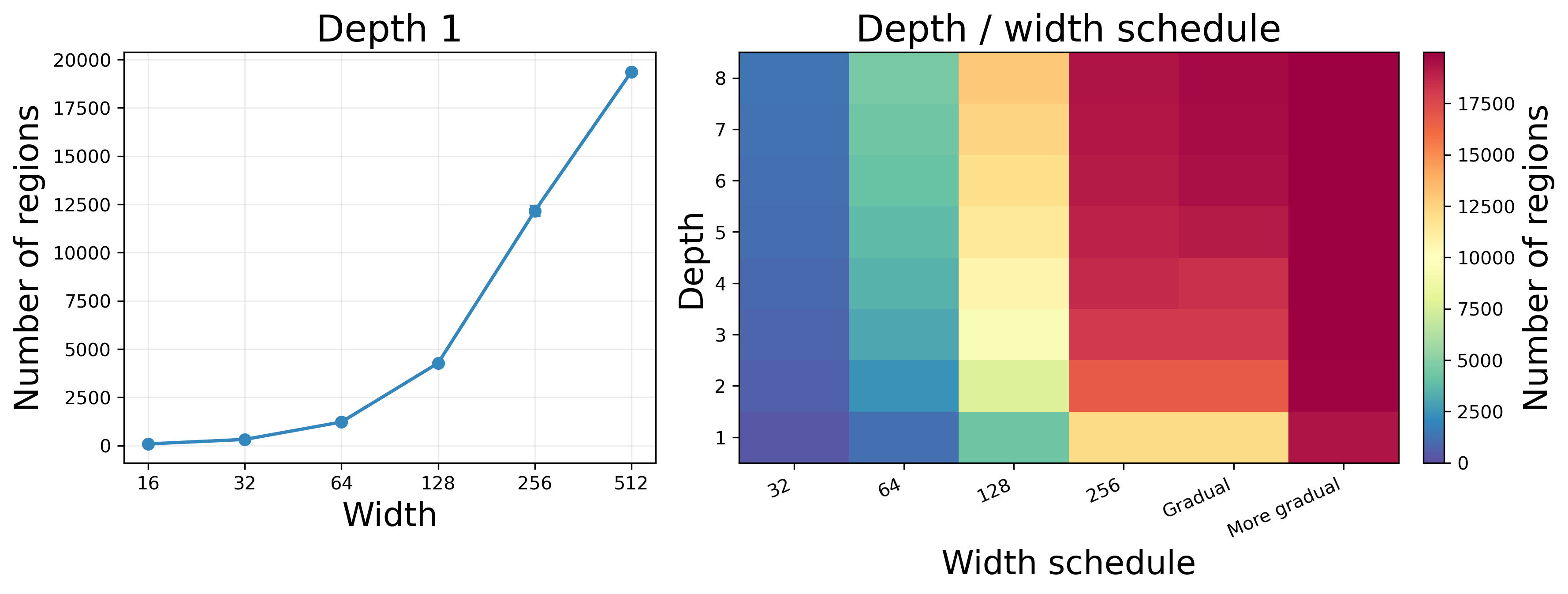}
        \\[0.5em]
        \rotatebox{90}{CIFAR-10}
        &
        \includegraphics[width=\linewidth]
            {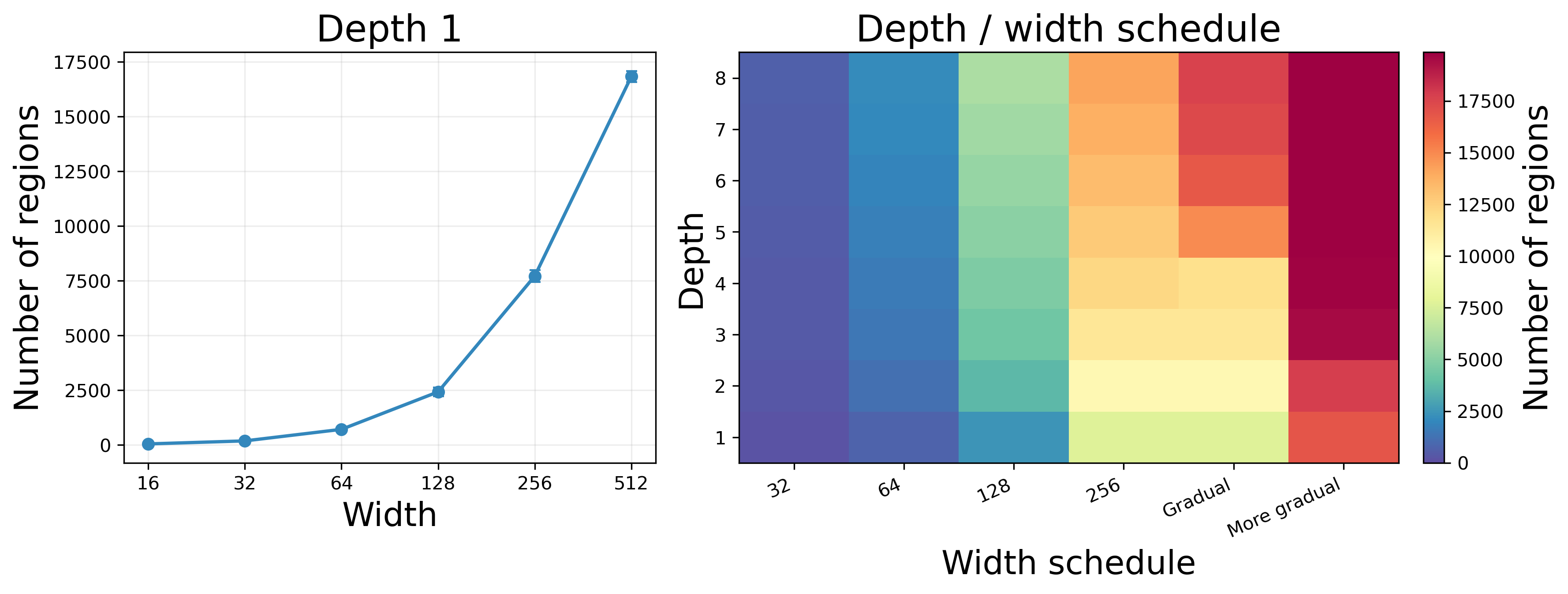}
    \end{tabular}
    \caption{
    Effect of width and depth on the number of causal regions of SNNs with alternative initialization (Init~2), evaluated for MNIST (top) and CIFAR-10 (bottom).}
    \label{fig:cifar_mnist_init2}
\end{figure}

\paragraph{Number of regions for positive vs arbitrary weights} 
Figure~\ref{fig:cifar_mnist_pos_neg} compares the two positive weight initializations Init~1 and Init~2 with the arbitrary weight initialization Init~4. 
It shows that the arbitrary weight SNN generally realizes more linear regions than Init~1; 
although Init~2 attains the largest region counts across most of the considered architectures. 

One might expect arbitrary weights to reduce the number of admissible causal prefixes, since negative cumulative input cannot trigger a spike. 
The observation that Init~4 nevertheless realizes more regions that Init~1 may be explained by cancellations between positive and negative weights, which can make cumulative prefix sums more sensitive to changes in arrival time ordering, leading to more frequent changes in the selected causal prefix. 
We leave a systematic study of this effect, including the interaction between sign, weight magnitude, and firing admissibility, to future work.

\begin{figure}[t]
    \centering
\begin{tabular}{
        @{}
        >{\centering\arraybackslash}m{0.01\textwidth}
        >{\centering\arraybackslash}m{0.97\textwidth}
        @{}
    }
        \rotatebox{90}{MNIST}
        &
            \includegraphics[width=\linewidth]
                {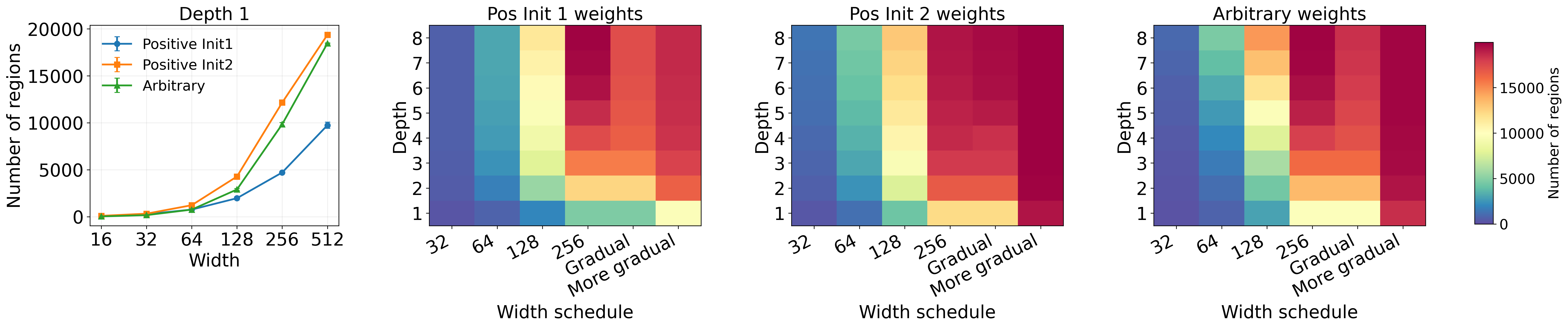}
        \\ 
        \rotatebox{90}{CIFAR-10}
        &
            \includegraphics[width=\linewidth]
                {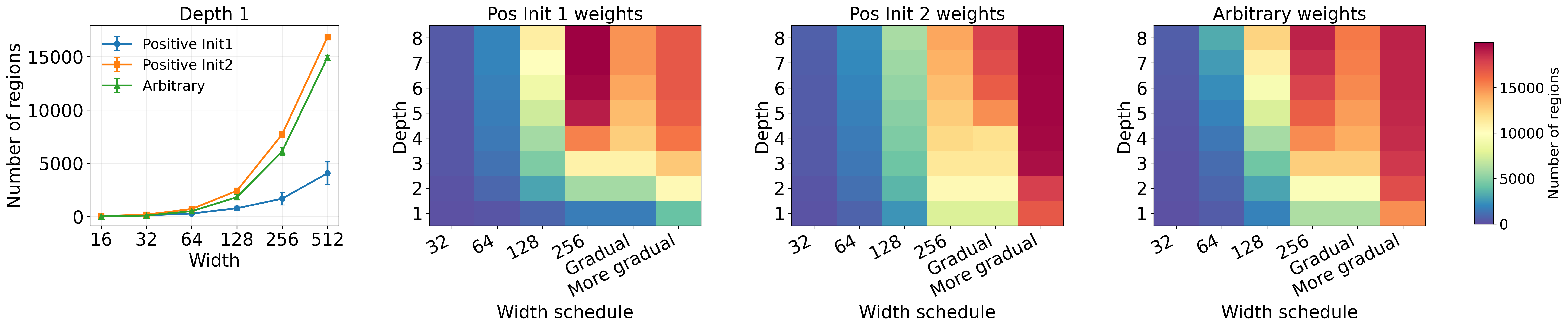}
    \end{tabular}

    \caption{
    Comparison of two positive weight initializations with arbitrary weights. Shown are the estimated region counts for arbitrary weight SNN, illustrating the effects of width and depth, evaluated for MNIST (top) and CIFAR-10 (bottom).}
    \label{fig:cifar_mnist_pos_neg}
\end{figure}

\begin{figure}[t]
    \centering
    \includegraphics[width=\textwidth]{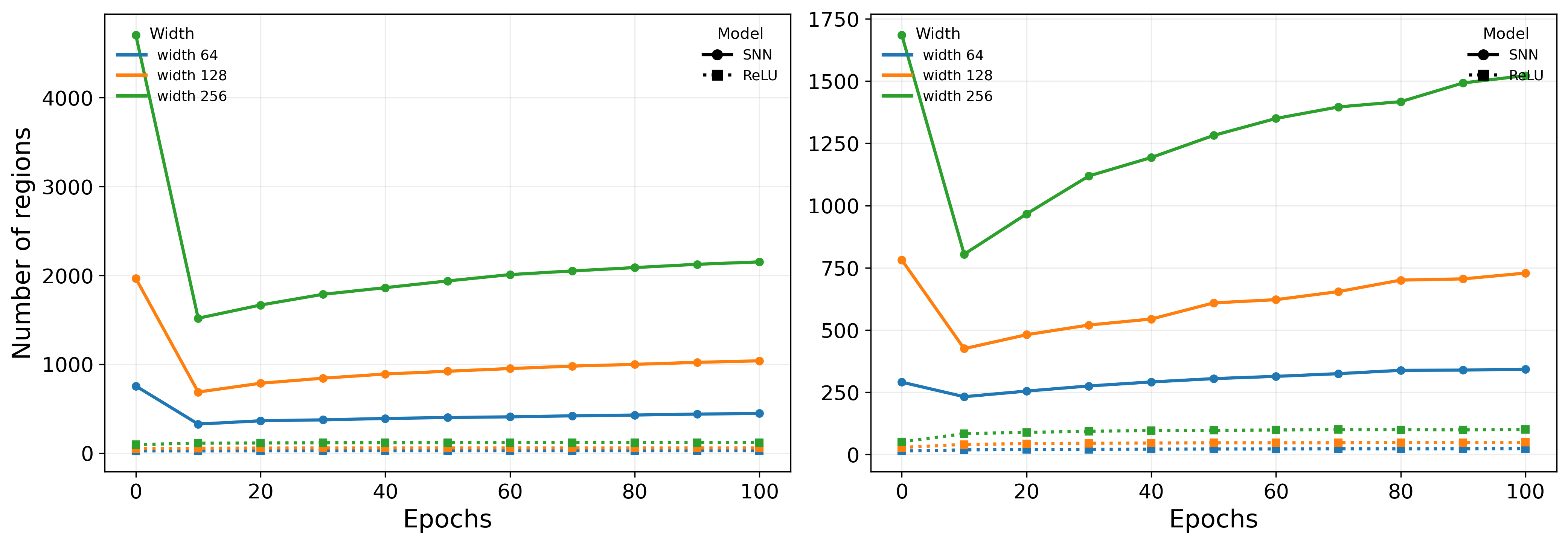}
    \caption{Region complexity during training. Estimated numbers of causal regions for depth one SNN and ReLU networks on MNIST (left) and CIFAR-10 (right) for widths $64$, $128$, and $256$. Region complexity drops sharply during the early stages of training relative to initialization and then partially recovers as training proceeds. Across widths and throughout training, the SNN realizes substantially more regions than the corresponding ReLU network. Solid and dotted curves denote SNN and ReLU networks, respectively.}
    \label{fig:snnrelutrainingregions}
\end{figure}

\begin{figure}[t]
    \centering
    \includegraphics[width=0.95\textwidth]{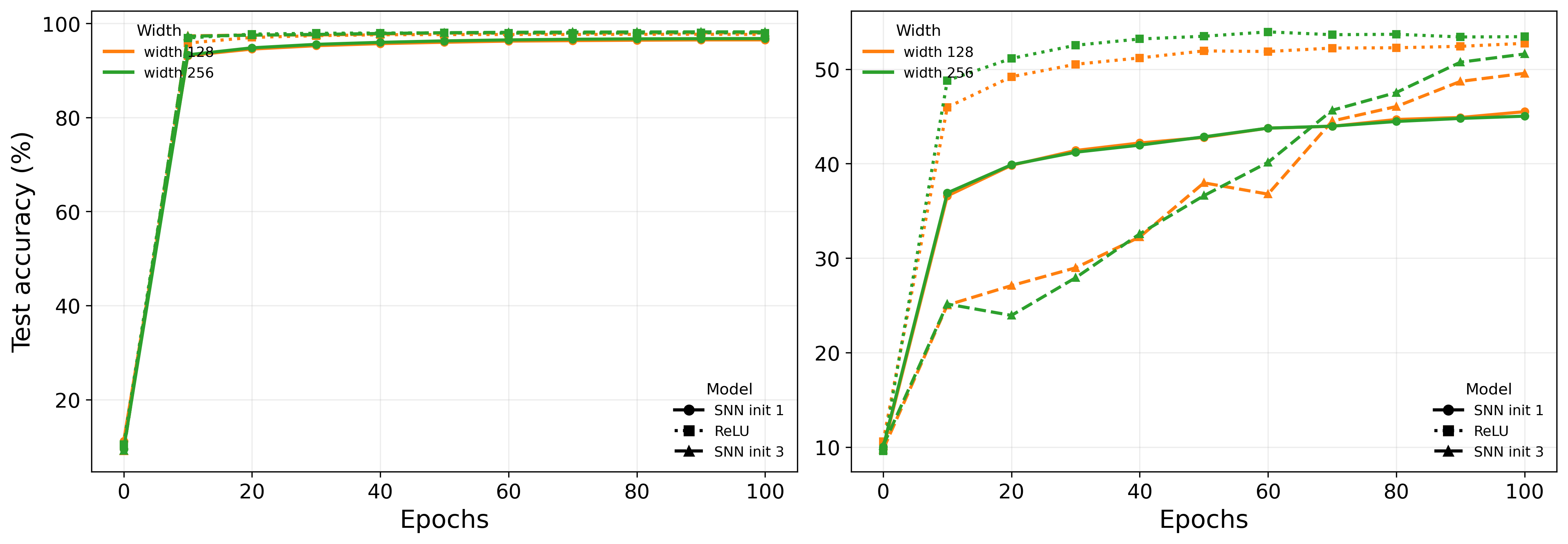}
    \caption{Classification accuracy during training. Test accuracy for depth one SNN and ReLU networks on MNIST (left) and CIFAR-10 (right) for widths $128$ and $256$. Both SNN and ReLU achieve comparable performance on MNIST. On CIFAR-10, the SNN with initialization Init~1 attains lower accuracy than ReLU, while the SNN with initialization Init~3 substantially improves performance and approaches the ReLU accuracy. Solid, dotted, and dashed curves denote SNN Init~1, ReLU, and SNN Init~3, respectively.}
    \label{fig:snnrelutestaccuracy}
\end{figure}

\paragraph{Number of regions during training} 
We train depth one SNN and ReLU networks for $100$ epochs and evaluate region complexity every $10$ epochs. 
We consider hidden widths $64$, $128$, and $256$. 
For each width, the SNN and ReLU have matched encoder, hidden-layer, and decoder dimensions. With delays set to zero, the two models also have same number of parameters. 
For the generic SNN--ReLU comparison, the batch size is $128$, the gradient norm is clipped at $5$, and the learning rate is $10^{-3}$ on MNIST and $10^{-4}$ on CIFAR-10. 
We restrict the training experiment to depth one, since training deeper TTFS networks require additional inter-layer scaling and normalization mechanisms that we do not address here. SNN weights and thresholds are trained while constrained to remain positive, and optimization uses AdamW.

As shown in Figure~\ref{fig:snnrelutrainingregions}, SNN region complexity
drops sharply during early training and subsequently partially recovers, while remaining substantially larger than that of the matched ReLU network. 
This behavior is qualitatively similar to observations reported for ANN \cite{hanin2019complexitylinearregions, hanin2019surprisinglyfewregion, tseran2021maxoutexpectedcomplexity}. 
The specific mechanism leading to this dynamics is still unknown, although some advances are being pursued in \cite{DNNs_grok_humayun_2024,local_complexity_regions_DNNs_patel_2024}.
We further observe that the estimated number of regions is lower for CIFAR-10 data than for MNIST data. The same hidden widths and corresponding SNN-ReLU architectures are used for both MNIST and CIFAR-10; only the input dimension changes with the dataset. A controlled experiment evaluating the number of regions at initialization while varying the input dimension would be an interesting direction for future work.

In Figure~\ref{fig:snnrelutestaccuracy}, we report test accuracy for the SNN (Init~1), ReLU, and the SNN (Init~3). 
On MNIST, all three models obtain comparable classification accuracy. 
Despite their different region counts, the models achieve comparable accuracy on MNIST. 
On CIFAR-10, the SNN with Init~1 performs below ReLU, whereas Init~3 substantially improves SNN performance and approaches the ReLU accuracy. 
These results highlight that the initialization strategy used for training SNNs can have a significant impact on the test performance. We made similar observations for the training error as well, thus the effect appears to be related to how initialization influences the hardness of optimization. 
Whereas initialization strategies have been explored intensively for ANNs, we still are missing a comparative systematic analysis for SNNs, which we identify as a promising direction for future work.
We also note that the observed performance difference cannot be attributed specifically to the positivity constraint. Both SNN initializations used in this experiment employ positive weights, and we do not train an arbitrary weight SNN. A systematic study separating the effects of initialization, positivity constraints, and optimization is left for future work.

\paragraph{Exact enumeration on two-dimensional affine slices} 
We complement the trajectory-based estimates with exact enumeration on bounded two-dimensional affine slices of the input space. 
Given three samples $x_1, x_2, x_3$, we construct an orthonormal basis $e_1, e_2$ for the span of $x_2 - x_1$ and $x_3-x_1$ and parameterize the slice as $x(u,v) = x_1 + ue_1 + ve_2$. We restrict $(u,v)$ to the bounded rectangular domain containing the selected samples. 

Over the affine slice, linear pieces correspond to linear pieces in $z = (u,v)$ and linear regions are represented as a polygons
 \begin{equation}
    R = \{z \in \R^2 \colon Az \leq b\}.
\end{equation}
We retain only full-dimensional polygons. 
For each candidate region, we compute its center by solving the following linear program
\begin{equation}
\begin{aligned}
    \max_{z,r} \quad & r,\\
    \rm{s.t.}\quad&
    A_i z + \|A_i\|_2 r \leq b_i, \quad \text{for } i = 1, \ldots, m,\\
    &r \geq 0 .
\end{aligned}
\end{equation}
A region is retained only if $r > 10^{-9}$. This excludes intersections that are only line segments or points and this simultaneously provides an interior point for subsequent subdivision.

For a ReLU neuron, its preactivation is affine inside every activation region. We therefore intersect each polygon with the two half-spaces corresponding to positive and negative preactivation and retain each full-dimensional subregion. 

For an SNN neuron $j$, let $t_i(z)$ denote its presynaptic spike times inside the current parent region. For a causal set $S_j$, accumulation of the spikes in $S_j$ gives the candidate firing time given in \eqref{eq:expSjcausalset}, which is affine in $z$. 
The causal set is valid exactly where
\begin{equation}
    t_i(z) \leq t^{S_j}(z), \quad i\in S_j \text{ and } t^{S_j}(z) \leq t_i(z), \quad i\notin S_j,
\end{equation}
which again gives linear inequalities in $(u,v)$. 
Importantly, regions are indexed by causal sets and not by the complete ordering of the presynaptic spike times. 
We do not test all $2^d$ possible causal sets. Starting from the interior point of a parent polygon, we use the sorted-prefix forward pass described above to obtain one feasible $S_j$, and then traverse neighboring feasible causal regions across their facets. After crossing a candidate facet, the forward pass is reevaluated; a new region is retained only if the causal set changes and its complete inequality system has positive two-dimensional interior. The resulting regions carry affine spike-time representations, allowing the same procedure to be applied recursively across neurons and layers.

All linear programs are solved with
\texttt{scipy.optimize.linprog} using the HiGHS backend. The ReLU construction follows the general polyhedral
enumeration principle of~\cite{tseran2021maxoutexpectedcomplexity}, while the
SNN case uses the causal set subdivision and adjacency traversal described above.

\end{document}